\definecolor{citecolor}{HTML}{2b51ab}
\theoremstyle{plain}
\newtheorem{theorem}{Theorem}[section]
\newtheorem{informaltheorem}[theorem]{Informal Theorem}
\newtheorem{lemma}[theorem]{Lemma}
\theoremstyle{definition}
\newtheorem{assumption}[theorem]{Assumption}
\newtheorem{informalassumption}[theorem]{Informal Assumption}
\theoremstyle{remark}
\newcommand{\method}{\textsc{AMF}\xspace}
\def\eqref#1{equation~\ref{#1}}
\def\1{\bm{1}}
\DeclareMathAlphabet{\mathsfit}{\encodingdefault}{\sfdefault}{m}{sl}
\SetMathAlphabet{\mathsfit}{bold}{\encodingdefault}{\sfdefault}{bx}{n}
\newcommand{\E}{\mathbb{E}}
\newcommand{\R}{\mathbb{R}}
\newcommand{\Var}{\mathrm{Var}}
\DeclareMathOperator*{\argmax}{arg\,max}
\DeclareMathOperator*{\argmin}{arg\,min}
\definecolor{ourdarkblue}{rgb}{0.294,0.404,0.564}
\definecolor{ourblue}{rgb}{0.368,0.507,0.71}
\definecolor{ourlightblue}{rgb}{0.494,0.604,0.765}
\definecolor{ourgreen}{rgb}{0.56,0.692,0.195}
\definecolor{ourred}{rgb}{0.923,0.386,0.209}
\definecolor{ourred2}{rgb}{0.658,0.196,0.047}
\definecolor{ouryellow}{rgb}{0.881, 0.611, 0.142}
\definecolor{ourgrey}{rgb}{0.486, 0.486, 0.486}
\definecolor{ourdarkgrey}{rgb}{0.33, 0.33, 0.33}
\definecolor{ourwhite}{rgb}{1.0, 1.0, 1.0}
\definecolor{ourpurple}{HTML}{6f32a8}
\newcommand{\entropy}{\mathcal H}
\newcommand{\MI}{\mathcal I}
\newcommand{\red}[1]{#1}
\icmltitlerunning{Active Fine-Tuning of Multi-Task Policies}
\begin{document}

\twocolumn[
\icmltitle{\red{Active Fine-Tuning of Multi-Task Policies}}

\icmlsetsymbol{equal}{*}
\begin{icmlauthorlist}
\icmlauthor{Marco Bagatella}{eth,mpi}
\icmlauthor{Jonas H\"ubotter}{eth}
\icmlauthor{Georg Martius}{mpi,uni}
\icmlauthor{Andreas Krause}{eth}
\end{icmlauthorlist}

\icmlaffiliation{eth}{Department of Computer Science, ETH Z\"urich, Z\"urich, Switzerland}
\icmlaffiliation{mpi}{Max Planck Institute for Intelligent Systems, T\"ubingen, Germany}
\icmlaffiliation{uni}{University of T\"ubingen, T\"ubingen, Germany}

\icmlcorrespondingauthor{Marco Bagatella}{mbagatella@ethz.ch}

\icmlkeywords{Imitation Learning, Multi-task Learning, Active Learning, Deep Reinforcement Learning}

\vskip 0.3in
]

\printAffiliationsAndNotice{}  %

\begin{abstract}
Pre-trained generalist policies are rapidly gaining relevance in robot learning due to their promise of fast adaptation to novel, in-domain tasks.
This adaptation often relies on collecting new demonstrations for a specific task of interest and applying imitation learning algorithms, such as behavioral cloning.
However, as soon as several tasks need to be learned, we must decide \emph{which tasks should be demonstrated and how often?}
We study this multi-task problem and explore an interactive framework in which the agent {\em adaptively} selects the tasks to be demonstrated.
We propose~\method (Active Multi-task Fine-tuning), an algorithm to maximize multi-task policy performance under a limited demonstration budget by collecting demonstrations yielding the largest information gain on the expert policy.
We derive performance guarantees for \method under regularity assumptions and demonstrate its empirical effectiveness to efficiently fine-tune neural policies in complex and high-dimensional environments.\looseness=-1
\end{abstract}

\section{Introduction}
  
  The availability of large pre-trained models has transformed entire areas of machine learning, from computer vision~\citep{krizhevsky2012imagenet, he2016deep, dosovitskiy2020image, radford2021learning}, to natural language processing~\citep{radford2019language, brown2020language} and generative modeling in general \citep{ho2020denoising, esser2024scaling}.
  This paradigm has started to extend to robotics and control~\citep{open_x_embodiment_rt_x_2023, ma2024survey}, in particular for systems for which demonstrations are readily available~\citep{octo_2023}, or can be easily collected~\citep{zhao2023learning}.
  Even when demonstrations are not easily obtained, scaling laws in reinforcement learning~\citep{obando2024mixtures, ceron2024value, nauman2024bigger} suggest the possibility of leveraging large pre-trained policies.
  These ``generalist'' policies have decent performance on many tasks, and can be fine-tuned on particular set of tasks while leveraging their previously learned representations and skills.
  We investigate whether representations of \red{pre-trained policies} can be used to significantly bootstrap learning progress.\looseness=-1
  
  \begin{figure}[t]
    \centering
    \includegraphics[width=0.46\textwidth]{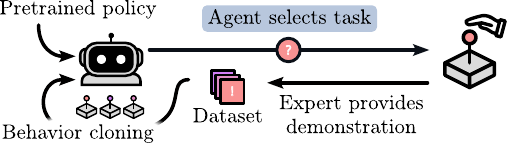}
    \vspace{-1mm}
    \caption{Interactive loop between agent and expert. We consider a scenario where we receive a pre-trained policy, and are able to obtain expert demonstrations of tasks. We study how to select tasks (in blue) to obtain the best-performing policy after as few demonstrations as possible.\looseness=-1}
    \vspace{-6mm}
  \end{figure}
  
  As a motivating example, consider a household robot that is delivered with a pre-trained ``generalist'' policy, and deployed in different conditions than those observed in its training data.
  While the robot may achieve some tasks in a zero-shot fashion (e.g.\ simple pick-and-place), other tasks might necessitate further fine-tuning (e.g.\ cooking an omelette).
  The robot should be able to interactively request demonstrations to compensate for its shortcomings.
  We seek to answer which demonstrations should be requested from the user to achieve the best performance, as quickly as possible.\looseness=-1
  
  If the agent only needs to perform well in a single task, the fine-tuning process conventionally relies on behavioral cloning \citep{chen2021decision, reed2022generalist, bousmalis2024robocat} of expert demonstrations.
  As collecting demonstrations is in general costly, the number of demonstrations required, and thus the expert's effort, should be minimized.
  However, as each demonstration should solve the same task, the allocation of the expert's effort is straightforward.
  The multi-task case presents the more nuanced problem of selecting which tasks to demonstrate, and when.
  This motivates the main focus of this work: 
  \emph{provided a pre-trained policy, how can we maximize multi-task performance with a minimal number of additional demonstrations?}
  
  To address this problem, we propose AMF~(\emph{\textbf{A}ctive \textbf{M}ulti-task \textbf{F}ine-tuning}), which selects maximally informative demonstrations.
  AMF parallels recent work on active supervised fine-tuning of neural networks~\citep{hubotter2024information}.
  To this end, AMF relies on estimates of the demonstrations' information gain on the expert policy.
  We prove that in sufficiently regular Markov decision processes, AMF converges to the expert policy.
  We then focus on practical scenarios where policies are represented as neural networks.
  We show that, despite additional challenges, AMF can effectively guide active task selection in such settings, leading to better policies after fewer demonstrations.
  Our contributions are:
  \begin{itemize}[noitemsep,topsep=0pt]
      \item We propose AMF, an algorithm for multi-task policy fine-tuning that maximizes the information gain of demonstrations about the expert policy.
      \item We prove statistical guarantees for AMF, which extend the results of \citet{hubotter2024information} to dynamical systems.
      \item We empirically scale AMF to high-dimensional tasks involving pre-trained neural policies.
      \red{\item We additionally propose a practical approach to alleviate catastrophic forgetting in pre-trained neural policies, which benefits arbitrary data selection strategies.}
  \end{itemize}
  
  \section{Related Work}
  
  Learning-based control and active data selection are both well-established research directions.
  This section discusses some topical works in either direction, and clarifies the novelty and placement of this work with respect to them.
  
  \textbf{Behavioral Cloning}\;
  Numerous imitation learning approaches have been developed with the goal of distilling knowledge from high-quality demonstrations to a control policy \citep{osa2018algorithmic}.
  Within this family of techniques, behavioral cloning \citep[BC,][]{bain1995framework, ross2010efficient} aims to maximize policy performance by minimizing the distance of its actions to demonstrated actions, simply through supervised learning.
  While BC may suffer from accumulating errors~\citep{ross2011reduction}, its empirical effectiveness has seen increasing support when high-quality demonstrations are readily available \citep{kumar2022should}.
  Next to recent empirical successes \citep{chi2023diffusionpolicy}, formal analysis has also advanced \citep{spencer2021feedback, block2023butterfly, belkhale2024data, foster2024behavior}, and established provable performance guarantees for BC policies \citep{xu2020error, maran2023tight, block2024provable}.
  
  \textbf{Multi-task and Generalist Policies}\;
  Traditionally, behavioral cloning has mostly been deployed in a single-task setting.
  Multi-task learning in sequential decision-making has largely been investigated in the context of reinforcement learning \citep{teh2017distral, sodhani2021multi, yu2021conservative, sun2022paco, cho2022multi, hendawy2023multi}.
  Moreover, the recent rise of multi-task generative models \citep{brown2020language} has been mirrored by exploration of multi-task, or \textit{generalist} policies, often trained via imitation learning \citep{reed2022generalist, bousmalis2024robocat, open_x_embodiment_rt_x_2023}.
  These recent works mostly build upon algorithms developed for the single-task case, and simply integrate task-conditioning as part of the state.
  While several works hand-select parts of large, open-source robotics datasets for pre-training \citep{octo_2023}, active data selection for multi-task fine-tuning has not been addressed. %
  Prior work on meta-learning has studied how one can explicitly meta-learn the ability to adapt to task demonstrations~\citep{finn2017model}.
  We find this capability to emerge even from models that are not explicitly trained in this way, and focus on which demonstrations to obtain.
  
  \textbf{Data Selection}\;
  \looseness -1 The idea of directing a sampling process to gather information has been central to machine learning research and studied extensively in experimental design~\citep{chaloner1995bayesian}, active learning~\citep{settles2009active} and reinforcement learning \citep{mehta2022experimental}.
  Most work on active data selection summarizes data without focusing on a particular task~\citep[e.g.,][]{sener2017active,Ash2020Deep,holzmuller2023framework, lightman2023let}, which has been predominantly applied to pre-training.
  Recently, adapting models after pre-training and during deployment has gained interest.
  Several works, mostly in computer vision, focus on unsupervised fine-tuning on a test instance~\citep{jain2011online,krause2018dynamic,sun2020test,wang2021tent, chen2022contrastive}.
  We focus instead on \emph{supervised fine-tuning} of learning-based controllers \emph{in dynamical systems}.
  Our approach extends work on task-directed data selection~\citep{kothawade2021similar,wang2021beyond,kothawade2022prism,smith2023prediction}, which has recently been applied to the supervised fine-tuning of large-scale neural networks in vision~\citep{hubotter2024information} and language~\citep{rotman2022multi, xia2024less,hubotter2024efficiently}.
  
  \section{Background}
  \label{sec:background}
  \subsection{Multi-Task Reinforcement Learning}
  
  The multi-task setting can be modeled by casting the environment as a contextual Markov decision process (MDP) ${\mathcal{M}=(\mathcal{S}, \mathcal{A}, \mathcal{C}, P, R, \gamma, \mu_0)}$ where $\mathcal{S} \in \R^{N_{\mathcal{S}}}$ and $\mathcal{A} \in \R^{N_{\mathcal{A}}}$ are possibly continuous state and action spaces.
  $\mathcal{C}$ is a (potentially infinite) set of tasks, with each task represented by an $N_{\mathcal{C}}$-dimensional vector $c \in \R^{N_{\mathcal{C}}}$. $P: \mathcal{S} \times \mathcal{A} \to \Delta(\mathcal{S})$ models the transition probabilities ($\Delta(\mathcal{S})$ represents the set of probability distributions over $\mathcal{S}$), $R: \mathcal{S} \times \mathcal{C} \to \R$ is a scalar reward function, $\gamma \in (0, 1)$ is a discount factor and $\mu_0 \in \Delta(\mathcal{S})$ is the initial state distribution.
  In this setting, a policy is simply a state-and-task-conditional action distribution $\boldsymbol{\pi}: \mathcal{S} \times \mathcal{C} \to \Delta(\mathcal{A})$\footnote{We use $\boldsymbol{\pi}$ and $\pi$ to denote stochastic and deterministic policies, respectively, and $\pi(s,c)$ for realizations.}.
  Any given policy induces a task-conditional distribution over trajectories:\looseness=-1 
  \begin{align*}
  \boldsymbol{\tau}_{\boldsymbol{\pi}} & \Big( \big( s_0, a_0, s_1, a_1, \dots \big) \mid c \Big) = \\
  & \mu_0(s_0) \prod_{t=0}^{\infty} \boldsymbol{\pi}(a_t \mid s_t, c) \cdot P(s_{t+1} \mid s_t, a_t).
  \end{align*}
  The discounted returns for a specific task $c \in \mathcal{C}$ or a task distribution $\mu_c \in \Delta(\mathcal{C})$ are, respectively,
  \[
  J^{\boldsymbol{\pi}}_c = \mathop{\E}_{(s_0, \dots) \sim \boldsymbol{\tau}_{\boldsymbol{\pi}}(c)} \sum_{t=0}^\infty \gamma^{t} R(s_t, c) \quad \text{and} \quad J^{\boldsymbol{\pi}}_{\mu_c} = \mathop{\E}_{c \sim \mu_c} J^\pi_c.
  \]
  Reinforcement learning algorithms traditionally aim directly at maximizing $J_{\mu_c}^{\boldsymbol{\pi}}$, which is notoriously challenging.
  In the scope of this work, we instead consider an imitation learning setting, in which expert demonstrations from an optimal policy $\pi^\star$ are provided.
  In particular, we focus on behavioral cloning algorithms, which reduce control to a supervised learning problem.
  Given a set of $N$ task-conditioned, $H$-length trajectories $\hat \tau_{1:N} = (s_0^i, a_0^i, \dots, s_{H-1}^i, a_{H-1}^i)_{i=1}^N$ with task labels $c_{1:N}$, behavioral cloning proposes a proxy objective for the policy $
  \boldsymbol{\pi}$: an empirical estimate of the log-likelihood under the data distribution:
  $
      J^{\boldsymbol{\pi}}_{\text{proxy}} =
      \frac{1}{N} \sum_{i=1}^{N} \sum_{t=0}^{H-1}
      \log \boldsymbol{\pi}(a_t^i \mid s_t^i, c_i).
  $
  If trajectories $\tau_{1:N}$ are obtained from the optimal policy, cover the support of the desired task distribution $\mu_c$, and the searched policy class is sufficiently rich, the maximizer of $J^{\boldsymbol{\pi}}_{\text{proxy}}$ will also maximize $J^{\boldsymbol{\pi}}_{\mu_c}$ as $N$ and $H$ increase.
  However, in general, there is a clear mismatch between $J^{\boldsymbol{\pi}}_{\text{proxy}}$ and $J^{\boldsymbol{\pi}}_{\mu_c}$ \citep{xu2020error, maran2023tight}.
  Nonetheless, the optimization of $J^{\boldsymbol{\pi}}_{\text{proxy}}$ is a relatively straightforward supervised learning problem, while the full RL problem raises several convergence issues, particularly in the offline setting \citep{levine2020offline}.
  Thus, we use $J^{\boldsymbol{\pi}}_{\mu_c}$ only for evaluation, and carry out optimization through the proxy objective.
  
  \subsection{Active Policy Fine-Tuning}
  
  In this work, we consider an active fine-tuning scheme for multi-task policies.
  The goal is to fine-tune a pre-trained policy to perform well on a desired task distribution $\mu_c$ using as few expert demonstrations as possible.
  The agent is allowed $N$ sequential queries for demonstrations according to the fine-tuning budget.
  The $n$-th query should consist of a task $c_n \in \mathcal{C}$.
  Once the agent selects a task, feedback is received from the optimal policy $\pi^\star: \mathcal{S} \times \mathcal{C} \to \mathcal{A}$ (i.e., an optimal demonstrator).
  At each round the agent receives an $H$-step demonstration conditioned on the chosen task $c_n$.
  This can be seen as a single measurement from a stochastic process over trajectories $\boldsymbol{\tau} :\mathcal{C} \to \Delta((\mathcal{S} \times \mathcal{A})^H)$.
  Each observed trajectory up to round $n$ is stored in a dataset $(c_{1:n}, \hat \tau_{1:n})$, which can be used to fine-tune the policy, and condition the agent's query at step $n+1$.
  The process is repeated for $N$ rounds, with the goal of producing a fine-tuned policy that maximizes the expected returns for the desired task distribution $\mu_c$.
  \red{We note that the fine-tuning process does not assume access to the pre-training data distribution. This setting is both realistic, as large pre-training datasets are rarely publicly available, and challenging, as it prohibits any naive data rebalancing strategy.}
  
  \textbf{Modeling assumptions} \; We take a Bayesian perspective on active multi-task fine-tuning, by assuming a Bayesian model $\boldsymbol{\pi}$ over policies.
  We assume that demonstrations follow a noisy expert: $\boldsymbol{\tilde{\pi}}(s,c) = \pi^\star(s,c) + \boldsymbol{\epsilon}(s,c)$ where $\boldsymbol{\epsilon}(s,c)$ is independent noise.
  We remark, however, that AMF can also be understood from a non-Bayesian perspective as selecting tasks that most quickly minimize the size of frequentist confidence sets around the optimal policy.

  \section{Method}
  
  \looseness -1 The active multi-task fine-tuning problem outlined so far requires active data selection for sample-efficient learning.
  We thus build on top of principled active learning approaches for non-sequential domains \citep{hubotter2024information}, and propose AMF, which selects queries that maximize the expected information gain about the expert policy over its occupancy:
  \begin{align}
      & c_n = \argmax_{c' \in \mathcal{C}} \mathop{\E} \sum_{t=0}^{H-1} \MI(\boldsymbol{\pi}(s_t, c); \boldsymbol{\tau}(c')  \mid  c_{1:n-1}, \tau_{1:n-1}), \nonumber \\
      & \text{with } c \sim \mu_c, \; (s_0, \dots) \sim \boldsymbol{\tau}(c), \; \tau_{1:n-1} \sim \boldsymbol{\tau}(c_{1:n-1}).
  \label{eq:criterion}
  \end{align}
  We show in Section \ref{sec:analysis} that, under certain regularity assumptions, the policy learned by AMF converges to the expert policy and matches its performance.
  These results constitute a first-of-its-kind performance guarantee for active multi-task fine-tuning.
  The main novelties of this guarantee are the extension of prior work to sequential domains where the visited trajectory $(s_0, a_0, s_1, \dots) \sim \boldsymbol{\tau}(c_n)$ is \emph{unknown} when selecting the task $c_n$ for a demonstration\red{, and the connection to imitator performance}.
  In Section \ref{sec:practical}, we discuss the design choices that make AMF amenable to optimization in practical settings.\looseness=-1
  
  \subsection{Performance Guarantees}
  \label{sec:analysis}
  
  We begin by presenting the performance guarantees for AMF.
  Our proof builds upon rates for uncertainty reduction, then ties these to probabilistic convergence guarantees to $\pi^\star$, finally resulting in performance guarantees within the MDP.
  We summarize the main result here, and include a formal
  proof in Appendix \ref{app:proof}.
  
  \begin{informalassumption}\label{inf_assumptions}
  We make these assumptions: 
  \begin{enumerate}[noitemsep,topsep=0pt]
  \item The expert policy $\pi^\star$ is deterministic, Lipschitz-smooth, lies in the reproducing kernel Hilbert space $\mathcal{H}_k(\mathcal{S}\times \mathcal{C})$ of 
  the kernel $k$ with norm $\|\pi^\star\|_k < \infty$ 
  and induces a Lipschitz-smooth Q-function.
  \item The noise $\boldsymbol{\epsilon}(s,c)$ affecting demonstrations is conditionally $\rho$-sub-Gaussian and bounded.
  \item The dynamics of the contextual MDP $\mathcal{M}$ are Lipschitz-smooth with bounded support, the initial state distribution $\mu_0$ has bounded support, and the reward is Lipschitz-smooth. %
  \end{enumerate}
  \end{informalassumption}
  
  Under these assumptions, we prove the following performance guarantee for active multi-task behavioral cloning.
  
  \begin{informaltheorem}[Performance guarantees for active multi-task BC]
  Let all regularity assumptions hold. If each demonstrated task of length $H$ is selected according to the criterion in Equation \ref{eq:criterion}, then with probability $1 - \delta$ the performance difference between the expert policy $\pi^\star$ and the imitator policy $\pi_n$ after $n$ demonstrations can be upper bounded:
      \[
      J^{\pi^\star}_{\mu_c} - J^{\pi_n}_{\mu_c} \leq O(\gamma_{(Hn)}) / \sqrt{n} ,
      \]
  where $\pi_n$ is the mean of $\boldsymbol{\pi}$ at round $n$ and $\gamma_{(Hn)}$ is the maximum information gain about the expert policy from $Hn$ samples, and is sublinear for a large class of problems. The $O(\cdot)$ notation suppresses all multiplicative terms that do not depend on $n$.
  \label{th:performance_guarantees}
  \end{informaltheorem}
  \looseness -1 Intuitively, this theorem proves that the imitator will eventually achieve the demonstrator's performance in smooth, regular MDPs with sublinear $\gamma_{(Hn)}$ (for a formal definition, we refer to Lemma \ref{lemma:variance_reduction} in the Appendix).
  We can also prove a more general result under weaker assumptions: as long as the policy is regular, the imitator will reach the \textit{noisy} expert performance in arbitrary, non-smooth MDPs, albeit only in expectation.
  A full derivation of this further result can be found in Appendix \ref{app:proof_expectation}.
  
  \subsection{Practical Algorithms}
  \label{sec:practical}
  
  Theorem \ref{th:performance_guarantees} guarantees that, under regularity assumptions, the adaptive demonstration sampling scheme leads to convergence of the imitator's performance to the optimal one.
  However, this criterion involves state occupancies and a conditional entropy term, which are hard to access or estimate in practice.
  Thus, here we derive a practical objective to be deployed in general settings.
  We first rephrase the objective from Equation \ref{eq:criterion} in its entropy form:
  \begin{align}
      c_n 
      = & \argmin_{c' \in \mathcal{C}} \mathop{\E} \sum_{t=0}^{H-1} \entropy(\boldsymbol{\pi}(s_t, c) \mid  c', \tau' , c_{1:n-1}, \tau_{1:n-1}), \nonumber \\
      \text{with } & c \sim \mu_c, (s_0, \dots) \sim \boldsymbol{\tau}(c), \tau' \sim \boldsymbol{\tau}(c'), \nonumber \\
      & \tau_{1:n-1} \sim \boldsymbol{\tau}(c_{1:n-1}).
  \label{eq:criterion_rephrased}
  \end{align}
  where we use the definition of mutual information $\MI(\cdot|\boldsymbol{\tau}(c')) = \entropy(\cdot) - \entropy(\cdot|\boldsymbol{\tau}(c'))$, drop the first entropy term as it does not depend on $c'$, and rewrite the second entropy term as an expectation over $\boldsymbol{\tau}(c')$.
  As long as the task space $\mathcal{C}$ is finite and its cardinality is tractable, the $\argmin$ operator can be evaluated exhaustively, and the expectation over the task distribution $\mu_c$ can be computed exactly.
  When this is not the case, the $\argmin$ can be optimized through discretization, or with sampling-based optimizers. The expectation over $\mu_c$ is also not particularly problematic, as it can be computed in closed form (if $\mathcal{C}$ is discrete) or estimated empirically through sampling, as $\mu_c$ is assumed to be known.
  However, two issues need to be resolved: (i) computing the expectation over the noisy expert's trajectory distribution $\boldsymbol{\tau}$, and (ii) estimating the conditional entropy term $\entropy(\cdot \mid \cdot)$.
  
  \paragraph{Occupancy estimation}
  \looseness=-1
  Computing the expectation over a policy's occupancy over states or trajectories is in general intractable in continuous state spaces.
  Fortunately, a coarse empirical estimate can be obtained as soon as few expert demonstrations become available.
  The expectation $\mathop{\E}_{\tau_{1:n-1} \sim \boldsymbol{\tau}(c_{1:n-1})}(\cdot)$ can be estimated through a single sample, which is always available in the form of the trajectories $\hat \tau_{1:n-1}$ collected so far, as they have effectively been sampled from $\boldsymbol{\tau}(c_{1:n-1})$.
  However, the remaining two expectations (i.e., $\mathop{\E}_{\tau' \sim \boldsymbol{\tau}(c')}(\cdot)$ and $\mathop{\E}_{(s_0, \dots) \sim \boldsymbol{\tau}(c)}(\cdot)$) involve the distribution over trajectories for an \textit{arbitrary} task, which might not have been demonstrated yet.
  We observe that, at round $n$, the tasks demonstrated so far induce the empirical distribution ${\hat \mu_c (\cdot) = \frac{1}{n-1}\sum_{i=1}^{n-1} \delta_{c_i} (\cdot)}$, while the trajectories collected similarly induce ${\boldsymbol{\hat \tau} (\cdot) = \frac{1}{n-1}\sum_{i=1}^{n-1} \delta_{\hat \tau_i}(\cdot)}$, where $\delta$ indicates the Dirac delta distribution.
  We can show that expectations over the trajectory distribution for an arbitrary task $c \in \mathcal{C}$ can be estimated through importance sampling (i.e., by sampling trajectories from $\boldsymbol{\hat \tau}(\cdot)$ instead of $\boldsymbol{\tau}(\cdot|c)$): ${\mathop{\E}_{\tau \sim \boldsymbol{\tau}(\cdot|c)} f(\tau) = \mathop{\E}_{\tau \sim \boldsymbol{\hat \tau} (\cdot)} \frac{\boldsymbol{\tau}(\tau|c)}{\boldsymbol{\hat \tau}(\tau)}f(\tau)}$.
  The importance weights can then be estimated as
  \begin{align}
      \frac{\boldsymbol{\tau}(\tau|c)}{\boldsymbol{\hat \tau}(\tau)}
      & \approx \frac{\boldsymbol{\tau}(\tau|c)}{\int_{c' \in \mathcal{C}}\hat \mu_c(c') \boldsymbol{\tau}(\tau|c')}
      = \frac{\boldsymbol{\tau}(\tau|c)}{\frac{1}{n-1} \sum_{i=1}^{n-1} \boldsymbol{\tau}(\tau|c_i)} \nonumber \\
      & = \frac{(n-1) \mu_0(s_0) \prod_{t=0}^{H-1} \boldsymbol{\tilde \pi}(a_t|s_t, c) P(s_{t+1}|s_t, a_t)}{\sum_{i=1}^{n-1} \mu_0(s_0) \prod_{t=0}^{H-1} \boldsymbol{\tilde \pi}(a_t|s_t, c_i) P(s_{t+1}|s_t, a_t)} \nonumber \\
      & = \frac{(n-1)\prod_{t=0}^{H-1} \boldsymbol{\tilde \pi}(a_t|s_t, c)}{\sum_{i=1}^{n-1} \prod_{t=0}^{H-1} \boldsymbol{\tilde \pi}(a_t|s_t, c_i)}
      := w(\tau, c)
  \label{eq:importance_weights}
  \end{align}
  where $\tau = (s_0, a_0, \dots)$ and $\boldsymbol{\tilde \pi}$ can be approximated with the current estimate of $\boldsymbol{\pi}$.
  Intuitively, the likelihood ratio of a trajectory under two different tasks only depends on the likelihood of actions under the policy, and thus does not require knowledge of the MDP.
  As the estimate may be inaccurate for small numbers of samples, in practice the algorithm can invest the first few rounds to query a single demonstration for each of the tasks (in case they are countable and few) or to sample the task space uniformly.
  On the other hand, the high-variance of the estimate can be controlled by practical solutions such as clipping.
  We present the resulting empirical estimate for Equation \ref{eq:criterion_rephrased} in full in Appendix \ref{app:full_criterion}, \red{and a qualitative analysis of importance weights in Appendix \ref{app:is}.}
  
  \paragraph{Entropy estimation}
  The estimation of conditional entropy terms such as $\entropy(\cdot \mid \cdot)$ has been widely researched in the literature.
  When the policy is represented through a Gaussian process $GP(\mu, k)$ \citep{williams2006gaussian} with known mean function $\mu$ and kernel $k$,\!\footnote{For simplicity, we consider a single-output GP, but generalize to multi-dimensional policies with multi-output GPs in both experiments and formal proofs.} the entropy can be directly quantified by the predicted variance.
  Let us denote a state-task tuple as $x=(s,c)$, and let $X$ be the sample vector obtained from concatenating states and tasks from previous trajectories (e.g., $c_{1:n-1}, \tau_{1:n-1}, c', \tau'$).
  The \textit{unconditional} entropy can be measured in closed form as
  ${\entropy(\boldsymbol{\pi}(x)) = \frac{1}{2} \log (2 \pi k(x, x)) + \frac{1}{2}}$, and
  the \textit{conditional} entropy can be obtained by simply replacing the kernel $k$ with
  $
  \hat k_{X}(x , x) = k(x, x) - k(x, X) [k(X,X) + \sigma_\epsilon^2 I]^{-1}  k(X, x),
  \label{eq:gp_posterior}
  $
  where $\sigma_\epsilon^2$ is the variance of the observation noise $\boldsymbol{\epsilon}(s,c)$, assuming it is distributed according to a zero-mean Gaussian.
   
  \begin{algorithm}[H]
    \caption{AMF} \label{algo:practical}
    \begin{algorithmic}
      \STATE \textbf{Input:} initial policy $\pi_0$, budget $N$, desired task distr. $\mu_c$ %
      \STATE \textbf{Output:} fine-tuned policy $\pi_N$
      \STATE Initialize dataset $\mathcal{D}_0=\emptyset$ %
      \FOR{$n \in [0, \dots, N-1]$}
        \STATE Compute $c_n$ as the solution to Eq. \ref{eq:criterion_rephrased} %
        \STATE Collect new demonstration $\tau_n$ for task $c_n$
        \IF{$n+1 \; \% \; B = 0$}
          \STATE $\mathcal{D}_{n+1} = \mathcal{D}_{n+1-B} \cup \{c_{n-B+1:n}, \tau_{n-B+1:n}\}$
          \STATE Update $\pi_{n+1}$ from $\pi_{n+1-B}$ with $\mathcal{D}_{n+1}$
        \ENDIF
      \ENDFOR
    \end{algorithmic}
  \end{algorithm}
  Thus, when the policy can be modeled as a GP, the only approximation needed concerns occupancy estimation.
  We refer to this first, practical instantiation as AMF-GP, and present a general algorithmic framework in Algorithm \ref{algo:practical}.
  Application of the method to policies parameterized by neural networks will adopt the same scheme.
  It will also require additional care on \red{two} distinct topics: kernel approximtions and mitigation of catastrophic forgetting. %
  
  \paragraph{Kernel approximations}
  When the policy is parameterized through a neural network, estimation of the conditional entropy is far less straightforward.
  First, we cannot assume the availability of ad-hoc techniques for uncertainty estimation (e.g.,  Dropout \citep{srivastava2014dropout, gal2016dropout} or ensembles \citep{lakshminarayanan2017simple}), as they might not be featured in pre-trained models.
  Even if the pre-trained model was perturbed and ensembled for fine-tuning, the ensemble disagreement would not capture the pre-training data distribution.
  Second, access to pre-training data is in general unrealistic, or hard to manage due to size and ownership of large robotic datasets.
  
  Nevertheless, we can leverage the approximation of neural networks as a linear functions over an embedding space $\boldsymbol{\pi}(s,c; \theta) = \boldsymbol{\beta}^\top \phi_\theta(s,c)$, where both weights $\boldsymbol{\beta}$ and embeddings $\phi_\theta(\cdot)$ exist in a $p$-dimensional latent space \citep{lee2019wide, khan2019approximate}.
  This technique does not violate any of the practical constraints listed above, and allows us to adapt the machinery introduced in GP settings.
  While several embedding strategies exist \citep{jacot2018neural, devlin2019bert, holzmuller2023framework}, we adopt loss gradient embeddings \citep{Ash2020Deep}.
  Assuming the prior $\boldsymbol{\beta} \sim \mathcal{N}(0, I)$,
  the policy $\boldsymbol{\pi}(s,c; \theta)$ can be modeled by a Gaussian Process with kernel $k_\theta((s,c), (s',c')) = \langle \phi_\theta(s, c), \phi_\theta(s', c') \rangle$.
  When coupled with this approximation, the conditional entropy objective in Equation \ref{eq:criterion_rephrased} can be reformulated:
  \red{
    \begin{equation}
      c_n 
      = \argmin_{c' \in \mathcal{C}} \mathop{\E}_{\substack{c \sim \mu_c, \; (s_0, \dots) \sim \boldsymbol{\tau}(c) \\ \tau_{1:n-1} \sim \boldsymbol{\tau}(c_{1:n-1}) \\ \tau' \sim \boldsymbol{\tau}(c')}} \sum_{t=0}^{H-1} K(s_t, c, X),
    \label{eq:criterion_nn}
  \end{equation}
  where $K(s_t,\!c,\! X)\!:=\!k_\theta (x_t,\!X) [k_\theta(X,\!X)\!+\! \sigma_\epsilon^2I]^{-1} k_\theta (X,\!x_t)$, $x_t=(s_t, c)$, and $X$ is the vector of states and tasks in $(c', \tau', c_{1:n-1}, \tau_{1:n-1})$.}
  As the collected dataset grows, the conditioning on previous trajectories $\tau_{1:n-1}$ can instead be addressed by fine-tuning the network's parameters $\theta$ (e.g., through conventional gradient descent), resulting in updates in the embedding function $\phi_\theta$.
  
  \begin{figure*}[t]
    \centering
    \includegraphics[width=\linewidth]{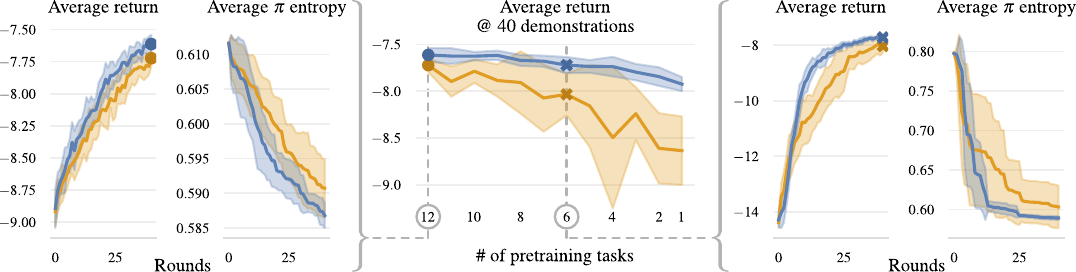}
    \begin{minipage}{0.33\linewidth}

    \centering
    \vspace{-1mm}
    \small{\red{No mismatch (12/12 tasks demonstrated)}}

    \end{minipage}%
    \hfill
    \begin{minipage}{0.33\linewidth}
    \centering
    \vspace{-1mm}
    \centering
    \textcolor{ourblue}{\rule[2pt]{10pt}{3pt}} \small{AMF-GP}\ \ 
    \textcolor{ouryellow}{\rule[2pt]{10pt}{3pt}} \small{Uniform}
    \end{minipage}    \begin{minipage}{0.30\linewidth}
    \vspace{-1mm}
    \small{\red{Mismatch (6/12 tasks demonstrated)}}
    \end{minipage}%
    \vspace{-1mm}
    \caption{\looseness -1 Experiments in GP settings for a 2D integrator (see Figure \ref{fig:2d_integrator}). AMF-GP selects tasks that minimize the policy's posterior entropy and improves the agent's returns faster than uniform task sampling. In the middle, the improvement in final return over the baseline is greater when the pre-training distribution \red{does not perfectly match the evaluation distribution $\mu_\mathcal{C}$ and only demonstrates fewer tasks. We report return and entropy curves for non-mismatched and mismatched pre-training (left and right, respectively).} We plot means and 90\% simple bootstrap confidence intervals over 10 random seeds ; dots and crosses mark corresponding measurements.}
    \label{fig:gp}
    \vspace{-2mm}
\end{figure*}

  \paragraph{Dealing with forgetting}
  \looseness -1 \red{Catastrophic forgetting is an inherent challenge to neural function approximation under shifts to the training distribution, as optimization over new data is non-local, and may undo learning progress \citep{mccloskey1989catastrophic, french1999catastrophic}.
  This issue is critical in our setting, as actively guiding the fine-tuning distribution will necessarily accentuate the distribution shift from pre-training.}
  Common strategies for its mitigation often involve rehearsal \citep{atkinson2021pseudo, verwimp2021rehearsal} or regularization \citep{kirkpatrick2017overcoming}.
  Unfortunately, the former is not possible in this setting due to lack of access to pre-training data, and the latter was not found to be empirically effective (see Appendix \ref{app:forget}).
  Scale and a diverse pre-training dataset can also mitigate forgetting \citep{dyer2022effect}, but neither can be controlled during fine-tuning.
  
  \looseness -1 \red{We thus propose a practical algorithmic solution to alleviate catastrophic forgetting when fine-tuning multi-task policies.
  Intuitively, we would like the policy to retain the skills it mastered during pre-training, while focusing on novel tasks during fine-tuning.
  Inspired by previous works in behavioral priors \citep{bagatella2022sfp} and offline RL \citep{kumar2020conservative}, we propose retain a copy of the pre-trained policy, which we refer to as \textit{prior} ($\boldsymbol{\pi}_p$).
  For a given state $s \in \mathcal{S}$ and task $c \in \mathcal{C}$, we can then linearly combine actions sampled from the fine-tuned policy $\boldsymbol{\pi}$ with those sampled from the prior: $a = \alpha(c) \hat a + (1-\alpha(c)) \bar a$, where $\hat a \sim \boldsymbol{\pi}(\cdot|s,c), \bar a \sim \boldsymbol{\pi}_\text{p}(\cdot|s,c)$. Crucially, $\alpha(c) \in [0, 1]$ is a task-dependent weight trained through gradient descend on an proxy behavior cloning loss, with an additional conservative penalty that encourages closeness to $0$.
  As a result, actions will drift towards the fine-tuned policy's output as soon as it robustly outperforms pre-training performance on a given task. Tasks which are not improved during fine-tuning will rely instead on samples from the prior, and will not be forgotten.
  We refer to this technique as Adaptive Prior, and present a detailed description and a comparison to continual learning techniques in Appendix \ref{app:forget}.}
  
  By combining the approximations required by AMF-GP with \red{the described solutions for estimating entropy and preventing catastrophic forgetting,} we obtain a method for active multi-task fine-tuning of policies parameterized via neural networks, which we refer to as AMF-NN.
      
  \section{Experiments}
    
  The experiment section is designed to evaluate active multi-task fine-tuning and provide an empirical answer to several questions.
  We thus reserve a section to each of them.
  \red{Additional evaluations are reported in Appendix \ref{app:additional_gp}, \ref{app:additional_nn} and \ref{app:octo}.}
    
  \subsection{When is AMF beneficial?}
  \label{sec:exp_gp}
  
  When none of the assumptions listed in Section \ref{sec:analysis} is violated, AMF is guaranteed to converge to the optimal policy.
  We furthermore investigate whether AMF also results in faster empirically faster convergence with respect to naive approaches to data collection.
  To do so, we compare AMF to uniform i.i.d.\ sampling from the set of tasks $\mathcal{C}$.
  First, we consider a classic 2D integrator as a benchmark environment (see Figure~\ref{fig:2d_integrator}).
  The agent is a pointmass initialized in the origin, and can directly control its 2D velocity, which is integrated over the past trajectory to return the current state.
  We can define a continuous task space, in which each task consists of reaching a point on a circle centered on the origin, and the agent is rewarded with the negative Euclidean distance to it.
  The evaluation distribution $\mu_c$ assigns equal probability to 12 points in different directions.
  The initial state distribution is deterministic, dynamics are both deterministic and smooth, while the expert policy is smooth and corrupted with i.i.d.\ Gaussian noise.
  We model the policy as a Gaussian Process with a RBF kernel, and we condition it on a pre-training dataset of $12$ noisy demonstrations.
  We then collect $40$ additional demonstrations by running both AMF-GP and uniform sampling.
  
  \begin{wrapfigure}[15]{R}{0.4\linewidth}
    \vspace{-8mm}
    \includegraphics[width=\linewidth]{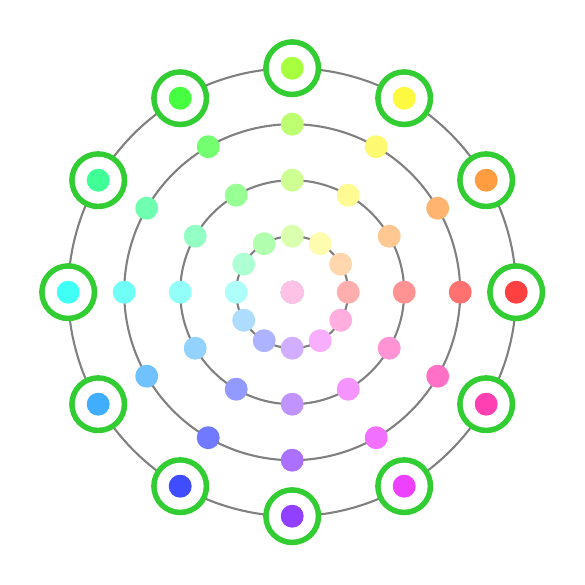}
    \vspace{-7mm}
    \caption{\looseness -1 \red{2D integrator. Starting from the origin, each task involves reaching a given point on a circle, as shown by differently colored trajectories.}}
    \label{fig:2d_integrator}
  \end{wrapfigure}

  \begin{figure*}[t]
    \centering
    \begin{minipage}{0.17\linewidth}
    \vfill
    \begin{minipage}{\linewidth}
    \includegraphics[width=\linewidth]{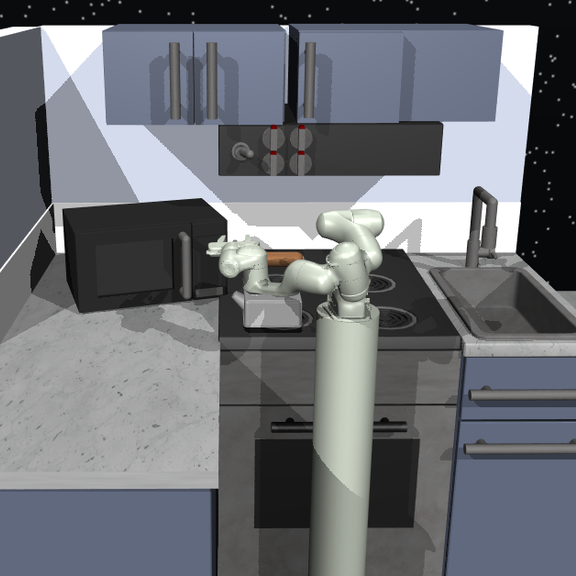}
    \end{minipage}\\
    \vspace{10mm}
    \begin{minipage}{\linewidth}
    \includegraphics[width=\linewidth]{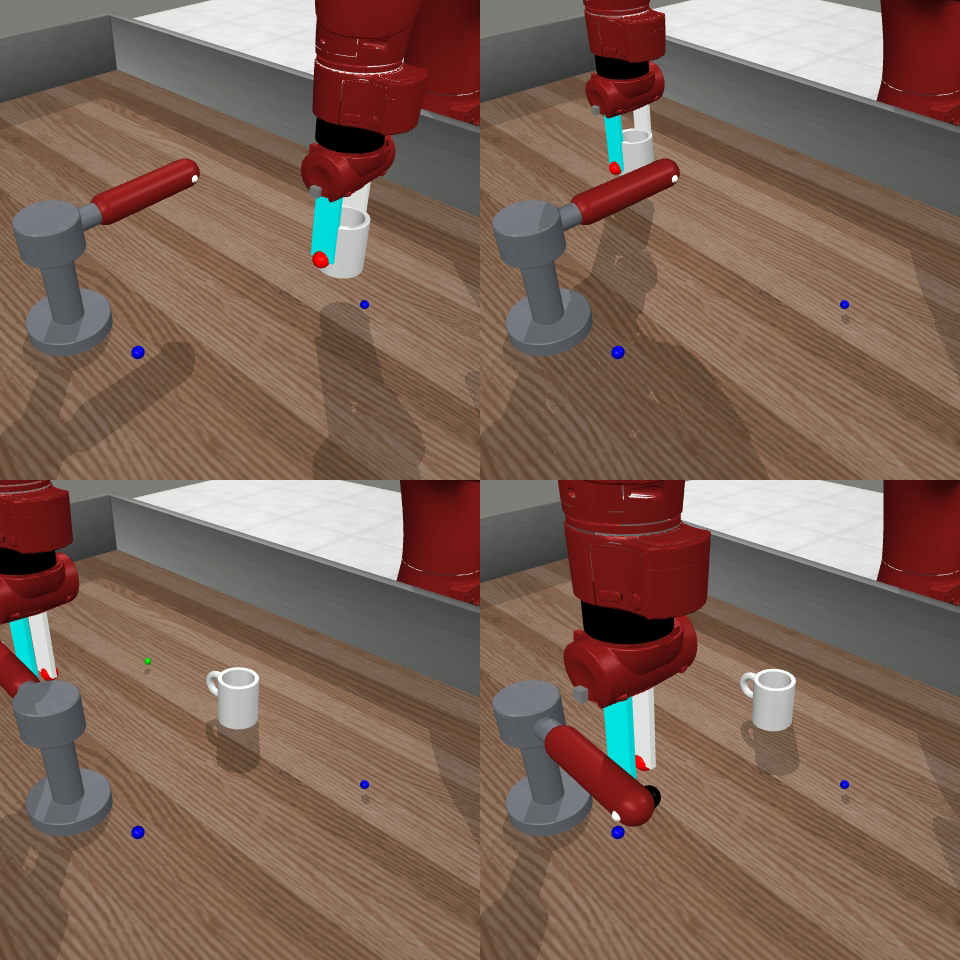}
    \end{minipage}
    \vfill
    \end{minipage}
    \begin{minipage}{0.4\linewidth}
    \centering
    State-based \\
    \vspace{2mm}
    \begin{minipage}{0.5\linewidth}
    
  \begin{center}
    \adjustbox{width=\linewidth}{\input{plots/v40_kitchen_0_0_0.pgf}}
  \end{center}

    \end{minipage}%
    \begin{minipage}{0.5\linewidth}
    
  \begin{center}
    \adjustbox{width=\linewidth}{\input{plots/v40_kitchen_3_3_3.pgf}}
  \end{center}

    \end{minipage}
    \begin{minipage}{0.5\linewidth}
    
  \begin{center}
    \adjustbox{width=\linewidth}{\input{plots/v40_metaworld_8_8.pgf}}
  \end{center}

    \end{minipage}%
    \begin{minipage}{0.5\linewidth}
    
  \begin{center}
    \adjustbox{width=\linewidth}{\input{plots/v40_metaworld_2_2.pgf}}
  \end{center}

    \end{minipage}
    \end{minipage}
    \begin{minipage}{0.4\linewidth}
    \centering
    Visual \\
    \vspace{2mm}
    \begin{minipage}{0.5\linewidth}
    
  \begin{center}
    \adjustbox{width=\linewidth}{\input{plots/v46_kitchen_visual_0_0_0.pgf}}
  \end{center}

    \end{minipage}%
    \begin{minipage}{0.5\linewidth}
    
  \begin{center}
    \adjustbox{width=\linewidth}{\input{plots/v47_kitchen_visual_3_3_3.pgf}}
  \end{center}

    \end{minipage}
    \begin{minipage}{0.5\linewidth}
    
  \begin{center}
    \adjustbox{width=\linewidth}{\input{plots/v42_metaworld_visual_8_8.pgf}}
  \end{center}

    \end{minipage}%
    \begin{minipage}{0.5\linewidth}
    
  \begin{center}
    \adjustbox{width=\linewidth}{\input{plots/v42_metaworld_visual_2_2.pgf}}
  \end{center}

    \end{minipage}
    \end{minipage}

    \vspace{1mm}
    \centering
    \textcolor{ourdarkblue}{\rule[2pt]{10pt}{3pt}} AMF-NN\quad
    \textcolor{ouryellow}{\rule[2pt]{10pt}{3pt}} Uniform, \red{with adaptive prior}\quad
    \textcolor{ourred}{\rule[2pt]{10pt}{3pt}} Uniform

    \vspace{-1mm}
    \caption{\looseness -1 \red{AMF with neural policies in Frankakitchen (top) and Metaworld (bottom). Experiments are repeated for state and RGB inputs (left and right). We evaluate both mismatched and non-mismatched settings. AMF-NN is overall desirable, and highly beneficial for mismatched pre-training distributions. We report means and 90\% simple bootstrap confidence intervals over 10 seeds.}}
    \label{fig:neural}
    \vspace{-2mm}
\end{figure*}

  \red{As a sanity check, we first consider a perfectly uniform pre-training regime, in which each evaluation task is demonstrated exactly once. The pre-training task distribution $\mu_\mathcal{D} \in \Delta(\mathcal{C})$ thus \textit{perfectly matches} the evaluation task distribution $\mu_\mathcal{C}$ (Figure \ref{fig:gp}, left).
  As it actively minimizes the policy's entropy, AMF-GP increases the policy's returns slightly faster when compared to uniform sampling of demonstrations.
  We then extend this evaluation to more realistic pre-training distributions, characterised by a \textit{mismatch} with respect to the evaluation distribution: $\mu_\mathcal{C} \neq \mu_\mathcal{D}$ (Figure \ref{fig:gp}, middle), and compare the final performance of the two methods as the pre-training budget is allocated to a decreasing number of tasks.
  As the pre-training distribution diverges from $\mu_\mathcal{C}$ (e.g., when only 6/12 tasks are demonstrated in Figure \ref{fig:gp}, right), we observe that the performance gap between uniform task sampling and AMF-GP grows larger.}
  This is to be expected, as in this case the information gain from the next demonstration heavily depends on the queried task, and taking the $\argmax$ of the criterion in Equation \ref{eq:criterion} is significantly better than choosing a random task.
  Intuitively, in this case, uniform sampling of tasks fails to reliably provide demonstrations for tasks that were observed less often during pre-training.

  \subsection{Can AMF scale to high-dimensional tasks?}
  
  In realistic settings, the assumptions enabling a formal analysis of AMF are soon violated.
  As the complexity of the environments of interest increases, most modern behavior cloning applications rely on neural networks for policy parameterization \citep{reed2022generalist, chi2023diffusionpolicy}.
  Motivated by this pattern, we now study a second version of our method, AMF-NN, and evaluate its ability to scale to complex, high-dimensional tasks.
  We consider two common benchmarks for multi-task learning, both with a finite set of tasks.
  \begin{itemize}[noitemsep,topsep=0pt]
  \item In Metaworld \citep{yu2020meta} we create a scene with a robotic arm, a cup and a faucet, defining 4 tasks: moving the cup to two distinct positions, opening and closing the faucet.
  \item In FrankaKitchen \citep{fu2020d4rl}, we consider 5 tasks, namely turning a knob on or off, opening a pivoting or a sliding cabinet, or opening the microwave door.
  \end{itemize}
  \looseness -1 In both environments, we evaluate AMF-NN when learning from state measurements, as well as from raw pixels.
  In the first case, the policy is simply parameterized through a MLP, while in the second the MLP receives the embedding of a pre-trained visual encoder \citep{nair2022rm}.
  The policy is pre-trained on $\approx 15$ total demonstrations, which we allocate either uniformly across all tasks, or only on half of them, reproducing the \red{non-mismatched and mismatched} regimes from the previous experiments.
  \red{Afterwards, when learning from state measurements, we apply AMF-NN for $20$ iterations, collecting one demonstrations at each iteration.
  To compensate for the increased complexity, in visual settings we instead provide 2 demonstrations per each task selected and only evaluate 10 iterations.}

  \begin{figure*}
    \begin{minipage}{0.38\linewidth}
    \centering
    \begin{minipage}{0.5\linewidth}
        
  \begin{center}
    \adjustbox{width=\linewidth}{\input{plots/v40_kitchen_unc_0_0.pgf}}
  \end{center}

    \end{minipage}%
    \begin{minipage}{0.5\linewidth}
        
  \begin{center}
    \adjustbox{width=\linewidth}{\input{plots/v40_kitchen_unc_3_3.pgf}}
  \end{center}

    \end{minipage} \\
    \textcolor{ourblue}{\rule[2pt]{10pt}{3pt}} \small{Last-layer gradient} \;
    \textcolor{ourgrey}{\rule[2pt]{10pt}{3pt}} \small{Last-layer} \;
    \textcolor{ourgreen}{\rule[2pt]{10pt}{3pt}} \small{Dropout}
    \vspace{-1mm}
    \caption{AMF performance with alternative uncertainty estimation schemes.}
    \label{fig:uncertainty}
    \end{minipage}%
    \hfill
    \begin{minipage}{0.57\linewidth}
    \centering
    \begin{minipage}{0.33\linewidth}
    \centering
    \includegraphics[width=0.85\linewidth]{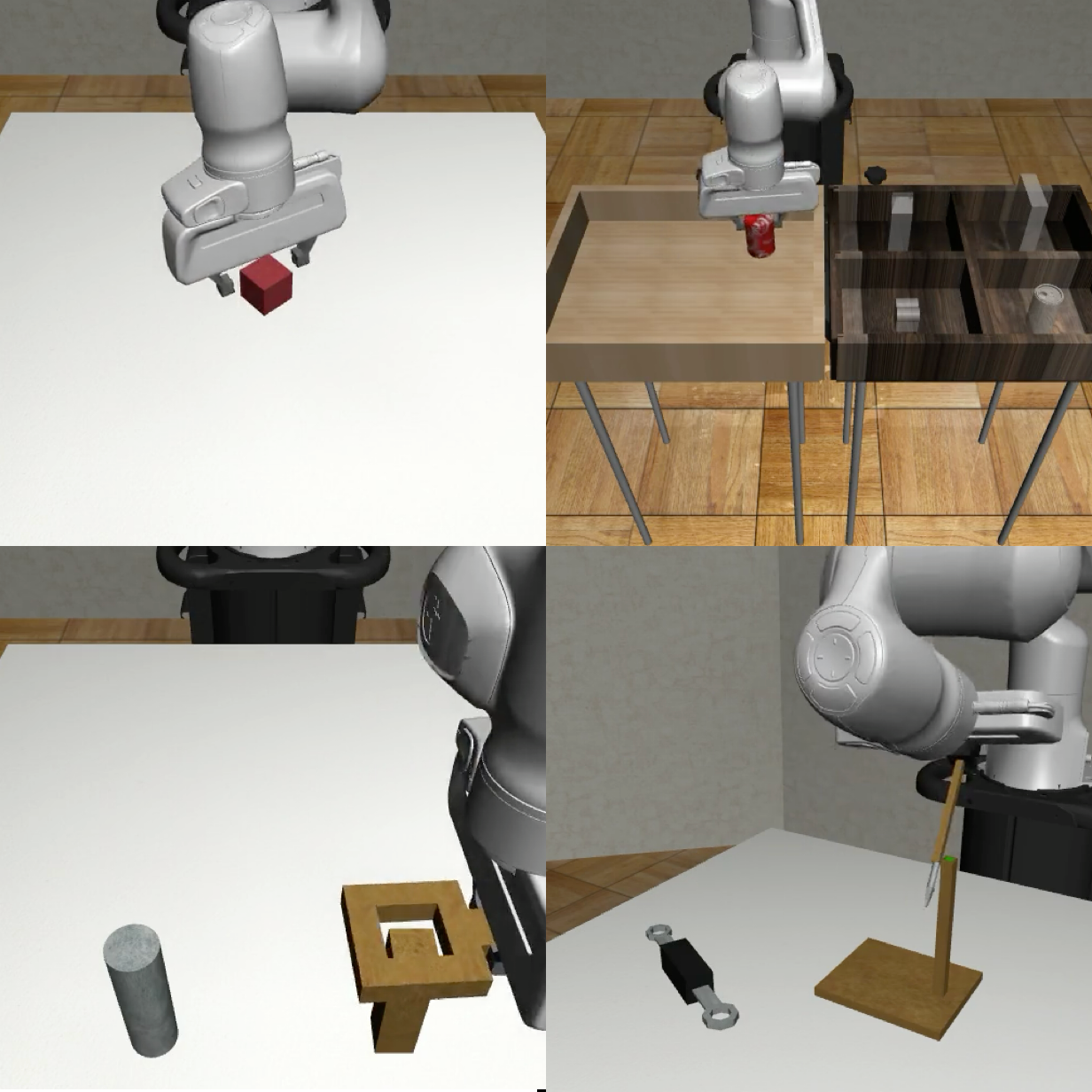} 
    \end{minipage}%
    \begin{minipage}{0.33\linewidth}
        
  \begin{center}
    \adjustbox{width=\linewidth}{\input{plots/v20_robomimic_160_160.pgf}}
  \end{center}

    \end{minipage}
    \begin{minipage}{0.33\linewidth}
        
  \begin{center}
    \adjustbox{width=\linewidth}{\input{plots/v20_robomimic_20_20.pgf}}
  \end{center}

    \end{minipage}\\
    \textcolor{ourblue}{\rule[2pt]{5pt}{3pt}} \small{AMF-NN} \quad
    \textcolor{ouryellow}{\rule[2pt]{5pt}{3pt}} \small{Uniform, \red{with adaptive prior}} \quad
    \textcolor{ourred}{\rule[2pt]{5pt}{3pt}} \small{Uniform}
    \vspace{-1mm}
    \caption{\red{Evaluation on Robomimic. AMF-NN is widely applicable (e.g., to diffusion policies).}}
    \label{fig:robomimic}
    \end{minipage}
    \vspace{-2mm}
  \end{figure*}

  Figure \ref{fig:neural} reports average multi-task success rates at each iteration compared to a random uniform task selection scheme.
  For the baseline, we additionally report performance \red{with an adaptive prior}, highlighting how this solution prevents performance degradation due to catastrophic forgetting of pre-training demonstrations.
  \red{As reported in the previous section, AMF is beneficial under distribution mismatch, in which the pre-training dataset does not exactly cover the evaluation distribution $\mu_\mathcal{C}$.
  As a sanity check, we also observe that, when pre-training demonstrates all tasks equally and a uniform task allocation would be very effective, AMF's performance matches this naive baseline.}

  These trends are consistent across both environments, and both modalities. \red{For a qualitative analysis of the strategy induced by AMF, we refer to Appendix \ref{app:rebalancing} and \ref{app:single_task}.}

  \subsection{How do uncertainty estimates for AMF compare?}
  \label{sec:uncertainty}
  
  As entropy estimation is at the core of AMF-NN, we additionally compare the adopted GP approximation with loss-gradient embeddings to other approaches from the literature.
  In particular, we also consider an alternative GP approximation using last-layer embeddings \citep{holzmuller2023framework}, as well as test-time Dropout \citep{loquercio2020general}.
  \red{The latter simply selects the task} maximizing \textit{prior} entropy, that is $\argmax_{c \in \mathcal{C}} \mathop{\E}\sum_{t=0}^{H-1} \entropy(\boldsymbol{\pi}(s_t, c) \mid  \tau_{1:n-1})$, with $\tau_{1:n-1} \sim \boldsymbol{\tau}(c_{1:n-1})$ and $(s_0, \dots) \sim \boldsymbol{\tau}(c)$.
  Both of these schemes are in practice desirable, as they do not require access to action labels.
  However, we observe that these two schemes are less effective in driving task selection. Hence, as shown in Figure \ref{fig:uncertainty} \red{(and in Appendix \ref{app:uncertainty})}, multi-task performance is in general lower, suggesting that the entropy estimation technique is important for AMF-NN.
  
  \subsection{Can AMF be applied to off-the-shelf models?}
    
  As AMF-NN has minimal requirements (essentially, access to a differentiable pre-trained prior is sufficient), it should be widely applicable.
  \red{In this section we investigate further scaling our evaluation to more complex tasks, multi-modal demonstrators and modern policy classes.
  We consider the Robomimic benchmark, which involves four long-horizon, precise manipulation tasks (up to $\approx$700 steps). While experiments in previous sections rely on demonstrations collected by scripted policies or RL agents, expert trajectories are in this case provided by humans; due to the increased scale, we sample 20 demonstrations for the task selected at each iteration.
  Finally, we fine-tune a larger generative model, namely a diffusion policy \citep{chi2023diffusionpolicy}, which remains compatible with AMF-NN.}
  
  \red{Despite the change in data source and architectures, the results we observe in Figure \ref{fig:robomimic} are consistent with those reported in previous settings: active fine-tuning and an adaptive prior are overall helpful, especially when the pre-training distribution does not match the evaluation distribution $\mu_\mathcal{C}$.}
  
  \section{Discussion}
  
  \looseness -1 As generalist robotic policies gain prominence, a new set of challenges and opportunities emerge.
  This work responds to this trend by investigating an active multi-task fine-tuning scheme, which adaptively selects the task to be demonstrated for sample-efficient multi-task  behavioral cloning.
  This approach is developed from first principles, extending a formally-motivated, information-based criterion to trajectories over dynamic systems.
  The resulting method is both formally supported by novel performance guarantees and widely applicable.
  Moreover, a practical instantiation enables sample-efficient multi-task fine-tuning across GP and neural network policy classes. %

  Naturally, active multi-task fine-tuning has several limitations.
  When coupled with neural networks, the algorithm relies on uncertainty estimation techniques, which remain an open problem.
  While the approximation we leverage is informative in our experiments, AMF could benefit if large pre-trained policies would allow other off-the-shelf uncertainty quantification techniques (e.g., through model ensembling during pre-training).
  Second, we found the performance of AMF to depend naturally on the pre-training data distribution.
  While AMF induces efficient learning for \red{mismatched} pre-training distributions, it naturally brings more modest gains when the pre-trained policy is equally capable for all tasks, and uniform task sampling is sufficient.
  
  On top of addressing the current limitations, this work suggests multiple interesting directions.
  An extensive empirical evaluation of active fine-tuning with large-scale generalist policies is clearly desirable, but remains infeasible at the moment due to the scarce availability of open-source calibrated benchmarks.
  Another future research direction would involve direct estimation of the RL objective, thus removing the dependence on non-equivalent BC proxy objectives.
  
\section*{Acknowledgments}

We thank Nico Gürtler, Ji Shi, Andreas René Geist and Usman Namjad for the valuable discussions. Marco Bagatella is supported by the Max Planck ETH Center for Learning Systems. Georg Martius is a member of the Machine Learning Cluster of Excellence, EXC number 2064/1 – Project number 390727645. We acknowledge the support from the German Federal Ministry of Education and Research (BMBF) through the Tübingen AI Center (FKZ: 01IS18039B), from the European Research Council (ERC) under the European Union's Horizon 2020 research and Innovation Program Grant agreement no. 815943, and from the Swiss National Science Foundation under NCCR Automation, grant agreement 51NF40 180545.

  \section*{Impact Statement}

  \red{This paper formally introduces a framework for active data selection in MDPs, and investigates its applicability in diverse settings.
  Although this direction as a whole might eventually have an effect on labor and automation, we do not foresee a direct impact of our work in terms of societal consequences.}

  \bibliography{refs}
  \bibliographystyle{icml2025}
  
  \newpage
  \appendix
  \onecolumn
  
  \section{Performance guarantees under regularity assumptions}
  \label{app:proof}
  
  This sections retrieves guarantees on the performance of the imitator policy as a function of the number of provided demonstrations $n$.
  At first, this analysis focuses on policies over a single-dimensional action space.
  An extension to multi-dimensional outputs is introduced later on.
  The general sketch of the proof can be informally described as follows:
  \begin{itemize}
      \item we first introduce the regularity assumptions required for the guarantees;
      \item we then show that, in Lipschitz, bounded MDPs, the effect of stochasticity on information gain at each round can be controlled;
      \item we show how the variance over the imitator's policy shrinks according to the maximum information gain at each round, which in turn depends on the maximum information gain over a set of queries to the expert;
      \item starting from the previous result, we leverage a well-known theorem \citep{abbasi2013online} to retrieve a probabilistic, anytime guarantee on the error of the imitator;
      \item we quantify the relationship between the imtator's error and its performance, thus retrieving our main theoretical result.
  \end{itemize}
  
  \subsection{Assumptions}
  
  It is clear that bounding imitation performance would be hopeless without any regularity assumption, as slight errors in the imitator's policy could result in arbitrary differences in return.
  We thus introduce the following:
  
  \begin{assumption} {\normalfont(Regular, noisy policy)}
  We assume that the optimal policy $\pi^\star \sim GP(\mu, k)$ with known mean function $\mu$ and kernel $k$. Furthermore the noise $\boldsymbol{\epsilon}(s,c)$ is mutually independent and zero-mean Gaussian, with known variance $\rho^2(s,c) > 0$ for  all $(s,c) \in \mathcal{S \times C}$.
  \label{as:policy}
  \end{assumption}
  
  In order to motivate further assumptions, let us recall the criterion from Equation \ref{eq:criterion}:
  
  \begin{equation}
      c_n = \argmax_{c' \in \mathcal{C}} \mathop{\E}_{\substack{\tau_{1:n-1} \sim \boldsymbol{\tau}(c_{1:n-1}) \\ c \sim \mu_c, \; (s_0, \dots) \sim \boldsymbol{\tau}(c)}} \sum_{t=0}^{H-1} \MI(\boldsymbol{\pi}(s_t, c); \boldsymbol{\tau}(c')  \mid  c_{1:n-1}, \tau_{1:n-1}).
  \end{equation}
  
  Through this section, we will use a slightly more precise formulation:
  \begin{equation}
      c_n = \argmax_{c' \in \mathcal{C}} \mathop{\E}_{\substack{\tau_{1:n-1} \sim \boldsymbol{\tau}(c_{1:n-1}), \tau \sim \boldsymbol{\tau}(c') \\ c \sim \mu_c, \; (s_0, \dots) \sim \boldsymbol{\tau}(c)}} \sum_{t=0}^{H-1} \MI(\boldsymbol{\pi}(s_t, c); \boldsymbol{\tilde \pi}(\tau, c')  \mid  c_{1:n-1}, \tau_{1:n-1}),
  \label{eq:criterion_v2}
  \end{equation}
  in which we clarify that the mutual information is only computed with respect to the actions of the noisy expert, and overload the notation with $\boldsymbol{\tilde \pi}(\tau, c') = (\boldsymbol{\tilde \pi}(s_i, c'))_{0}^{H-1}$ for $\tau = (s_i, a_i)_0^{H-1}$.
  The criterion selects the task $c_n$ with the greatest expected mutual information between the policy and the trajectory associated with the task.
  We note that, the objective produces a fully deterministic sequence of tasks, as all stochasticity is resolved in the expectation.
  Nevertheless, the actual sequence of states at which the demonstrator is queried remains stochastic.
  For this reason, we require the following two sets of assumptions to ensure that information gained along empirical trajectories is not arbitrarily smaller than the expected one.
  
  \begin{assumption} {\normalfont (Lipschitz, bounded MDP and policy)}
  Given the contextual MDP $\mathcal{M}=(\mathcal{S}, \mathcal{A}, \mathcal{C}, P, R, \gamma, \mu_0)$ and the noisy expert $\boldsymbol{\tilde \pi}$ we assume that, for every $\{(s, c, a), (s', c', a')\} \subseteq \mathcal{S \times C \times A}$:
    \begin{itemize}
        \item the support of the initial state distribution $\mu_0$ is bounded by an $\epsilon_{\mu_0}$-ball
        \[
        \max_{s_l, s_h \in \text{\normalfont supp}(\mu_0)} \| s_h - s_l \|_2 \leq \epsilon_{\mu_0},
        \]
        \item the transition kernel $P$ is $L_P$-smooth
        \[
        \mathcal{W}(P(\cdot|s,a), P(\cdot|s',a')) \leq L_P \cdot d((s,a), (s',a')),
        \]
        where $d((s,a), (s',a')) = \| s - s \|_2 + \| a - a' \|_2$, and $\mathcal{W}(\cdot, \cdot)$ is the Wasserstein $1$-distance with respect to $d(\cdot, \cdot)$; furthermore, the support of $P(\cdot|s,a)$ is bounded by an $\epsilon_{P}$-ball
        \[
        \max_{s_l, s_h \in \text{\normalfont supp}(P(\cdot|s,a))} \| s_h - s_l \|_2 \leq \epsilon_{P},
        \]
        \item the reward function $R$ is $L_R$-smooth
        \[
        |R(s, c, a) - R(s', c', a')| < L_R \cdot d((s, c, a), (s', c', a')),
        \]
        where $d((s,c,a), (s',c',a')) = \| s - s \|_2 + \| c - c' \|_2 + \| a - a' \|_2$,
        \item the noisy expert $\boldsymbol{\tilde \pi}$ is $L_\pi$-smooth
        \[
        \mathcal{W}(\boldsymbol{\tilde \pi}(\cdot|s,c,a), P(\cdot|s',c',a')) \leq L_\pi \cdot d((s,c,a), (s',c',a')),
        \]
        where $d((s,c,a), (s',c',a')) = \| s - s \|_2 + \| c - c' \|_2 + \| a - a' \|_2$ and $\mathcal{W}(\cdot, \cdot)$ is the $1$-Wasserstein distance with respect to $d(\cdot, \cdot)$;
        furthermore, the support of $\boldsymbol{\tilde \pi}(\cdot|s,a)$ is bounded by an $\epsilon_{\pi}$-ball
        \[
        \max_{a_l, a_h \in \text{\normalfont supp}(\boldsymbol{\tilde \pi}(\cdot|s, c))} \| a_h - a_l \|_2 \leq \epsilon_{P},
        \]
        \item finally, the Q-function for the expert $\pi^\star$ is $L_Q$-smooth
        \[
        |Q^{\pi^\star}(s,c,a) - Q^{\pi^\star}(s',c',a')| \leq L_Q \cdot d((s,c,a), (s',c',a')).
        \] 
    \end{itemize}
  \label{as:mdp}
  \end{assumption}
  
  We note that smoothness of the noisy expert is guaranteed by construction if the expert $\pi^\star$ is $L_\pi$-smooth.
  
  \begin{assumption} {\normalfont(Smooth MI)}
    For every pair of sequences of trajectories $\{\tau_{1:n-1}, \tau'_{1:n-1}\} \subseteq (\mathcal{S \times A})^{H(n-1)}$, $(s, c) \in \mathcal{S \times C}$, $c_{1:n-1} \in \mathcal{C}^{n-1}$, $\tilde \tau \in (\mathcal{S \times A})^{H}$ and $c_n \in \mathcal{C}$, we assume that the mutual information at step $n$ is $L_I$-smooth with respect to the mean square deviation of collected trajectories:
    \[
    |\MI(\boldsymbol{\pi}(s,c); \boldsymbol{\tilde \pi}(\tilde \tau, c_n)| c_{1:n-1}, \tau_{1:n-1}) - \MI(\boldsymbol{\pi}(s,c); \boldsymbol{\tilde \pi}(\tilde \tau, c_n)| c_{1:n-1}, \tau'_{1:n-1})| \leq L_I \cdot d(\tau_{1:n-1}, \tau'_{1:n-1}),
    \]
    where $d((s_{0,1}, a_{0,1}, \dots s_{H, n-1}, a_{H, n-1}), (s'_{0,1}, a'_{0,1}, \dots s'_{H, n-1}, a'_{H, n-1}) =\frac{1}{n-1}\sum_{m=1}^{n-1} (\sum_{t=0}^{H-1} $ $ \| s_{t,m} - s'_{t,m} \|_2^2 + \| a_{t,m} - a'_{t,m} \|_2^2 )^{\frac{1}{2}}$ is the mean square deviation over the concatenation of trajectories.
  \label{as:smooth_mi}
  \end{assumption}
  
  \subsection{Proof}
  
  We first prove that, under Assumptions \ref{as:mdp} and \ref{as:smooth_mi}, the effect of stochasticity on the mutual information at step $n$ is bounded.
  
  \begin{lemma}
  Let Assumptions \ref{as:mdp} and \ref{as:smooth_mi} hold.\
  Fix a sequence of tasks $c_{1:n-1}$ and consider two arbitrary sequences of trajectories $\tau_{1:n-1}$ and $\tau'_{1:n-1}$ sampled from $\boldsymbol{\tau}(c_{1:n-1})$.
  Fix one state-task pair $(s,c) \in \mathcal{S \times C}$, one task $c_n \in \mathcal{C}$ and one trajectory $\tilde \tau \sim \boldsymbol{\tau}(c_n)$.
  Let $\epsilon_n= 8 H^{\frac{3}{2}}(1+\max(L_P, L_\pi))^H \max(\epsilon_0, \epsilon_\pi,\epsilon_P)$.
  The difference in mutual information when conditioning on the two sequences of trajectories can be bounded:
  \[
  |\MI(\boldsymbol{\pi}(s,c);\boldsymbol{\tilde \pi}(\tilde \tau, c_n)|\tau_{i:n-1}) - \MI(\boldsymbol{\pi}(s,c);\boldsymbol{\tilde \pi}(\tilde \tau, c_n)|\tau'_{i:n-1})| \leq \epsilon_n.
  \]
  \label{lemma:bounded_mi}
  \end{lemma}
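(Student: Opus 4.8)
The plan is to bound the distance $d(\tau_{1:n-1}, \tau'_{1:n-1})$ appearing in Assumption~\ref{as:smooth_mi} and then invoke that assumption to conclude. Since Assumption~\ref{as:smooth_mi} already states the mutual information is $L_I$-Lipschitz in the mean-square deviation of the conditioning trajectories, the real content is a uniform bound on how far apart two trajectories $\tau_{1:n-1}$ and $\tau'_{1:n-1}$ can be when both are drawn from $\boldsymbol{\tau}(c_{1:n-1})$; the claimed $\epsilon_n$ does not involve $L_I$, so I suspect the statement should really read $L_I \cdot \epsilon_n$ as the bound, or $\epsilon_n$ already absorbs $L_I$ — either way the argument is the same.

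First I would fix a single task $c_m$ and compare two rollouts $\tau = (s_0,a_0,\dots,s_{H-1},a_{H-1})$ and $\tau' = (s_0',a_0',\dots)$ both sampled under the noisy expert $\boldsymbol{\tilde\pi}$ and dynamics $P$ for that task. I would couple the two trajectories step by step: at $t=0$ both start from $\mathrm{supp}(\mu_0)$, so $\|s_0 - s_0'\|_2 \le \epsilon_{\mu_0}$; given a bound $\Delta_t := \|s_t - s_t'\|_2$, the action discrepancy $\|a_t - a_t'\|_2$ is at most the diameter of $\mathrm{supp}(\boldsymbol{\tilde\pi}(\cdot|s_t',c_m))$ plus the Wasserstein shift $L_\pi \Delta_t$ between the two action distributions — roughly $\|a_t - a_t'\|_2 \le \epsilon_\pi + L_\pi \Delta_t$ under an optimal coupling. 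Then $P$-smoothness gives $\Delta_{t+1} \le \epsilon_P + L_P(\Delta_t + \|a_t - a_t'\|_2)$. Unrolling this linear recursion produces a bound of the form $\Delta_t + \|a_t - a_t'\|_2 \le C\,(1 + \max(L_P, L_\pi))^t \max(\epsilon_{\mu_0}, \epsilon_\pi, \epsilon_P)$ for an absolute constant $C$. Summing the squared per-step discrepancies over the $H$ steps and taking the square root contributes the extra $\sqrt{H}$ factor, and averaging over the $n-1$ trajectories in the concatenation — each bounded the same way — leaves the mean-square-deviation metric $d(\tau_{1:n-1}, \tau_{1:n-1}')$ bounded by something like $8 H^{3/2}(1+\max(L_P,L_\pi))^H \max(\epsilon_{\mu_0},\epsilon_\pi,\epsilon_P) = \epsilon_n$. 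Plugging this into Assumption~\ref{as:smooth_mi} yields the claimed inequality.

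The main obstacle I anticipate is the coupling step for the stochastic transitions. The Wasserstein-$1$ bound $\mathcal{W}(P(\cdot|s,a), P(\cdot|s',a')) \le L_P\, d((s,a),(s',a'))$ only guarantees the \emph{existence} of a coupling achieving roughly that expected transport cost, not a pathwise bound; combined with the bounded-support assumption on $P(\cdot|s,a)$, one gets that under the optimal coupling $\|s_{t+1} - s_{t+1}'\|_2 \le \epsilon_P + L_P d((s_t,a_t),(s_t',a_t'))$ holds, but some care is needed to chain these couplings consistently across time steps (a Markov/glueing argument) and to make sure the expectation-versus-pathwise distinction is handled correctly — most cleanly by noting the support of each one-step transition has diameter $\le \epsilon_P$, so \emph{every} pair of next-states is within $\epsilon_P$ of each other after accounting for the mean shift, making the bound deterministic rather than in-expectation. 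The rest is a routine but slightly tedious induction on $t$ and the final algebra to collapse constants into the stated $\epsilon_n$; I would not belabor the exact constant $8$ and would just check it is absorbed.
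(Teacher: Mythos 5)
Your proposal follows essentially the same route as the paper's proof: an induction coupling the two rollouts step by step, using the bounded supports of $\mu_0$, $P$, and $\boldsymbol{\tilde\pi}$ to convert the Wasserstein smoothness bounds into deterministic pathwise bounds (the paper isolates this as Lemma~\ref{lemma:bounded_k}), unrolling the linear recursion, summing over the $H$ steps to get the $H^{3/2}$ factor, and finally invoking Assumption~\ref{as:smooth_mi}. Your observation that the stated bound should carry a factor of $L_I$ (i.e., read $L_I\cdot\epsilon_n$) is correct — the paper's own proof concludes via Assumption~\ref{as:smooth_mi} and thus also only delivers $L_I\cdot\epsilon_n$, so the lemma statement implicitly absorbs $L_I$ into $\epsilon_n$ or assumes $L_I\le 1$.
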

  \begin{proof}
      Under Assumption \ref{as:smooth_mi} it is sufficient to show that stochasticity in the MDP does not cause the demonstrator's trajectories to deviate excessively.
      This is a direct consequence of smoothness and boundedness, which we assume in Assumption \ref{as:mdp}, and can be shown by induction.
      Let us fix a task $c_n \in \mathcal{C}$ and consider two trajectories $\tau, \tau' \sim \boldsymbol{\tau}(c_n)$.
      For the two initial states $(s_0, s'_0)$, boundedness of the initial state distribution $\mu_0$ implies that $\|s_0 - s'_0\|_2 \leq \epsilon_{\mu_0}$.
      Now, assuming that the distance between two states $(s_t, s'_t)$ is bounded as $\|s_t - s'_t\|_2\leq \epsilon_t$, we have that
      \begin{align}
          \epsilon_{t+1} & := \|s_{t+1} - s'_{t+1}\|_2 \\
          & \stackrel{(i)}{\leq} \mathcal{W}(P(\cdot|s_t,a_t), P(\cdot|s'_t,a'_t)) + 2 \epsilon_P \\
          & \leq L_P \cdot (\| s_t - s'_t\|_2 + \| a_t - a'_t \|_2) + 2 \epsilon_P \\
          & = L_P \cdot (\epsilon_t + \| a_t - a'_t \|_2) + 2 \epsilon_P \\
          & \stackrel{(ii)}{\leq} L_P \cdot (\epsilon_t + \mathcal{W}(\boldsymbol{\tilde \pi}(\cdot |s_t, c_t), \boldsymbol{\tilde \pi}(\cdot | s'_t, c'_t)) + 2\epsilon_\pi) + 2 \epsilon_P \\
          & \leq L_\pi \cdot (\epsilon_t + L_\pi \cdot \| s_t - s'_t \|_2 + 2 \epsilon_\pi) + 2 \epsilon_P \\
          & = L_P \cdot (\epsilon_t + L_\pi \cdot \epsilon_t + 2 \epsilon_\pi) + 2 \epsilon_P \\
          & = L_P \cdot ((L_\pi + 1) \cdot \epsilon_t + 2 \epsilon_\pi) + 2 \epsilon_P \\
          & = L_P (1 + L_\pi) \epsilon_t + 2 (L_P \epsilon_\pi + \epsilon_P) \\
          & := A\epsilon_t + B,
      \end{align}
      where Lemma \ref{lemma:bounded_k} was used in (i) and (ii); Assumption \ref{as:mdp} and the fact that $c_t = c'_t$ were used through the rest of the derivation.
      The recurrence relation can be easily unrolled as
      \begin{align}
          \epsilon_t & \leq A^t\epsilon_0 + \sum_{i=0}^{t-1} A^{i}B \\
          & \leq A^t\epsilon_0 + \max(A^{t-1}, 1)Bt \\
          & \leq \max(A, 1)^t(\epsilon_0 + Bt) \\
          & = \max(L_P(1 + L_\pi), 1)^t(\epsilon_0 + 2t(L_P\epsilon_\pi + \epsilon_P)) \\
          & \leq (1+L_P)^t (1+L_\pi)^t (\epsilon_0 + 2t((1+L_P) \epsilon_\pi + \epsilon_P)) \\
          & \leq (1+L_P)^t (1+L_\pi)^t ( 2t(1+L_P)\max(\epsilon_0, \epsilon_\pi,\epsilon_P)) \\
          & = 2t (1+L_P)^{t+1}(1+L_\pi)^t\max(\epsilon_0, \epsilon_\pi,\epsilon_P),
      \end{align}
      thus bounding the L2 distances between states at each step of the trajectory $\epsilon_t = \| s_t - s'_t \|_2$.
      We note that the distance between actions can also be easily bound by Lemma \ref{lemma:bounded_k}: $\|a_t - a'_t \|_2 \leq L_\pi \epsilon_t + 2 \epsilon_\pi$.
      This can in turn be related to distances over trajectories. Let us fix $c_{1:n-1} \in \mathcal{C}$ and consider $\tau_{1:n-1}, \tau'_{1:n-1} \sim \boldsymbol{\tau}(c_{1:n-1})$.
      We have that
      {\allowdisplaybreaks
      \begin{align}
      d(\tau_{1:n-1}, \tau'_{1:n-1})
      & = \frac{1}{n-1}\sum_{m=1}^{n-1}( \sum_{t=0}^{H-1} \| s_{t,m} - s'_{t,m} \|_2^2 + \| a_{t,m} - a'_{t,m} \|_2^2 )^{\frac{1}{2}} \\
      & \leq (\sum_{t=0}^{H-1} \epsilon^2_t + (L_\pi \epsilon_t + 2 \epsilon_\pi)^2 )^{\frac{1}{2}} \\
      & \leq (\sum_{t=0}^{H-1} \epsilon^2_t + (L_\pi \epsilon_t + \epsilon_t)^2 )^{\frac{1}{2}} \\
      & = (\sum_{t=0}^{H-1} \epsilon^2_t + (1+L_\pi)^2 \epsilon_t^2 )^{\frac{1}{2}} \\
      & \leq (\sum_{t=0}^{H-1} 2 (1+L_\pi)^2 \epsilon_t^2 )^{\frac{1}{2}} \\
      & = (2(1+L_\pi)^2 \sum_{t=0}^{H-1} \epsilon_t^2 )^{\frac{1}{2}} \\
      & = \sqrt{2}(1+L_\pi) (\sum_{t=0}^{H-1} \epsilon_t^2 )^{\frac{1}{2}} \\
      & \leq \sqrt{2}(1+L_\pi) (H \epsilon_{H-1}^2 )^{\frac{1}{2}} \\
      & = \sqrt{2H}(1+L_\pi)  \epsilon_{H-1} \\
      & \leq \sqrt{2H}(1+L_\pi) \cdot 2(H-1) (1+L_P)^{H}(1+L_\pi)^{H-1}\max(\epsilon_0, \epsilon_\pi,\epsilon_P) \\
      & \leq 4 H^{\frac{3}{2}}(1+L_P)^H (1+L_\pi)^H \max(\epsilon_0, \epsilon_\pi,\epsilon_P) \\
      & \leq 8 H^{\frac{3}{2}}(1+\max(L_P, L_\pi))^H \max(\epsilon_0, \epsilon_\pi,\epsilon_P).
     \end{align}
  }
  Having obtained an upper bound on the distance between sequences of trajectories, the result follows naturally from smoothness of mutual information according to Assumption \ref{as:smooth_mi}.
  
  \end{proof}
  
  We can now focus on the main result. We start by introducing an important measure, quantifying the maximum information gain at each round:
  
  \begin{equation}
      \Gamma_n := \max_{c' \in \mathcal{C}} \psi_n(c') = \max_{c' \in \mathcal{C}}
      \mathop{\E}_{\substack{\tau_{1:n-1} \sim \boldsymbol{\tau}(c_{1:n-1}) \\ \tau' \sim \boldsymbol{\tau}(c') \\ c \sim \mu_c, \; (s_0, \dots) \sim \boldsymbol{\tau}(c)}} \sum_{t=0}^{H-1}
      \MI(\boldsymbol{\pi}(s_t, c); \boldsymbol{\tilde \pi}(\tau', c')  \mid c_{1:n-1}, \tau_{1:n-1})
  \end{equation}
  
  We note that the criterion in Equation \ref{eq:criterion_v2} takes the $\argmax$ of the same quantity $\Gamma_n$ maximizes over.
  As common in the literature \citep{bogunovic2016truncated, kothawade2021similar, hubotter2024information}, we make a standard assumption on diminishing informativeness.
  \begin{assumption}
      For each $n, i \in \mathbb{N}$ with $i \leq n$, the maximum information gain at round $n$ is not greater than the maximum information gain at round $i$:
      \[
      \Gamma_n \leq \Gamma_i.
      \]
      \label{as:submodular}
  \end{assumption}
  This can be leveraged to show that the expected mutual information is sublinear in the number of rounds $n$.
  From this point, we overload the notation and allow policies (e.g., $\boldsymbol{\pi}$) to map vector to random vectors, that is $\boldsymbol{\pi}((x_0, \dots, x_{n-1})) = (\boldsymbol{\pi}(x_0), \dots, \boldsymbol{\pi}(x_{n-1}))$ for $(x_0, \dots, x_{n-1}) \in (\mathcal{S \times C})^n$.
  
  \begin{lemma}
  Under Assumptions \ref{as:policy} and \ref{as:submodular}, if $(c_0, \dots , c_n)$ follows the criterion in Equation \ref{eq:criterion_v2}, then $\Gamma_n \leq  \frac{H}{n}\gamma_{(Hn)}$, where $\gamma_{(Hn)} = \max_{\substack{X \subseteq \mathcal{S} \times \mathcal{C} \\ |X| \leq Hn}} \MI(\boldsymbol{\pi}(\mathcal{S\times C}); \boldsymbol{\tilde\pi}(X))$.
  \label{lemma:variance_reduction}
  \end{lemma}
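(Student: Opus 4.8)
The plan is to collapse the per-round quantity $\Gamma_i$ into an information gain about the \emph{whole} policy process $\boldsymbol{\pi}(\mathcal{S}\times\mathcal{C})$, telescope these over the collected rounds via the chain rule of mutual information, bound the result by the worst-case information gain $\gamma_{(Hn)}$, and then trade $\sum_i\Gamma_i$ for $n\,\Gamma_n$ using the diminishing-returns Assumption~\ref{as:submodular}.

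First I would handle a single round. For any state--task pair $(s_t,c)$ the variable $\boldsymbol{\pi}(s_t,c)$ is a deterministic measurable function of $\boldsymbol{\pi}(\mathcal{S}\times\mathcal{C})$, so the data-processing inequality gives $\MI(\boldsymbol{\pi}(s_t,c);\boldsymbol{\tilde\pi}(\tau',c')\mid c_{1:i-1},\tau_{1:i-1})\le\MI(\boldsymbol{\pi}(\mathcal{S}\times\mathcal{C});\boldsymbol{\tilde\pi}(\tau',c')\mid c_{1:i-1},\tau_{1:i-1})$. The right-hand side does not depend on $t$, on the evaluation task $c$, or on the evaluation trajectory, so the inner sum over $t=0,\dots,H-1$ contributes only a factor $H$ and the expectation over $c\sim\mu_c$ and $(s_0,\dots)\sim\boldsymbol{\tau}(c)$ disappears. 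Taking $c'=c_i$, which is by definition the maximiser so that $\psi_i(c_i)=\Gamma_i$, this gives
\[
\Gamma_i \le H\cdot\mathop{\E}_{\tau_{1:i-1},\,\tau_i}\MI\big(\boldsymbol{\pi}(\mathcal{S}\times\mathcal{C});\,\boldsymbol{\tilde\pi}(\tau_i,c_i)\mid c_{1:i-1},\tau_{1:i-1}\big).
\]

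Next I would sum this over the $n$ rounds whose demonstrations have been collected. The averaging over $\tau_{1:i-1}$ together with the round-$i$ draw $\tau_i$ is exactly a conditional mutual information, so the chain rule collapses the sum to $\MI\big(\boldsymbol{\pi}(\mathcal{S}\times\mathcal{C});\boldsymbol{\tilde\pi}(\tau_{1:n},c_{1:n})\big)$. Since $\boldsymbol{\tilde\pi}(\tau_{1:n},c_{1:n})$ is the noisy expert observed at the at most $Hn$ state--task points visited along the $n$ demonstrated length-$H$ trajectories, for every realisation these points form an admissible set $X$ with $|X|\le Hn$, so the definition of $\gamma_{(Hn)}$ bounds the mutual information --- and hence its expectation --- by $\gamma_{(Hn)}$. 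Thus $\sum_i\Gamma_i\le H\,\gamma_{(Hn)}$. Finally, Assumption~\ref{as:submodular} gives $\Gamma_n\le\Gamma_i$ for every collected round $i\le n$, so $n\,\Gamma_n\le\sum_i\Gamma_i\le H\,\gamma_{(Hn)}$, which rearranges to $\Gamma_n\le\frac{H}{n}\,\gamma_{(Hn)}$.

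I expect the main obstacle to be the telescoping step and the $\gamma_{(Hn)}$ bound. Unlike in standard active learning, the locations at which the noisy expert is queried --- the trajectory states --- are not chosen freely but evolve under the unknown policy and the MDP dynamics, so they are statistically coupled with $\boldsymbol{\pi}(\mathcal{S}\times\mathcal{C})$; making the chain rule rigorous requires being careful about what is conditioned on (the realised trajectories versus merely the observed actions) and about the fact that, given the actions collected so far, the visited states are a function of exogenous initial-state and transition noise and therefore carry no extra information about the policy. This is where Assumption~\ref{as:policy} is essential: under a genuine Gaussian-process prior with independent Gaussian observation noise each per-round term equals a closed-form log-determinant (posterior-variance) expression that is monotone non-increasing under further conditioning, which lets me identify the telescoped quantity with a sum of posterior variances along the adaptively visited points and bound it pathwise --- hence in expectation --- by $\gamma_{(Hn)}$.
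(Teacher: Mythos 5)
Your proposal is correct and follows essentially the same route as the paper: diminishing returns (Assumption \ref{as:submodular}) to replace $n\Gamma_n$ by $\sum_i\Gamma_i$, the argmax property of the criterion, the chain rule of mutual information to telescope the per-round conditional terms into $\MI(\boldsymbol{\pi}(\cdot);\boldsymbol{\tilde\pi}(\tau_{1:n},c_{1:n}))$, and a bound by $\gamma_{(Hn)}$ via the cardinality of the visited state--task set. The only difference is cosmetic --- you apply the data-processing step (replacing $\boldsymbol{\pi}(s_t,c)$ by $\boldsymbol{\pi}(\mathcal{S}\times\mathcal{C})$) before telescoping rather than after, which slightly cleans up the chain-rule step but does not change the argument.
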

  
  \begin{proof}
  {\allowdisplaybreaks
  \begin{align}
      \Gamma_n & = \frac{1}{n} \sum_{i=0}^{n-1} \Gamma_n \\
      & \stackrel{(i)}{\leq} \frac{1}{n} \sum_{i=0}^{n-1} \Gamma_i \\
      & = \frac{1}{n} \sum_{i=0}^{n-1} \max_{c' \in \mathcal{C}}
      \mathop{\E}_{\substack{\tau_{1:n-1} \sim \boldsymbol{\tau}(c_{1:n-1}) \\ \tau' \sim \boldsymbol{\tau}(c') \\ c \sim \mu_c, \; (s_0, \dots) \sim \boldsymbol{\tau}(c)}}
      \sum_{t=0}^{H-1}
      \MI(\boldsymbol{\pi}(s_0, c); \boldsymbol{\tilde \pi}(\tau', c')  \mid  c_{1:n-1}, \tau_{1:n-1})\\
      & \stackrel{(ii)}{=} \frac{1}{n} \sum_{i=0}^{n-1}
      \mathop{\E}_{\substack{\tau_{1:n-1} \sim \boldsymbol{\tau}(c_{1:n-1}) \\ \tau' \sim \boldsymbol{\tau}(c_n) \\ c \sim \mu_c, \; (s_0, \dots) \sim \boldsymbol{\tau}(c)}}
      \sum_{t=0}^{H-1}
      \MI(\boldsymbol{\pi}(s_0, c); \boldsymbol{\tilde \pi}(\tau', c_n)  \mid  c_{1:n-1}, \tau_{1:n-1})\\
      & = \frac{1}{n}
      \mathop{\E}_{\substack{ c \sim \mu_c \\ (s_0, \dots) \sim \boldsymbol{\tau}(c)}}
      \sum_{t=0}^{H-1}
      \mathop{\E}_{\substack{ \tau_n \sim \boldsymbol{\tau}(c_n) \\ \tau_{1:n-1} \sim \boldsymbol{\tau}(c_{1:n-1})}}
      \sum_{i=0}^{n-1}
      \MI(\boldsymbol{\pi}(s_0, c); \boldsymbol{\tilde \pi}(\tau_n, c_n)  \mid  c_{1:n-1}, \tau_{1:n-1})\\
      & \stackrel{(iii)}{=} \frac{1}{n}
      \mathop{\E}_{\substack{c \sim \mu_c \\ (s_0, \dots) \sim \boldsymbol{\tau}(c)}}
      \sum_{t=0}^{H-1}
      \mathop{\E}_{\substack{ \tau_n \sim \boldsymbol{\tau}(c_n) \\ \tau_{1:n-1} \sim \boldsymbol{\tau}(c_{1:n-1})}}
      \MI(\boldsymbol{\pi}(s_0, c); \boldsymbol{\tilde \pi}(\tau_{1:n}, c_{1:n}))\\
      & \leq  \frac{1}{n}
      \mathop{\E}_{\substack{c \sim \mu_c \\ (s_0, \dots) \sim \boldsymbol{\tau}(c)}}
      \sum_{t=0}^{H-1}
      \max_{\substack{X \subseteq \mathcal{S} \times \mathcal{C} \\ |X| = Hn}}
      \MI( \boldsymbol{\pi}(s_0,c); \boldsymbol{\tilde \pi}(X))\\
      & \leq  \frac{1}{n}
      \mathop{\E}_{\substack{c \sim \mu_c \\ (s_0, \dots) \sim \boldsymbol{\tau}(c)}}
      \sum_{t=0}^{H-1}
      \max_{\substack{X \subseteq \mathcal{S} \times \mathcal{C} \\ |X| = Hn}}
      \MI( \boldsymbol{\pi}(\mathcal{S \times C}); \boldsymbol{\tilde \pi}(X))\\
      & =  \frac{H}{n}
      \max_{\substack{X \subseteq \mathcal{S} \times \mathcal{C} \\ |X| = Hn}}
      \MI( \boldsymbol{\pi}(\mathcal{S \times C}); \boldsymbol{\tilde\pi}(X))\\
      & = \frac{H}{n}\gamma_{(Hn)}
  \end{align}
  }
  
  where (i) follows from Assumption \ref{as:submodular}, (ii) follows from Equation \ref{eq:criterion_v2}, (iii) is due to the chain rule of mutual information.
  We note that $\gamma_n = \max_{X \subseteq \mathcal{S} \times \mathcal{C}, \; |X| \leq n} \MI(\boldsymbol{\pi}(\mathcal{S\times C}); \boldsymbol{\tilde \pi}(X))$ is sublinear for a large class of GPs.
  In this cases, a looser upper bound would be $H^2\frac{\gamma_n}{n}$.
  \end{proof}
  
  This bound on expected round-wise mutual information can then be leveraged to describe how the total variance shrinks over rounds.
  
  \begin{lemma} (Uniform convergence of marginal variance, following \citet{hubotter2024information})
  Under Assumption \ref{as:policy}, \ref{as:mdp} and \ref{as:smooth_mi}, for any $n \geq 0$ and $(s,c) \in \mathcal{S \times C}$,
  \[
      \sigma_n^2(s,c) \leq (1+\epsilon_n)\frac{2 \bar{\sigma}^2 \Gamma_n}{ \tau_\text{min}^2},
  \]
  where $\bar \sigma ^2 = \max_{(s,c) \in \mathcal{S \times C}} \sigma^2_0(s,c)+\rho^2(s,c)$ and $\tau_\text{min} = \min_{s, c \in \mathcal{S \times C}} \mathop{\E}_{\tau \sim \boldsymbol{\tau}(c)} \mathbf{1}_{s \in \tau}$.
  \label{lemma:variance_convergence}
  \end{lemma}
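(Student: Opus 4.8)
The plan is to convert the abstract, trajectory-averaged information-gain quantity $\Gamma_n$ into a \emph{uniform pointwise} bound on the GP posterior variance, in three movements: (i) a Gaussian-process identity that turns information gain about a point into variance reduction at that point; (ii) a coverage argument that uses $\tau_\text{min}$ to certify that an arbitrary state--task pair is simultaneously a reachable target in the criterion of Equation~\ref{eq:criterion_v2} and a reachable measurement location along the demonstration the criterion selects; and (iii) an appeal to Lemma~\ref{lemma:bounded_mi} to upgrade the expectation over stochastic roll-outs appearing in $\psi_n$ to a bound valid along the realized trajectories, which is exactly where the $(1+\epsilon_n)$ factor enters.

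\emph{Information $\Rightarrow$ variance.} Under Assumption~\ref{as:policy}, $\boldsymbol{\pi}$ is a GP with Gaussian observation noise of variance $\rho^2(\cdot)$, so for any collected data $D$ and any $x=(s,c)$ a single noisy observation $y_x$ satisfies $\MI(\boldsymbol{\pi}(x);y_x\mid D)=\tfrac12\log(1+\sigma_D^2(x)/\rho^2(x))$. Using $\log(1+u)\ge u/(1+u)$ together with $\sigma_D^2(x)+\rho^2(x)\le\sigma_0^2(x)+\rho^2(x)\le\bar\sigma^2$ gives the linearization $\sigma_D^2(x)\le 2\bar\sigma^2\,\MI(\boldsymbol{\pi}(x);y_x\mid D)$. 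Moreover, conditioning a jointly Gaussian vector on additional observations never increases its marginal variances, so if a trajectory $\tau'$ of task $c$ visits the state $s$ then $y_{(s,c)}$ is one of the entries of $\boldsymbol{\tilde\pi}(\tau',c)$, and therefore $\MI(\boldsymbol{\pi}(s,c);\boldsymbol{\tilde\pi}(\tau',c)\mid D)\ge\MI(\boldsymbol{\pi}(s,c);y_{(s,c)}\mid D)\ge\sigma_D^2(s,c)/(2\bar\sigma^2)$.

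\emph{Coverage and de-randomization.} Fix a target pair $(s,c)$. Since $c_n$ maximizes the criterion, $\Gamma_n=\psi_n(c_n)\ge\psi_n(c)$, so it suffices to lower-bound $\psi_n(c)$. Inside $\psi_n(c)$ both the target pair (drawn from the occupancy of $\mu_c$) and the demonstrated trajectory (drawn from $\boldsymbol{\tau}(c)$) are random; by the definition of $\tau_\text{min}$, the demonstrated trajectory visits $s$ with probability at least $\tau_\text{min}$, and, conditionally on the target task being $c$, the target roll-out visits $s$ at some time step with probability at least $\tau_\text{min}$. Keeping only this joint event, discarding the remaining nonnegative mutual-information terms, and applying the previous step yields $\mathbb{E}_{\tau_{1:n-1}}\!\big[\sigma_{n-1}^2(s,c)\big]\le 2\bar\sigma^2\,\Gamma_n/\tau_\text{min}^2$, after absorbing a constant that depends only on $\mu_c$. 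To remove the outer expectation---$\sigma_{n-1}^2(s,c)$ depends on which states the earlier demonstrations actually visited---I would invoke Lemma~\ref{lemma:bounded_mi}, which bounds by $\epsilon_n$ the deviation of the criterion's mutual-information term across any two realizations of $\tau_{1:n-1}$, hence (via the linearization above) upgrades the bound to a pointwise one at the cost of the factor $(1+\epsilon_n)$. Combining with the monotonicity $\sigma_n^2(s,c)\le\sigma_{n-1}^2(s,c)$ gives $\sigma_n^2(s,c)\le(1+\epsilon_n)\,2\bar\sigma^2\,\Gamma_n/\tau_\text{min}^2$, uniformly in $(s,c)$; the multi-output policy is handled by running the argument on each action coordinate, following \citet{hubotter2024information}.

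\emph{Main obstacle.} The delicate step is coverage-and-de-randomization: the criterion only controls an \emph{expectation} of information gain, averaged over the target occupancy \emph{and} over stochastic roll-outs, whereas the statement asserts a \emph{deterministic, uniform} variance bound. Bridging this forces the two-sided use of $\tau_\text{min}$---so that an arbitrary $(s,c)$ is at once a reachable target and a reachable measurement---and, crucially, Lemma~\ref{lemma:bounded_mi} (which itself rests on Assumptions~\ref{as:mdp} and~\ref{as:smooth_mi}) to certify that roll-out stochasticity cannot drive the realized information gain far below its expected value; this is the source of the $(1+\epsilon_n)$ correction. Step~(i) is a routine GP computation, and Lemma~\ref{lemma:variance_reduction} is not needed here; it is applied afterward, to turn this variance bound into an explicit rate in $n$.
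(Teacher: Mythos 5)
Your proposal is correct and follows essentially the same route as the paper's proof: the variance-to-information linearization (you use $\log(1+u)\ge u/(1+u)$ where the paper invokes a mean-value-theorem lemma, but the resulting bound $\sigma^2\le 2\bar\sigma^2\,\MI$ is identical), the two-sided use of $\tau_\text{min}$ for the demonstration trajectory and the target roll-out, monotonicity of mutual information under additional Gaussian observations, and Lemma~\ref{lemma:bounded_mi} to absorb roll-out stochasticity into the $(1+\epsilon_n)$ factor. One small remark: no extra ``constant depending on $\mu_c$'' needs to be absorbed, since $\tau_\text{min}$ is a minimum over all $(s,c)$ and therefore already lower-bounds the $\mu_c$-averaged visit probability, which is exactly how the paper obtains the clean $\tau_\text{min}^2$.
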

  
  \begin{proof}
  {\allowdisplaybreaks
  \begin{align}
      \sigma_n^2(s,c) &= \Var[\boldsymbol{\pi}(s,c)  \mid  c_{1:n}, \tau_{1:n}] \\
      & = \big( \Var[\boldsymbol{\pi}(s,c)  \mid  c_{1:n}, \tau_{1:n}] + \rho^2(s,c) \big) - \rho^2(s,c)\\
      & = \Var[\boldsymbol{\tilde \pi}(s,c)  \mid c_{1:n}, \tau_{1:n}] -
      \Var[\boldsymbol{\tilde \pi}(s,c) \mid \boldsymbol{\pi}(s,c), c_{1:n}, \tau_{1:n}] \\
      & \stackrel{(i)}{\leq} \bar{\sigma}^2 \log
      \left(\frac{\Var[\boldsymbol{\tilde \pi}(s,c) \mid c_{1:n}, \tau_{1:n}]}{\Var[\boldsymbol{\tilde \pi}(s,c)  \mid  \boldsymbol{\pi}(s,c), c_{1:n}, \tau_{1:n}]} \right) \\
      & = 2 \bar{\sigma}^2 \MI(\boldsymbol{\pi}(s,c); \boldsymbol{\tilde \pi} (s,c) \mid c_{1:n}, \tau_{1:n}) \\
      & = 2 \bar{\sigma}^2 \frac{1}{\mathop{\E}_{\tau \sim \boldsymbol{\tau}(c)}\mathbf{1}_{s \in \tau}}
      \mathop{\E}_{\tau \sim \boldsymbol{\tau}(c)}\mathbf{1}_{s \in \tau}
      \MI(\boldsymbol{\pi}(s,c); \boldsymbol{\tilde \pi} (s,c) \mid c_{1:n}, \tau_{1:n}) \\
      & \stackrel{(ii)}{\leq} \frac{2 \bar{\sigma}^2}{\tau_\text{min}}
      \mathop{\E}_{\tau \sim \boldsymbol{\tau}(c)}\mathbf{1}_{s \in \tau}
      \MI(\boldsymbol{\pi}(s,c); \boldsymbol{\tilde \pi} (s,c) \mid c_{1:n}, \tau_{1:n}) \\
      & \leq \frac{2 \bar{\sigma}^2}{\tau_\text{min}}
      \mathop{\E}_{\tau \sim \boldsymbol{\tau}(c)}\mathbf{1}_{s \in \tau}
      \MI(\boldsymbol{\pi}(s,c); \boldsymbol{\tilde \pi} (\tau,c) \mid c_{1:n}, \tau_{1:n}) \\
      & \leq \frac{2 \bar{\sigma}^2}{\tau_\text{min}}
      \mathop{\E}_{\tau \sim \boldsymbol{\tau}(c)}
      \MI(\boldsymbol{\pi}(s,c); \boldsymbol{\tilde \pi} (\tau,c) \mid c_{1:n}, \tau_{1:n}) \\
      & \leq \frac{2 \bar{\sigma}^2}{\tau_\text{min}} \frac{1}{\mathop{\E}_{\substack{c \sim \mu_c \\ (s_0, \dots) \sim \boldsymbol{\tau}(c)}} \mathbf{1}_{s \in (s_0, \dots)}}
      \mathop{\E}_{\substack{c \sim \mu_c \\ (s_0, \dots) \sim \boldsymbol{\tau}(c)}} \mathbf{1}_{s \in (s_0, \dots)}
      \mathop{\E}_{\tau \sim \boldsymbol{\tau}(c)}
      \MI(\boldsymbol{\pi}(s,c); \boldsymbol{\tilde \pi} (\tau,c) \mid c_{1:n}, \tau_{1:n}) \\
      & \leq \frac{2 \bar{\sigma}^2}{\tau^2_\text{min}} 
      \mathop{\E}_{\substack{c \sim \mu_c \\ (s_0, \dots) \sim \boldsymbol{\tau}(c)}} \mathbf{1}_{s \in (s_0, \dots)}
      \mathop{\E}_{\tau \sim \boldsymbol{\tau}(c)}
      \MI(\boldsymbol{\pi}(s,c); \boldsymbol{\tilde \pi} (\tau,c) \mid c_{1:n}, \tau_{1:n}) \\
      & = \frac{2 \bar{\sigma}^2}{\tau^2_\text{min}} 
      \mathop{\E}_{\substack{c \sim \mu_c \\ \tau \sim \boldsymbol{\tau}(c) \\ (s_0, \dots) \sim \boldsymbol{\tau}(c)}} \mathbf{1}_{s \in (s_0, \dots)}
      \MI(\boldsymbol{\pi}(s,c); \boldsymbol{\tilde \pi} (\tau,c) \mid c_{1:n}, \tau_{1:n}) \\
      & \leq \frac{2 \bar{\sigma}^2}{\tau^2_\text{min}} 
      \mathop{\E}_{\substack{c \sim \mu_c \\ \tau \sim \boldsymbol{\tau}(c) \\ (s_0, \dots) \sim \boldsymbol{\tau}(c)}} \mathbf{1}_{s \in (s_0, \dots)}
      \sum_{t=0}^{H-1}\MI(\boldsymbol{\pi}(s_t,c); \boldsymbol{\tilde \pi} (\tau,c) \mid c_{1:n}, \tau_{1:n}) \\
      & \leq \frac{2 \bar{\sigma}^2}{\tau^2_\text{min}} 
      \mathop{\E}_{\substack{c \sim \mu_c \\ \tau \sim \boldsymbol{\tau}(c) \\ (s_0, \dots) \sim \boldsymbol{\tau}(c)}} 
      \sum_{t=0}^{H-1}\MI(\boldsymbol{\pi}(s_t,c); \boldsymbol{\tilde \pi} (\tau,c) \mid c_{1:n}, \tau_{1:n}) \label{eq:resume_proof}\\
      & \stackrel{(iii)}{\leq} (1+\epsilon_n) \frac{2 \bar{\sigma}^2}{\tau_\text{min}^2}
      \mathop{\E}_{\substack{c \sim \mu_c \\ \tau \sim \boldsymbol{\tau}(c) \\ (s_0, \dots) \sim \boldsymbol{\tau}(c)}}
      \sum_{t=0}^{H-1} \mathop{\E}_{\tau_{1:n-1} \sim \boldsymbol{\tau}(c_{1:n-1})}
      \MI(\boldsymbol{\pi}(s_t,c); \boldsymbol{\tilde \pi}(\tau, c)  \mid c_{1:n}, \tau_{1:n}) \\
      & = (1+\epsilon_n) \frac{2 \bar{\sigma}^2}{\tau_\text{min}^2}
      \mathop{\E}_{\substack{\tau_{1:n-1} \sim \boldsymbol{\tau}(c_{1:n-1}) \\ \tau \sim \boldsymbol{\tau}(c) \\ c \sim \mu_c, \; (s_0, \dots) \sim \boldsymbol{\tau}(c)}}
      \sum_{t=0}^{H-1}
      \MI(\boldsymbol{\pi}(s_t,c); \boldsymbol{\tilde \pi}(\tau, c)  \mid c_{1:n}, \tau_{1:n}) \\
      & \leq (1+\epsilon_n) \frac{2 \bar{\sigma}^2}{\tau_\text{min}^2}
      \max_{c' \in \mathcal{C}}
      \mathop{\E}_{\substack{\tau_{1:n-1} \sim \boldsymbol{\tau}(c_{1:n-1}) \\ \tau' \sim \boldsymbol{\tau}(c') \\ c \sim \mu_c, \; (s_0, \dots) \sim \boldsymbol{\tau}(c)}}
      \sum_{t=0}^{H-1}
      \MI(\boldsymbol{\pi}(s_t,c); \boldsymbol{\tilde \pi}(\tau', c')  \mid c_{1:n}, \tau_{1:n}) \\
      &= (1+\epsilon_n) \frac{2 \bar{\sigma}^2 \Gamma_n}{\tau_\text{min}^2}.
  \end{align}
  }
  
  where (i) follows from Lemma \ref{lemma:mean_value} and monotonicity of variance, (ii) holds as the state $s$ is within the support of $\boldsymbol{\tau}(\cdot | c)$, and (iii) follows from Lemma \ref{lemma:bounded_mi} as the difference between the expected mutual information and the mutual information for a realized trajectory is less than the difference in mutual information for two arbitrary realized trajectories.
  \end{proof}
  
  This result can then be translated to the agnostic setting, for a regular policy $\pi^\star$, which we still model through the stochastic process $\boldsymbol{\pi}$. Without loss of generality we will assume that the prior variance is bounded by $\Var[\boldsymbol{\pi}(s, c)] \leq 1$.
  
  \begin{lemma}(Well-calibrated confidence intervals, following \citet{abbasi2013online})
  Pick $\delta \in (0, 1)$. Assume that $\pi^\star$ lies in the RKHS $\mathcal{H}_k(\mathcal{C})$ of the kernel $k$ with norm $||\pi^\star||_k < \infty$, the noise $\epsilon_n$ is conditionally $\rho$-sub-Gaussian, and $\gamma_n$ is sublinear in $n$. Let $\beta_n(\delta) = \| \pi^\star \|_k + \rho \sqrt{2(\gamma_{(Hn)} + 1 + \log(1/\delta))}$. Then, for any $n > 1$ and $(s,c) \in \mathcal{S \times C}$, $GP(\mu_n, k)$ is an all-time well-calibrated model of $\pi^\star$. Thus, jointly with probability at least $1 - \delta$,
  \[
  |\pi^\star(s, c) - \mu_n(s, c)| \leq \beta_n(\delta) \sigma_n .
  \]
  \label{lemma:well_calibrated}
  \end{lemma}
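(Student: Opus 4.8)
The plan is to follow the by-now standard route to well-calibrated confidence sets in kernel regression (Abbasi-Yadkori, 2013; and its kernelized incarnations), with the one structural twist that each of the $n$ demonstrations contributes a length-$H$ trajectory, so after round $n$ the agent has observed $m := Hn$ state–task inputs $x_1,\dots,x_m \in \mathcal{S}\times\mathcal{C}$ with noisy action labels $y_j = \pi^\star(x_j) + \epsilon_j$. First I would observe that the posterior $GP(\mu_n, k)$ in the lemma is exactly kernel ridge regression with regularizer equal to the noise variance (the $\sigma_\epsilon^2$ appearing in the GP-posterior formula of Section~\ref{sec:practical}), which I abbreviate $\rho^2$ (absorbing any state dependence of $\rho^2(s,c)$ into a weighted inner product, or simply using the upper bound $\bar\rho^2$). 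Passing to the RKHS feature map $\phi$ with $\pi^\star(\cdot) = \langle\theta^\star,\phi(\cdot)\rangle$, I write $\mu_n(x) = \phi(x)^\top V_m^{-1}\Phi_m^\top Y_m$ with $V_m = \rho^2 I + \Phi_m^\top\Phi_m$, and record the two identities $\sigma_n^2(x) = \rho^2\,\|\phi(x)\|_{V_m^{-1}}^2$ and $\|\theta^\star\|_2 = \|\pi^\star\|_k$.

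Next I would decompose the pointwise error into a regularization-bias term and a noise term,
\[
\pi^\star(x) - \mu_n(x) = \rho^2\,\phi(x)^\top V_m^{-1}\theta^\star \;-\; \phi(x)^\top V_m^{-1}\Phi_m^\top E_m, \qquad E_m = (\epsilon_1,\dots,\epsilon_m).
\]
Cauchy–Schwarz in the $V_m^{-1}$-norm, together with $V_m \succeq \rho^2 I$ and the two identities above, bounds the bias term by $\rho\,\|\phi(x)\|_{V_m^{-1}}\,\|\theta^\star\|_2 = \|\pi^\star\|_k\,\sigma_n(s,c)$, and the noise term by $\rho^{-1}\,\|\Phi_m^\top E_m\|_{V_m^{-1}}\,\sigma_n(s,c)$. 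So everything reduces to controlling the self-normalized quantity $\|\Phi_m^\top E_m\|_{V_m^{-1}}$.

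For that I would invoke the method-of-mixtures self-normalized tail inequality: since $\epsilon_j$ is conditionally $\rho$-sub-Gaussian given the filtration $\mathcal{F}_{j-1}$ generated by $(x_1,y_1,\dots,x_{j-1},y_{j-1},x_j)$, the partial sums $\sum_{j\le t}\phi(x_j)\epsilon_j$ form a martingale, and with probability at least $1-\delta$, \emph{simultaneously for all $t\ge 1$},
\[
\|\Phi_t^\top E_t\|_{V_t^{-1}}^2 \;\le\; 2\rho^2\log\!\left(\frac{\det(V_t)^{1/2}}{\rho^{\dim}\,\delta}\right).
\]
I would then bound the log-determinant by the maximum information gain: $\tfrac12\log\det(I + \rho^{-2}K_m) = \MI(\boldsymbol{\pi}(X_m);\boldsymbol{\tilde\pi}(X_m)) \le \gamma_{(Hn)}$ because $|X_m| = Hn$. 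Plugging back gives $\|\Phi_m^\top E_m\|_{V_m^{-1}} \le \rho\sqrt{2(\gamma_{(Hn)} + \log(1/\delta))}$, hence after a harmless $+1$ slack the combined error bound $|\pi^\star(s,c) - \mu_n(s,c)| \le \big(\|\pi^\star\|_k + \rho\sqrt{2(\gamma_{(Hn)} + 1 + \log(1/\delta))}\big)\sigma_n(s,c) = \beta_n(\delta)\sigma_n(s,c)$; the ``all-time'' (anytime, uniform-in-$n$) property is inherited directly from the uniform-in-$t$ nature of the mixture bound, so no union bound over rounds is needed.

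The main obstacle I anticipate is the measurability bookkeeping that is trivial in ordinary kernelized bandits but not here: the query inputs $x_j$ are the states $s_t$ visited along a \emph{stochastic} rollout, not points chosen deterministically before seeing feedback. I would argue that with the natural filtration above each $x_j$ is $\mathcal{F}_{j-1}$-measurable — the trajectory up to and including the state $s_t$ is generated \emph{before} the action noise $\boldsymbol{\epsilon}(s_t,c)$ at that step is revealed — which is exactly what the modeling assumption $\boldsymbol{\tilde\pi}(s,c) = \pi^\star(s,c) + \boldsymbol{\epsilon}(s,c)$ with independent $\boldsymbol{\epsilon}$ buys us, so $\epsilon_j$ stays conditionally $\rho$-sub-Gaussian and independent of $x_j$. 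A secondary, purely technical nuisance is the multi-output case ($\boldsymbol{\pi}$ valued in $\R^{N_{\mathcal{A}}}$): $K_m$ becomes block-structured and $\gamma_{(Hn)}$ is the information gain of the matrix-valued GP over $Hn$ input points; I would handle this either coordinatewise or directly with the matrix-valued kernel, noting the argument changes only by constants already absorbed into $\beta_n(\delta)$.
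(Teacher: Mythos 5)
Your proposal is sound, but note that the paper does not actually prove this lemma at all: it is imported wholesale as a citation of \citet{abbasi2013online} (in the form popularized by Chowdhury and Gopalan's kernelized version), and the surrounding text only remarks that $\gamma_{(Hn)}$ appears because $Hn$ labels have been collected by round $n$. What you have written is essentially the standard proof behind that citation, and it is correct in all the ways that matter: the bias/noise decomposition of $\pi^\star(x)-\mu_n(x)$ is the right one, the Cauchy--Schwarz step in the $V_m^{-1}$-norm correctly reduces everything to the self-normalized quantity $\|\Phi_m^\top E_m\|_{V_m^{-1}}$, the method-of-mixtures bound applies because the inputs form an adapted sequence, the identification $\tfrac12\log\det(I+\rho^{-2}K_m)=\MI(\boldsymbol{\pi}(X_m);\boldsymbol{\tilde\pi}(X_m))\le\gamma_{(Hn)}$ is exactly how the paper's definition of $\gamma_{(Hn)}$ enters, and the anytime property indeed comes for free from the uniform-in-$t$ mixture bound. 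Your measurability discussion is also the genuinely non-boilerplate part here and you resolve it correctly: the visited states are adaptively chosen as functions of past action noise, which is precisely the adaptive-design regime the self-normalized inequality is built for, so no extra argument is needed beyond fixing the filtration as you do. Two cosmetic caveats. First, with your normalization ($V_m=\rho^2 I+\Phi_m^\top\Phi_m$ and $\sigma_n^2=\rho^2\|\phi(x)\|_{V_m^{-1}}^2$) the noise term comes out as $\sqrt{2(\gamma_{(Hn)}+\log(1/\delta))}\,\sigma_n$ rather than $\rho\sqrt{2(\gamma_{(Hn)}+1+\log(1/\delta))}\,\sigma_n$; reproducing the exact $\beta_n(\delta)$ stated in the lemma requires the Chowdhury--Gopalan choice of regularizer (ridge parameter $1+\eta$ rather than $\rho^2$), which is where the $+1$ and the leading $\rho$ come from. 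Second, the expression $\det(V_t)^{1/2}/\rho^{\dim}$ is a finite-dimensional artifact and is not literally meaningful in an infinite-dimensional RKHS; you immediately pass to the well-defined $\log\det(I+\rho^{-2}K_m)$, which is the right object, but the rigorous infinite-dimensional statement needs the kernelized form of the mixture argument from the cited literature. Neither point is a gap in the mathematics, only in matching constants to the statement as printed.
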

  
  We note that $\beta_n(\delta)$ depends on $\gamma_{(Hn)}$ as $Hn$ samples from the demonstrator's policy are collected up to round $n$.
  Combining Lemmas \ref{lemma:variance_convergence} and \ref{lemma:well_calibrated} we easily get
  for all $(s, c) \in \mathcal{S \times C}$ and $n \geq 0$ with probability $1-\delta$:
  
  \begin{equation}
  |\pi^\star(s, c) - \mu_n(s, c)| \stackrel{\text{Lemma }\ref{lemma:well_calibrated}}{\leq} \beta_n(\delta) \sigma_n \stackrel{\text{Lemma }\ref{lemma:variance_convergence}}{\leq} \beta_n(\delta) \Big( (1+\epsilon_n) \frac{2 \bar{\sigma}^2 \Gamma_n}{\tau_\text{min}^2}  \Big)^\frac{1}{2}
  \end{equation}
  
  While the analysis has so far dealt with a scalar $\pi^\star$, a simple union bound can guarantee that
  
  \begin{equation}
  \|\pi^\star(s, c) - \mu_n(s, c)\|_1 \leq \beta'_n(\delta) \|\bar{\sigma}\|_1 \Big( (1+\epsilon_n) \frac{2  \Gamma_n}{\tau_\text{min}^2} \Big)^\frac{1}{2}
  \end{equation}
  
  with probability at least $1 - \delta$ for an action space of dimension $|\mathcal{A}|$, where now $\beta'_n(\delta) = \| \pi^\star \|_k + \rho \sqrt{2(\gamma_{(Hn)} + 1 + \log(|\mathcal{A}|/\delta))}$.
  From now on, we will refer to $\mu_n$ as $\pi_n$.
  We are thus able to globally bound the $L_1$ distance of the imitator policy with respect to the expert policy with high probability under active fine-tuning.
  
  It is clear that, even if this distance is small, the performance of an imitator which does not exactly match the expert ($\pi_n(s,c) \neq \pi^\star(s,c)$ for some $(s,c) \in \mathcal{S} \times \mathcal{C})$ can be arbitrarily low for arbitrary MDPs.
  It is however possible to show that, as long as the Q-function of the expert is smooth, the performance gap to the expert can be controlled.
  We note that, in case $\gamma L_P(1 + L_{\pi^\star}) < 1$, then the Q-function $Q^{\pi^\star}$ is guaranteed to be $L_Q$-Lipschitz continuous with $
  L_Q \leq \frac{L_R}{1 - \gamma L_P(1 + L_{\pi^\star})}$ \citep{rachelson2010locality}.
  If smoothness holds, it is easy to connect the divergences in action space to performance gaps \citep{maran2023tight}.
  
  \begin{lemma}
  
  Let $\pi$ and $\pi'$ denote two deterministic policies. If the state-action value function $Q^{\pi'}$ is $L_{Q^{\pi'}}$-Lipschitz continuous, then:
  \[
  | J^\pi - J^{\pi'} | \leq \frac{L_{Q^{\pi'}}}{1 - \gamma} \E_{s \sim d^{\pi}}[\|\pi'(s, c) - \pi(s, c)\|_1] .
  \]
  \label{lemma:performance}
  \end{lemma}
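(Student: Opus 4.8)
The plan is to derive the bound from the classical performance difference lemma specialized to deterministic policies, followed by a single application of the assumed Lipschitzness of $Q^{\pi'}$. First I would recall (or, for self-containedness, re-derive in two lines via a telescoping sum of per-step value gaps along trajectories drawn under $\pi$, using $V^{\pi'}$ as a baseline) the identity
\[
J^\pi - J^{\pi'} = \frac{1}{1-\gamma}\, \mathop{\E}_{s \sim d^\pi}\Big[\, \mathop{\E}_{a \sim \pi(\cdot \mid s, c)} Q^{\pi'}(s, c, a) \;-\; V^{\pi'}(s, c)\,\Big],
\]
where $d^\pi$ is the normalized discounted state occupancy induced by $\pi$ for task $c$ and $V^{\pi'}(s,c) = \mathop{\E}_{a \sim \pi'(\cdot \mid s, c)} Q^{\pi'}(s,c,a)$.

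Next I would use that both policies are deterministic: the inner expectations collapse, giving $\mathop{\E}_{a \sim \pi(\cdot \mid s,c)} Q^{\pi'}(s,c,a) = Q^{\pi'}(s,c,\pi(s,c))$ and $V^{\pi'}(s,c) = Q^{\pi'}(s,c,\pi'(s,c))$, so the bracketed term equals the advantage $Q^{\pi'}(s,c,\pi(s,c)) - Q^{\pi'}(s,c,\pi'(s,c))$. This is exactly where determinism is essential: for a stochastic $\pi$ one would be left with an expectation over actions that cannot be pulled out as $\|\pi - \pi'\|$. Applying $L_{Q^{\pi'}}$-Lipschitz continuity of $Q^{\pi'}$ in its action argument bounds $|Q^{\pi'}(s,c,\pi(s,c)) - Q^{\pi'}(s,c,\pi'(s,c))|$ by $L_{Q^{\pi'}}\|\pi(s,c) - \pi'(s,c)\|_2 \le L_{Q^{\pi'}}\|\pi(s,c) - \pi'(s,c)\|_1$. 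Taking absolute values of the identity above, moving them inside the expectation by the triangle inequality, and keeping the $\tfrac{1}{1-\gamma}$ prefactor yields the claim; if the bound is wanted per task, a final expectation over $c \sim \mu_c$ is immediate since every step holds pointwise in $c$.

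The argument is essentially routine and I do not anticipate a substantial obstacle; the only care needed is bookkeeping. Two points to watch: (i) using the \emph{normalized} discounted occupancy so that the $\tfrac{1}{1-\gamma}$ factor appears exactly once and $d^\pi$ is a bona fide probability measure over which one integrates; and (ii) matching norms — if $Q^{\pi'}$ is Lipschitz with respect to the Euclidean action metric (as in Assumption~\ref{as:mdp}), I would pass to $\|\cdot\|_1$ via $\|\cdot\|_2 \le \|\cdot\|_1$, which costs nothing, so the stated $\ell_1$ form is obtained without losing a constant. The only conceptual input beyond algebra is the performance difference lemma itself, which I would either cite (cf.\ \citep{maran2023tight}) or include as a short self-contained telescoping derivation.
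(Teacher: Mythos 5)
Your proof is correct and follows essentially the same route as the paper's: both start from the performance difference lemma and then exploit the Lipschitz continuity of $Q^{\pi'}$ in its action argument, passing from $\|\cdot\|_2$ to $\|\cdot\|_1$ at no cost. The only cosmetic difference is that the paper first bounds $\int_{\mathcal{A}} Q^{\pi'}(s,c,a)\,[\pi(a\mid s,c)-\pi'(a\mid s,c)]\,da$ by the Lipschitz seminorm times the Wasserstein distance $\mathcal{W}(\pi(\cdot\mid s,c),\pi'(\cdot\mid s,c))$ and only then uses determinism to reduce the Wasserstein distance between Dirac measures to the distance between actions, whereas you invoke determinism up front to collapse the inner expectations to $Q^{\pi'}(s,c,\pi(s,c))-Q^{\pi'}(s,c,\pi'(s,c))$ and apply Lipschitzness directly --- the two computations coincide for deterministic policies.
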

  
  \begin{proof}
  Given a function $f:\mathcal{A} \to \mathbb{R}$, we denote the Lipschitz semi-norm $\|f(\cdot)\|_L = \sup_{a, a' \in \mathcal{A}} \frac{|(f(a) - f(a')|}{\|a - a'\|_2}$. We have:
  \begin{align}
  J^{\pi} - J^{\pi'} & \stackrel{(i)}{=} \frac{1}{1 - \gamma} \E_{s \sim d^{\pi}} \Bigg[ \E_{a \sim \pi(\cdot \mid s, c)} [A^{\pi'}(s, c, a)]\Bigg] \\
  & = \frac{1}{1 - \gamma} \E_{s \sim d^{\pi}} \Bigg[ \E_{a \sim \pi(\cdot \mid s)} [Q^{\pi'}(s, c, a)] - V^{\pi'}(s, c)\Bigg] \\
  & = \frac{1}{1 - \gamma} \E_{s \sim d^{\pi}} \Bigg[ \int_{a \in \mathcal{A}} \pi(a \mid s)Q^{\pi'}(s, c, a) - V^{\pi'}(s, c)\Bigg] \\
  & = \frac{1}{1 - \gamma} \E_{s \sim d^{\pi}} \Bigg[ \int_{a \in \mathcal{A}} Q^{\pi'}(s, c, a)[\pi(a \mid s, c) - \pi'(a \mid s, c)]\Bigg] \\
  & \leq \frac{1}{1 - \gamma} \E_{s \sim d^{\pi}} \Bigg[ \int_{a \in \mathcal{A}} \sup_{s, c \in \mathcal{S \times C}} Q^\pi(s, c, a) [\pi(a \mid s, c) - \pi'(a \mid s, c)]\Bigg] \\
  & \stackrel{(ii)}{\leq} \frac{1}{1 - \gamma} \E_{s \sim d^{\pi}} \Bigg[ \|\sup_{s, c \in \mathcal{S \times C}} Q^{\pi'}(s, c, \cdot)\|_L \mathcal{W}(\pi(\cdot \mid s, c), \pi'(\cdot \mid s, c)) \Bigg] \\
  & \stackrel{(iii)}{\leq} \frac{L_{Q^{\pi'}}}{1 - \gamma} \E_{s \sim d^{\pi}} [\mathcal{W}(\pi(\cdot \mid s, c), \pi'(\cdot \mid s, c))] \\
  & \stackrel{(iv)}{=} \frac{L_{Q^{\pi'}}}{1 - \gamma} \E_{s, c \sim d^{\pi}} [\|\pi(\cdot \mid s, c) -\pi'(\cdot \mid s, c)\|_1]
  \end{align}
  where (i) follows from the performance difference lemma \citep{kakade2002approximately}, (ii) follows from the definition of $L_1$ Wasserstein distance, (iii) holds as $L_{Q^\pi} \geq \|\sup_{s, c \in \mathcal{S \times C}} Q^\pi(s, c, \cdot)\|_L$ and (iv) follows from both policies being deterministic. The proof is completed by taking the absolute value on both sides.
  \end{proof}
  
  So far, we have shown rates of convergence for the imitator, and connected its error to performance.
  Our main formal result can be shown by coordinating the lemmas so far presented.
  
  \begin{theorem} (Performance guarantees for active multi-task BC)
  Let Assumptions \ref{as:mdp}, \ref{as:smooth_mi} and \ref{as:submodular} hold.
  Pick $\delta \in (0, 1)$. Assume that $\pi^\star$ lies in the RKHS $\mathcal{H}_k(\mathcal{C})$ of the kernel $k$ with norm $||\pi^\star||_k < \infty$, the noise $\epsilon_n$ is conditionally $\rho$-sub-Gaussian, and $\gamma_n$ is sublinear in $n$.
  If each demonstrated task is selected according to the criterion in Equation \ref{eq:criterion}, then with probability at least $1 - \delta$ the performance difference between the expert policy $\pi^\star$ and the imitator policy $\pi_n$ after $n$ demonstrations can be upper bounded:
      \[
      J^{\pi^\star} - J^{\pi_n} \leq \frac{\sqrt{2} L_{Q^{\pi^\star}}\|\bar \sigma\|_1}{ \tau_\text{min} (1-\gamma)}
      \Big((1+\epsilon_n) \beta_n^{'2}(\delta) \Gamma_n \Big)^\frac{1}{2} = O(\gamma_{(Hn)}) / \sqrt{n} ,
      \]
  where $\epsilon_n= 8H^\frac{3}{2} (1+\max(L_\pi, L_P))^H  \max(\epsilon_0, \epsilon_\pi, \epsilon_P)$.
  Furthermore, if $\gamma_n = O(\log n)$ (e.g., for linear kernels), then $J^{\pi^\star} - J^\pi \stackrel{n \to \infty}{\to} 0$.
  \label{th:main}
  \end{theorem}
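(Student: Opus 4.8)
The strategy is to chain the lemmas above in the order \emph{performance gap} $\to$ \emph{action error} $\to$ \emph{marginal variance} $\to$ \emph{information gain}. First I would apply Lemma~\ref{lemma:performance} with $\pi = \pi_n$ the imitator (the posterior mean, which we renamed $\pi_n$) and $\pi' = \pi^\star$ the expert, giving
\[
J^{\pi^\star} - J^{\pi_n} \le \frac{L_{Q^{\pi^\star}}}{1-\gamma}\, \E_{s \sim d^{\pi_n}}\big[\|\pi^\star(s,c) - \pi_n(s,c)\|_1\big],
\]
where the expert $Q$-function is $L_{Q^{\pi^\star}}$-Lipschitz either by assumption (Assumption~\ref{as:mdp}) or, under the contraction condition $\gamma L_P(1+L_{\pi^\star}) < 1$, with the explicit constant $L_{Q^{\pi^\star}} \le L_R/(1-\gamma L_P(1+L_{\pi^\star}))$ via \citet{rachelson2010locality}.

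Next I would bound the integrand \emph{uniformly in $(s,c)$}. Combining the multi-output calibration bound (Lemma~\ref{lemma:well_calibrated} applied per action coordinate and unioned over the $|\mathcal{A}|$ coordinates, which upgrades $\beta_n$ to $\beta'_n(\delta) = \|\pi^\star\|_k + \rho\sqrt{2(\gamma_{(Hn)} + 1 + \log(|\mathcal{A}|/\delta))}$) with the uniform variance-reduction bound of Lemma~\ref{lemma:variance_convergence}, we get that, jointly for all $n$ and all $(s,c)$ with probability at least $1-\delta$,
\[
\|\pi^\star(s,c) - \pi_n(s,c)\|_1 \le \beta'_n(\delta)\,\|\bar\sigma\|_1 \Big((1+\epsilon_n)\frac{2\,\Gamma_n}{\tau_{\text{min}}^2}\Big)^{1/2}.
\]
Since the right-hand side is independent of $(s,c)$, it passes through the expectation over the random, data-dependent occupancy $d^{\pi_n}$ with no further work; substituting into the first display yields
\[
J^{\pi^\star} - J^{\pi_n} \le \frac{\sqrt{2}\,L_{Q^{\pi^\star}}\|\bar\sigma\|_1}{\tau_{\text{min}}(1-\gamma)}\big((1+\epsilon_n)\,\beta_n^{'2}(\delta)\,\Gamma_n\big)^{1/2},
\]
with $\epsilon_n$ the constant from Lemma~\ref{lemma:bounded_mi}.

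To extract the rate I would then plug in Lemma~\ref{lemma:variance_reduction}: $\Gamma_n \le \tfrac{H}{n}\gamma_{(Hn)}$. The factor $(1+\epsilon_n)$ depends on $H$ but not on $n$, hence is $O(1)$; and $\beta_n^{'2}(\delta) = O(\gamma_{(Hn)})$ since $\gamma_{(Hn)}$ is sublinear and dominates the remaining terms. Multiplying gives $(1+\epsilon_n)\beta_n^{'2}(\delta)\Gamma_n = O(\gamma_{(Hn)}^2/n)$, so the square root is $O(\gamma_{(Hn)})/\sqrt{n}$, which is the claimed bound; and when $\gamma_n = O(\log n)$ (e.g.\ linear kernels) $\gamma_{(Hn)} = O(\log n)$, so the right-hand side is $O(\log n/\sqrt{n}) \to 0$, proving the last assertion.

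\textbf{Main obstacle.} Inside the theorem the argument is pure bookkeeping; the real difficulty has already been pushed into the lemmas — notably Lemma~\ref{lemma:variance_convergence}, which must turn a \emph{per-state} information budget into a \emph{uniform} marginal-variance bound by paying the occupancy-mass factor $\tau_{\text{min}}$ twice (once to reach $s$ along a demonstrated trajectory, once to visit it under $\mu_c$), and Lemma~\ref{lemma:bounded_mi}, which uses smoothness and boundedness of the MDP to certify that the \emph{realized} stochastic demonstration carries essentially the same information as the \emph{expected} one optimized by the criterion. The only points needing care here are that every bound invoked is uniform in $(s,c)$ — so the unknown occupancy $d^{\pi_n}$ is never estimated — and that the $\gamma_{(Hn)}$ inside $\beta'_n$ is the information gain from $Hn$ samples (each demonstration contributing $H$ state--action pairs), which is precisely what makes the product $\beta_n^{'2}\Gamma_n$ collapse to $O(\gamma_{(Hn)}^2/n)$.
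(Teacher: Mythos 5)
Your proposal is correct and follows essentially the same route as the paper: Lemma~\ref{lemma:performance} to convert the performance gap into an expected $L_1$ action error, the uniform-in-$(s,c)$ bound from Lemmas~\ref{lemma:well_calibrated} and~\ref{lemma:variance_convergence} (with the union bound giving $\beta'_n$) to collapse the expectation, and Lemma~\ref{lemma:variance_reduction} to extract the $O(\gamma_{(Hn)})/\sqrt{n}$ rate. Your use of $d^{\pi_n}$ as the occupancy measure is actually the reading consistent with the statement of Lemma~\ref{lemma:performance} (the paper writes $d^{\pi^\star}$), but since the bound is uniform in $(s,c)$ this makes no difference.
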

  
  \begin{proof}
  
  \begin{align}
      J^{\pi^\star} - J^\pi & \stackrel{(i)}{=} |J^{\pi} - J^{\pi^\star}| \\
      & \stackrel{\text{Lemma }\ref{lemma:performance}}{\leq} \frac{L_{Q^{\pi^\star}}}{1 - \gamma} \E_{s \sim d^{\pi^\star}}[\|\pi_n(s, c) - \pi^\star(s, c)\|_1] \\
      & \stackrel{\text{Lemma }\ref{lemma:variance_convergence}, \ref{lemma:well_calibrated}}{\leq} \frac{L_{Q^{\pi^\star}}}{1 - \gamma} \cdot \beta'_n(\delta) \| \bar{\sigma}\|_1 \Big((1+\epsilon_n) \frac{2  \Gamma_n}{\tau_\text{min}^2}\Big)^\frac{1}{2} \\
      & = \frac{\sqrt{2} L_{Q^{\pi^\star}}\|\bar \sigma\|_1}{ \tau_\text{min} (1-\gamma)}
      \Big((1+\epsilon_n) \beta_n^{'2}(\delta) \Gamma_n \Big)^\frac{1}{2}
  \end{align}
  
  where (i) is due to the fact that $J^{\pi^\star} \geq J^\pi$ for any policy $\pi$, and the expectation fades due to uniform convergence.
  The only terms with a dependency on $n$ are $\beta'_n(\delta)=O(\gamma_{(Hn)}^\frac{1}{2})$ and $\Gamma_n=O(\gamma_{(Hn)}) / n$, which can be combined in the asymptotic notation in the Theorem.
  If $\gamma_n = O(\log n)$, then $J^{\pi^\star} - J^\pi = O(\log n) / \sqrt{n} \stackrel{n \to \infty}{\to} 0$. For a summary of magnitudes of $\gamma_n$ for common kernels, we refer to Table 3 in \citet{hubotter2024information}.
  \end{proof}
  
  \section{Guarantees in non-Lipschitz MDPs}
  \label{app:proof_expectation}
  
  The main result reported in Theorem \ref{th:main} provides anytime guarantees on the agent's performance, assuming smoothness in the MDP.
  However, it is possible to replace this assumption with a weaker one, at the cost of only retaining guarantees in expectation.
  This weaker version of the theorem can be retrieved by simply assuming smoothness on the \textit{noise}, rather than on the MDP, and leveraging results recently presented by \citet{maran2023tight}.
  
  \begin{assumption}
      The noise distribution $\boldsymbol{\epsilon}$ is $L_\ell$-TV-Lipschitz continuous.
  \label{as:smooth_noise}
  \end{assumption}
  
  This assumption is satisfied by a large class of Gaussian and sub-Gaussian distributions \citep{maran2023tight}.
  We can build upon Assumption \ref{as:submodular} and Lemma \ref{lemma:variance_reduction}, and start by providing a weaker version of Lemma \ref{lemma:variance_convergence}.
  
  \begin{lemma} (Uniform convergence of marginal variance in expectation)
  Under Assumption \ref{as:policy}, for any $n \geq 0$ and $(s,c) \in \mathcal{S \times C}$,
  \[
      \mathop{\E}_{\tau_{1:n-1} \sim \boldsymbol{\tau}(c_{1:n-1})} \sigma^2_n(s,c) \leq \frac{2 \bar{\sigma}^2 \Gamma_n}{\tau^2_\text{min}},
  \]
  where $\tilde \sigma ^2 = \max_{(s,c) \in \mathcal{S \times C}} \sigma^2_0(s,c)+\rho^2(s,c)$ and $\tau_\text{min} = \min_{s, c \in \mathcal{S \times C}} \mathop{\E}_{\tau \sim \boldsymbol{\tau}(c)} \mathbf{1}_{s \in \tau}$.
  \label{lemma:variance_convergence_exp}
  \end{lemma}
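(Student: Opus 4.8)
\emph{Proof plan.} The plan is to reuse the proof of Lemma~\ref{lemma:variance_convergence} almost verbatim, deleting the single step that requires the Lipschitz/boundedness machinery. Concretely, I would run the identical chain of (in)equalities as in that proof: start from $\sigma_n^2(s,c)=\Var[\boldsymbol{\pi}(s,c)\mid c_{1:n},\tau_{1:n}]$, rewrite it as a difference of posterior variances of the noisy expert $\boldsymbol{\tilde\pi}$, apply step~(i) there (the mean-value bound of Lemma~\ref{lemma:mean_value} together with monotonicity of the posterior variance) to obtain $\sigma_n^2(s,c)\le 2\bar\sigma^2\,\MI(\boldsymbol{\pi}(s,c);\boldsymbol{\tilde\pi}(s,c)\mid c_{1:n},\tau_{1:n})$, and then carry out the same $\tau_\text{min}^{-2}$ importance-reweighting steps that turn this pointwise mutual information into an expectation over $c\sim\mu_c$, $(s_0,\dots)\sim\boldsymbol{\tau}(c)$ and a demonstrated trajectory $\tau\sim\boldsymbol{\tau}(c)$, summed over $t=0,\dots,H-1$. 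This is exactly the derivation down to \eqref{eq:resume_proof}, and crucially it invokes only Assumption~\ref{as:policy}: every step conditions on, rather than integrates over, the realized trajectories $\tau_{1:n-1}$, so it holds pointwise in $\tau_{1:n-1}$ and never touches Assumptions~\ref{as:mdp} or~\ref{as:smooth_mi}.

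The only deviation is at \eqref{eq:resume_proof}. Where the proof of Lemma~\ref{lemma:variance_convergence} spends Lemma~\ref{lemma:bounded_mi} to replace the mutual information at the realized $\tau_{1:n-1}$ by its expectation at the multiplicative cost $(1+\epsilon_n)$, I would instead simply take $\mathop{\E}_{\tau_{1:n-1}\sim\boldsymbol{\tau}(c_{1:n-1})}$ of both sides of \eqref{eq:resume_proof}. By linearity the right-hand side becomes
\[
\frac{2\bar\sigma^2}{\tau_\text{min}^2}\;\mathop{\E}_{\substack{\tau_{1:n-1}\sim\boldsymbol{\tau}(c_{1:n-1})\\ c\sim\mu_c,\,\tau\sim\boldsymbol{\tau}(c),\,(s_0,\dots)\sim\boldsymbol{\tau}(c)}}\ \sum_{t=0}^{H-1}\MI(\boldsymbol{\pi}(s_t,c);\boldsymbol{\tilde\pi}(\tau,c)\mid c_{1:n},\tau_{1:n}),
\]
and now bounding the inner $c$-expectation by the $\max_{c'\in\mathcal C}$ that defines $\psi_n$ — precisely the last two lines of the proof of Lemma~\ref{lemma:variance_convergence} — the whole expression is at most $\frac{2\bar\sigma^2\Gamma_n}{\tau_\text{min}^2}$. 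Since the left-hand side of \eqref{eq:resume_proof} is $\sigma_n^2(s,c)$, its expectation over $\tau_{1:n-1}$ is $\mathop{\E}_{\tau_{1:n-1}}\sigma_n^2(s,c)$, which gives the claimed bound without the $(1+\epsilon_n)$ factor.

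I expect no genuine obstacle; the two things to watch are bookkeeping. First, one must confirm that the chain down to \eqref{eq:resume_proof} is valid for every fixed realization of $\tau_{1:n-1}$, so that inserting the outer expectation and applying Fubini is legitimate — this is immediate from the remark above. Second, the conditioning index inside the mutual information ($\tau_{1:n}$) does not literally coincide with the one in the definition of $\Gamma_n$ ($\tau_{1:n-1}$), but this is harmless because additionally conditioning on $\tau_n$ only decreases the mutual information, so the bound is preserved. In short, this lemma isolates the observation that averaging over trajectory stochasticity makes the perturbation analysis of Lemmas~\ref{lemma:bounded_mi} and~\ref{lemma:variance_convergence} — and hence Assumptions~\ref{as:mdp} and~\ref{as:smooth_mi} — unnecessary, at the price of an in-expectation rather than high-probability guarantee.
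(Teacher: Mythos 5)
Your proposal is correct and matches the paper's own proof: the paper likewise resumes from Inequality \ref{eq:resume_proof} (which indeed only requires Assumption \ref{as:policy}), takes the expectation over $\tau_{1:n-1} \sim \boldsymbol{\tau}(c_{1:n-1})$, and bounds the result by the $\max_{c' \in \mathcal{C}}$ defining $\Gamma_n$, thereby dropping the $(1+\epsilon_n)$ factor and the need for Assumptions \ref{as:mdp} and \ref{as:smooth_mi}. Your remark on the $\tau_{1:n}$ versus $\tau_{1:n-1}$ conditioning index is a detail the paper glosses over, and your resolution via submodularity of the information gain is the right one.
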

  
  \begin{proof}
  
  We resume from Inequality \ref{eq:resume_proof} in the proof of Lemma \ref{lemma:variance_convergence}:
  
  \begin{equation}
      \sigma_n^2(s,c) \leq \frac{2 \bar{\sigma}^2}{\tau^2_\text{min}} 
      \mathop{\E}_{\substack{c \sim \mu_c \\ \tau \sim \boldsymbol{\tau}(c) \\ (s_0, \dots) \sim \boldsymbol{\tau}(c)}} 
      \sum_{t=0}^{H-1}\MI(\boldsymbol{\pi}(s_t,c); \boldsymbol{\tilde \pi} (\tau,c) \mid c_{1:n}, \tau_{1:n})
  \end{equation}
  
  Therefore, 
  
  \begin{align}
      \mathop{\E}_{\tau_{1:n-1} \sim \boldsymbol{\tau}(c_{1:n-1})} \sigma^2_n(s,c)
      & \leq
      \mathop{\E}_{\substack{\tau_{1:n-1} \sim \boldsymbol{\tau}(c_{1:n-1}) \\ \tau \sim \boldsymbol{\tau}(c) \\ c \sim \mu_c, \; (s_0, \dots) \sim \boldsymbol{\tau}(c)}}
      \frac{2 \bar{\sigma}^2}{\tau^2_\text{min}}
      \sum_{t=0}^{H-1}
      \MI(\boldsymbol{\pi}(s_t,c); \boldsymbol{\tilde \pi}(\tau, c)  \mid c_{1:n}, \tau_{1:n}) \\
      & \leq \max_{c' \in \mathcal{C}}\mathop{\E}_{\substack{\tau_{1:n-1} \sim \boldsymbol{\tau}(c_{1:n-1}) \\ \tau' \sim \boldsymbol{\tau}(c') \\ c \sim \mu_c, \; (s_0, \dots) \sim \boldsymbol{\tau}(c)}}
      \frac{2 \bar{\sigma}^2}{\tau^2_\text{min}}
      \sum_{t=0}^{H-1}
      \MI(\boldsymbol{\pi}(s_t,c); \boldsymbol{\tilde \pi}(\tau', c')  \mid c_{1:n}, \tau_{1:n}) \\
      & = \frac{2 \bar{\sigma}^2 \Gamma_n}{\tau^2_\text{min}}.
  \end{align}
  \end{proof}
  
  Having bounded variance at each round, this time in expectation, we can invoke Lemma \ref{lemma:well_calibrated} to bound the expected distance to the optimal policy with high probability:
  \begin{equation}
  \mathop{\E}_{\tau_{1:n-1} \sim \boldsymbol{\tau}(c_{1:n-1})}  \|\pi^\star(s, c) - \mu_n(s, c)\|_1 \leq \beta'_n(\delta) \|\bar{\sigma}\|_1 \Big( \frac{2  \Gamma_n}{\tau^2_\text{min}} \Big)^\frac{1}{2}
  \end{equation}
  
  Instead of leveraging bounds for the imitator's performance in Lipschitz-smooth settings, we can instead use the fact that the expert's actions are corrupted by smooth noise.
  In this setting, it is instead possible to control the suboptimality of the imitator with respect to the \textit{noisy} expert. We report the following Theorem from \citet{maran2023tight}, and refer to the original work for the proof.
  
  \begin{lemma}
  Let $\pi^\star$, $\boldsymbol{\tilde \pi}$ and $\pi$ denote the expert, noisy expert and imitator policy, respectively. If Assumption \ref{as:smooth_noise} holds, then:
  \[
  J^{\boldsymbol{\tilde \pi}} - J^\pi \leq \frac{2 L_\ell Q_\text{max}}{1 - \gamma} \E_{s \sim \mu_{\boldsymbol{\tilde \pi}}}[\mathcal{W}(\pi^\star(\cdot \mid s), \pi(\cdot \mid s))] ,
  \]
  where $Q_\text{max}=\max_{(s,c,a) \in \mathcal{S} \times \mathcal{C} \times \mathcal{A}} |Q(s,a)|^\pi$ and $\mathcal{W}$ represents the Wasserstein $1$-distance.
  \end{lemma}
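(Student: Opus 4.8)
The plan is to mirror the argument of Lemma~\ref{lemma:performance}, replacing the Lipschitz continuity of $Q^{\pi'}$ (a smoothness property of the MDP) with the TV-smoothness of the noise granted by Assumption~\ref{as:smooth_noise}. First I would invoke the performance difference lemma \citep{kakade2002approximately} with the \emph{noisy expert} playing the role of the ``new'' policy, so that its discounted state occupancy $\mu_{\boldsymbol{\tilde\pi}}$ appears out front: $J^{\boldsymbol{\tilde\pi}} - J^{\pi} = \frac{1}{1-\gamma}\,\E_{s \sim \mu_{\boldsymbol{\tilde\pi}}}\!\left[ \E_{a \sim \boldsymbol{\tilde\pi}(\cdot\mid s)} Q^{\pi}(s,c,a) - \E_{a \sim \pi(\cdot\mid s)} Q^{\pi}(s,c,a)\right]$, using $V^\pi(s) = \E_{a\sim\pi(\cdot\mid s)} Q^\pi(s,c,a)$. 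The inner bracket is the integral of $Q^{\pi}(s,c,\cdot)$ against the signed measure $\boldsymbol{\tilde\pi}(\cdot\mid s) - \pi(\cdot\mid s)$, so by H\"older's inequality it is at most $2\,Q_{\max}$ times the total-variation distance between $\boldsymbol{\tilde\pi}(\cdot\mid s)$ and $\pi(\cdot\mid s)$ --- crucially requiring nothing of $Q^{\pi}$ beyond boundedness, which is exactly what lets this bound survive in non-Lipschitz MDPs.

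The crux is to control this total-variation distance by $L_\ell\,\mathcal{W}(\pi^\star(\cdot\mid s), \pi(\cdot\mid s))$. I would use that $\boldsymbol{\tilde\pi}(\cdot\mid s)$ is the law of $\pi^\star(s,c) + \boldsymbol{\epsilon}(s,c)$, i.e.\ the convolution of the clean action law with the noise kernel, and that the deployed imitator is driven by the same additive noise, so that $\pi(\cdot\mid s)$ is likewise a convolution of a clean action law with $\boldsymbol{\epsilon}$. Assumption~\ref{as:smooth_noise} states precisely that translating the noise kernel by a vector $v$ moves it by at most $L_\ell\|v\|$ in TV. Writing $\boldsymbol{\tilde\pi}(\cdot\mid s) - \pi(\cdot\mid s)$ as the difference of two mixtures of shifted noise kernels indexed by an optimal coupling of $\pi^\star(\cdot\mid s)$ and $\pi(\cdot\mid s)$, I would pull the TV inside the mixture by convexity of the total-variation distance and integrate the pointwise bound $\|p_\epsilon(\cdot - x)-p_\epsilon(\cdot - y)\|_{\mathrm{TV}}\le L_\ell\|x-y\|$ against that coupling, which yields $\|\boldsymbol{\tilde\pi}(\cdot\mid s) - \pi(\cdot\mid s)\|_{\mathrm{TV}} \le L_\ell\,\mathcal{W}(\pi^\star(\cdot\mid s), \pi(\cdot\mid s))$. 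Substituting back and taking absolute values on both sides gives the claimed one-sided inequality.

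I expect the coupling/convexity step to be the main obstacle, since it is where the ``smoothing'' intuition is made rigorous: a single action error is amplified arbitrarily by a non-Lipschitz $Q^{\pi}$, yet once both policies are blurred by the same smooth noise the return becomes a genuinely Lipschitz functional of the underlying clean action distribution in Wasserstein geometry. A secondary point is the bookkeeping of the state-and-task conditioning (the task $c$ is fixed along a trajectory and rides along passively) and the convention factor in the definition of TV that produces the constant $2$ --- neither is deep, but both must be stated carefully to recover the constant in the lemma. Since this statement is quoted from \citet{maran2023tight}, I would finally cross-check that their hypotheses (bounded $Q$, $L_\ell$-TV-Lipschitz noise) match the ones invoked here and defer the routine remaining estimates to their proof.
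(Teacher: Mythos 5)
The paper does not actually prove this lemma: it is imported verbatim from \citet{maran2023tight}, with the text ``we refer to the original work for the proof,'' so there is no in-paper argument to compare against. Your reconstruction is nonetheless the correct one and follows the same route as the cited source: performance difference lemma with the noisy expert as the roll-in policy (which is what puts $\mu_{\boldsymbol{\tilde\pi}}$ out front), a H\"older bound of the advantage integral by $2Q_{\max}$ times the total-variation distance (the factor $2$ being the $L^1$-vs-TV convention you mention), and the TV-to-Wasserstein conversion via convexity of TV over an optimal coupling of the two clean action laws convolved with the same $L_\ell$-TV-Lipschitz noise kernel. You are also right to flag the one genuinely delicate point: the convolution step requires the imitator's action distribution appearing in the TV term to be noise-injected as well, since the TV distance between the smoothed $\boldsymbol{\tilde\pi}(\cdot\mid s)$ and a Dirac would be $1$ and the bound vacuous. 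That is indeed how the theorem is set up in \citet{maran2023tight}, and it is the reading needed for the paper's subsequent specialization, where both $\pi^\star$ and $\pi_n$ are deterministic and $\mathcal{W}$ collapses to the distance between the two action maps; the paper's notation $J^{\pi}$ silently assumes this convention. No gap beyond what you already defer to the original proof.
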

  
  As the expert $\pi^\star$ and the imitator $\pi_n$ are both deterministic, this implies that
  \begin{equation}
  J^{\boldsymbol{\tilde \pi}} - J^\pi_n \leq \frac{2 L_\ell Q_\text{max}}{1 - \gamma} \E_{s \sim \mu_{\boldsymbol{\tilde \pi}}} \|(\pi^\star(\cdot \mid s), \pi(\cdot \mid s)) \|_1.
  \end{equation}
  
  By invoking this Lemma, we can thus conclude that, with probability at least $1-\delta$
  \begin{equation}
  \mathop{\E}_{\tau_{1:n-1} \sim \boldsymbol{\tau}(c_{1:n-1})} J^{\boldsymbol{\tilde \pi}} - J^{\pi_n} \leq
  \frac{2^\frac{3}{2} L_\ell Q_\text{max} \| \bar \sigma \|_1}{\tau_\text{min}(1 - \gamma)} \beta'_n(\delta) \Gamma_n^\frac{1}{2}
  = O (\gamma_{(Hn)}n^{-\frac{1}{2}}).
  \end{equation}
  Therefore, if $\gamma_n = O(\log n)$, then $\mathop{\E}_{\tau_{1:n-1} \sim \boldsymbol{\tau}(c_{1:n-1})} J^{\boldsymbol{\tilde \pi}} - J^\pi = O(\frac{\log n}{\sqrt{n}}) \stackrel{n \to \infty}{\to} 0$.
  While these performance guarantees only hold in expectation, they arise from minimal assumptions, mostly regarding the policy class and the perturbation noise, and can thus be applied to arbitrary MDPs.
  
  \section{Practical Objective}
  \label{app:full_criterion}
  
  Following up on the approximations reported in Section \ref{sec:practical}, we present the empirical estimate of the objective that is used through experiments.
  In particular, we show how the expectations in Equation \ref{eq:criterion_rephrased} may be approximated with finite samples.
  The original criterion is expressed as
  
  \begin{equation}
      c_n = \argmin_{c' \in \mathcal{C}} \phi_n(c') = \argmin_{c' \in \mathcal{C}} \mathop{\E}_{\substack{\tau_{1:n-1} \sim \boldsymbol{\tau}(c_{1:n-1}) , \;\tau' \sim \boldsymbol{\tau}(c')\\ c \sim \mu_c, \; (s_0, \dots) \sim \boldsymbol{\tau}(c)}} \sum_{t=0}^{H-1} \entropy(\boldsymbol{\pi}(s_t, c) \mid  c', \tau' , c_{1:n-1}, \tau_{1:n-1}).
  \end{equation}
  
  An empirical estimate can be derived as follows:
  
  \begin{align}
      \phi_n(c') &= \mathop{\E}_{\substack{\tau_{1:n-1} \sim \boldsymbol{\tau}(c_{1:n-1}) , \;\tau' \sim \boldsymbol{\tau}(c')\\ c \sim \mu_c, \; (s_0, \dots) \sim \boldsymbol{\tau}(c)}} \sum_{t=0}^{H-1} \entropy(\boldsymbol{\pi}(s_t, c) \mid  c', \tau' , c_{1:n-1}, \tau_{1:n-1}) \\
      & \stackrel{(i)}{\approx} \mathop{\E}_{\substack{\tau' \sim \boldsymbol{\tau}(c')\\ c \sim \mu_c, \; (s_0, \dots) \sim \boldsymbol{\tau}(c)}} \sum_{t=0}^{H-1} \entropy(\boldsymbol{\pi}(s_t, c) \mid  c', \tau' , c_{1:n-1}, \hat \tau_{1:n-1})\\
      &\stackrel{(ii)}{\approx} \frac{1}{|\mathcal{\hat C}|}\sum_{c \in \mathcal{\hat C}} \mathop{\E}_{\substack{\tau' \sim \boldsymbol{\tau}(c')\\ (s_0, \dots) \sim \boldsymbol{\tau}(c)}} \sum_{t=0}^{H-1} \entropy(\boldsymbol{\pi}(s_t, c) \mid  c', \tau' , c_{1:n-1}, \tau_{1:n-1}) \\
      &= \frac{1}{|\mathcal{\hat C}|}\sum_{c \in \mathcal{\hat C}} \mathop{\E}_{\substack{\tau' \sim \boldsymbol{\hat \tau}\\ (s_0, \dots) \sim \boldsymbol{\hat \tau}}} \frac{\boldsymbol{\tau}(\tau'|c')}{\boldsymbol{\hat \tau}(\tau')} \frac{\boldsymbol{\tau}((s_0, \dots)|c)}{\boldsymbol{\hat \tau}((s_0, \dots))} \sum_{t=0}^{H-1} \entropy(\boldsymbol{\pi}(s_t, c) \mid  c', \tau' , c_{1:n-1}, \tau_{1:n-1}) \\
      &= \frac{1}{|\mathcal{\hat C}|}\sum_{c \in \mathcal{\hat C}} \mathop{\E}_{\substack{\tau' \sim \boldsymbol{\hat \tau}\\ (s_0, \dots) \sim \boldsymbol{\hat \tau}}} w(\tau', c') w((s_0, \dots),c) \sum_{t=0}^{H-1} \entropy(\boldsymbol{\pi}(s_t, c) \mid  c', \tau' , c_{1:n-1}, \tau_{1:n-1}) \\
      &\stackrel{(iii)}{\approx} \frac{1}{|\mathcal{\hat C}|(n-1)^2}\sum_{c \in \mathcal{\hat C}}\hspace{-1ex} 
      \sum_{\substack{\tau' \in \hat\tau_{1:n-1} \\ (s_0, \dots) \in \hat\tau_{1:n-1}}}\hspace{-4ex} 
      w(\tau', c') w((s_0, \dots),c) \sum_{t=0}^{H-1}\! \entropy(\boldsymbol{\pi}(s_t, c) \mid  c', \tau' , c_{1:n-1}, \tau_{1:n-1}),
  \end{align}
  where (i) uses a single sample to estimate the expectation over past trajectories, (ii) uses a sample-based approximation to the target task distribution $\mu_c$, and (iii) uses the importance sampling trick introduced in Section \ref{sec:practical}, with $w(\tau, c) = \frac{(n-1)\prod_{t=0}^{H-1} \boldsymbol{\tilde \pi}(a_t|s_t, c)}{\sum_{i=0}^{n-1} \prod_{t=0}^{H-1} \boldsymbol{\tilde \pi}(a_t|s_t, c_i)}$.
  This final approximate objective does not involve expectations, and can be efficiently computed.
  The complexity of evaluating the criterion for a single task $c'$ scales linearly with the number of samples in $\mathcal{\hat C}$ and quadratically with the number of rounds $n$.
  However, the dependency on the number of rounds can be removed by evaluating the second sum over a fixed number of trajectories sampled among $\tau_{1:n-1}$, ensuring that the complexity does not depend on the round.

  \section{Additional results for AMF-GP}
  \label{app:additional_gp}
  
  Figure \ref{fig:gp} only reports full return curves for two representative pre-training settings, namely those involving $6/12$ and $12/12$ demonstrated tasks.
  We here report full results for each task allocation, spanning from $1/12$ to $12/12$ demonstrated tasks.
  For each setting, we report both average multi-task return and average policy entropy curves.
  
  \begin{figure}
      \centering
  
      \begin{minipage}{0.33\linewidth}
      \centering    \includegraphics[width=\linewidth]{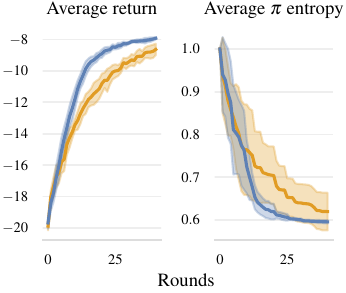}
      \small{1/12 tasks}
      \end{minipage}%
      \hfill
      \begin{minipage}{0.33\linewidth}
      \centering
      \includegraphics[width=\linewidth]{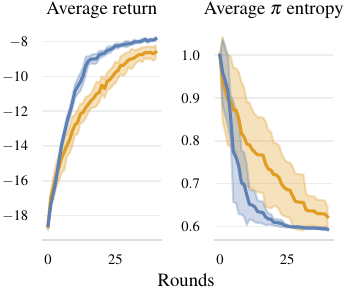}
      \small{2/12 tasks}
      \end{minipage}%
      \hfill
      \begin{minipage}{0.33\linewidth}
      \centering    \includegraphics[width=\linewidth]{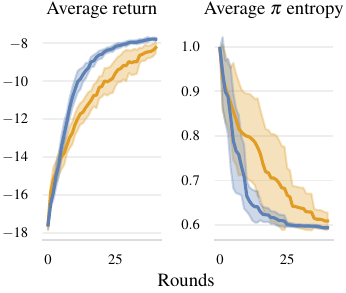}
      \small{3/12 tasks}
      \end{minipage}
      \vspace{2mm}
      
      \begin{minipage}{0.33\linewidth}
      \centering
      \includegraphics[width=\linewidth]{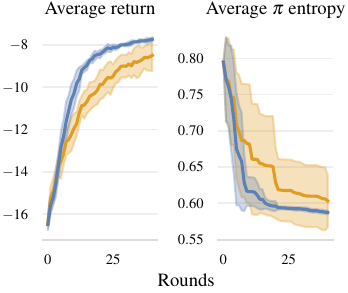}
      \small{4/12 tasks}
      \end{minipage}%
      \hfill
      \begin{minipage}{0.33\linewidth}
      \centering    \includegraphics[width=\linewidth]{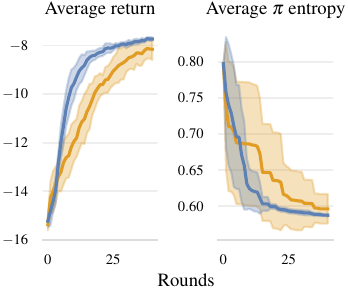}
      \small{5/12 tasks}
      \end{minipage}%
      \hfill
      \begin{minipage}{0.33\linewidth}
      \centering
      \includegraphics[width=\linewidth]{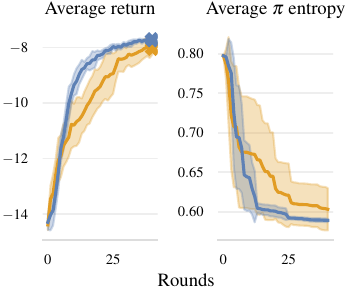}
      \small{6/12 tasks}
      \end{minipage}
      \vspace{2mm}
      
      \begin{minipage}{0.33\linewidth}
      \centering    \includegraphics[width=\linewidth]{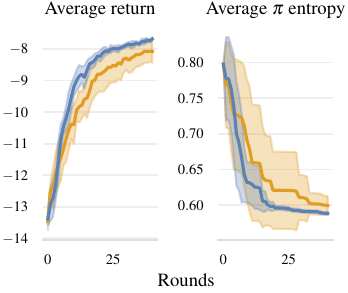}
      \small{7/12 tasks}
      \end{minipage}%
      \hfill
      \begin{minipage}{0.33\linewidth}
      \centering
      \includegraphics[width=\linewidth]{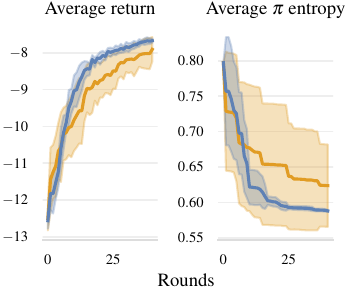}
      \small{8/12 tasks}
      \end{minipage}%
      \hfill
      \begin{minipage}{0.33\linewidth}
      \centering    \includegraphics[width=\linewidth]{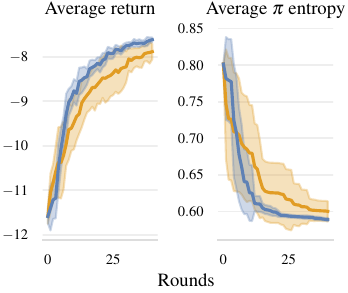}
      \small{9/12 tasks}
      \end{minipage}
      \vspace{2mm}
      
      \begin{minipage}{0.33\linewidth}
      \centering
      \includegraphics[width=\linewidth]{plots/v0_gp_9.pdf}
      \small{10/12 tasks}
      \end{minipage}%
      \hfill
      \begin{minipage}{0.33\linewidth}
      \centering    \includegraphics[width=\linewidth]{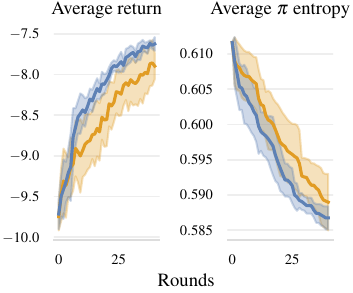}
      \small{11/12 tasks}
      \end{minipage}%
      \hfill
      \begin{minipage}{0.33\linewidth}
      \centering
      \includegraphics[width=\linewidth]{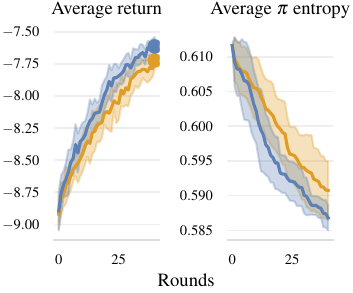}
      \small{12/12 tasks}
      \end{minipage}
  
      \vspace{2mm}
      \centering
      \textcolor{ourblue}{\rule[2pt]{10pt}{3pt}} \small{AMF-GP} \quad
      \textcolor{ouryellow}{\rule[2pt]{10pt}{3pt}} \small{Uniform}
  
      \caption{Additional results in GP settings for a 2D integrator (see Figure \ref{fig:2d_integrator}). AMF-GP results in improved sample efficiency across all pre-training regimes, and is particularly effective for skewed pre-training distributions (e.g., when pre-training demonstrations have been allocated to 1/12 or 6/12 tasks).}
      \label{fig:gp_all}
  \end{figure}

  \section{Additional results for AMF-NN}
  \label{app:additional_nn}

  Results in Figure \ref{fig:neural} are computed over two representative pre-training distributions: one allocating pre-training demonstrations uniformly over all tasks, the other one only demonstrating the first two tasks.
  We report these results again, and compare them with those for several other pre-training distributions in which tasks have been shuffled.
  Results are reported for FrankaKitchen in Figure \ref{fig:add_neural_kitchen} and for Metaworld in Figure \ref{fig:add_neural_metaworld}, and are consistent with patterns observed in Figure \ref{fig:neural}.
  
  \begin{figure}[H]
      \centering
      \begin{minipage}{1.0\linewidth}
      \centering
      \begin{minipage}{0.2\linewidth}
      
  \begin{center}
    \adjustbox{width=\linewidth}{\input{plots/v40_kitchen_0_0_0.pgf}}
  \end{center}

      \centering
      $[8,8,0,0,0]$
      \end{minipage}%
      \begin{minipage}{0.2\linewidth}
      
  \begin{center}
    \adjustbox{width=\linewidth}{\input{plots/v40_kitchen_0_0_8.pgf}}
  \end{center}

      \centering
      $[0,8,8,0,0]$
      \end{minipage}%
      \begin{minipage}{0.2\linewidth}
      
  \begin{center}
    \adjustbox{width=\linewidth}{\input{plots/v40_kitchen_0_8_8.pgf}}
  \end{center}

      \centering
      $[0,0,8,8,0]$
      \end{minipage}%
      \begin{minipage}{0.2\linewidth}
      
  \begin{center}
    \adjustbox{width=\linewidth}{\input{plots/v40_kitchen_8_8_0.pgf}}
  \end{center}

      \centering
      $[0,0,0,8,8]$
      \end{minipage}%
      \begin{minipage}{0.2\linewidth}
      
  \begin{center}
    \adjustbox{width=\linewidth}{\input{plots/v40_kitchen_3_3_3.pgf}}
  \end{center}

      \centering
      $[3,3,3,3,3]$
      \end{minipage}%
      \vspace{5mm}
      \\
      \centering
      \textcolor{ourblue}{\rule[2pt]{10pt}{3pt}} AMF-NN \quad
      \textcolor{ouryellow}{\rule[2pt]{10pt}{3pt}} Uniform, \red{with adaptive prior} \quad
      \textcolor{ourred}{\rule[2pt]{10pt}{3pt}} Uniform
  
      \caption{Additional results for AMF-NN in FrankaKitchen with state inputs. We evaluate several allocations of the pre-training demonstrations, as labeled below each plot (e.g., the label $[8,8,0,0,0]$ indicates that 8 demonstrations were provided for each of the first two tasks each, and none for the remaining tasks).}
      \label{fig:add_neural_kitchen}
      \end{minipage}
      
      \vspace{1cm}
      
      \begin{minipage}{1.0\linewidth}
      \centering
      \begin{minipage}{0.2\linewidth}
      
  \begin{center}
    \adjustbox{width=\linewidth}{\input{plots/v40_metaworld_8_8.pgf}}
  \end{center}

      \centering
      $[8,8,0,0]$
      \end{minipage}%
      \begin{minipage}{0.2\linewidth}
      
  \begin{center}
    \adjustbox{width=\linewidth}{\input{plots/v40_metaworld_0_8.pgf}}
  \end{center}

      \centering
      $[0,8,8,0]$
      \end{minipage}%
      \begin{minipage}{0.2\linewidth}
      
  \begin{center}
    \adjustbox{width=\linewidth}{\input{plots/v40_metaworld_0_0.pgf}}
  \end{center}

      \centering
      $[0,0,8,8]$
      \end{minipage}%
      \begin{minipage}{0.2\linewidth}
      
  \begin{center}
    \adjustbox{width=\linewidth}{\input{plots/v40_metaworld_2_2.pgf}}
  \end{center}

      \centering
      $[2,2,2,2]$
      \end{minipage}%
 
      \vspace{5mm}
      \centering
      \textcolor{ourblue}{\rule[2pt]{10pt}{3pt}} AMF-NN ($\sigma = 1e^{-3}$)\quad
      \textcolor{ouryellow}{\rule[2pt]{10pt}{3pt}} Uniform, \red{with adaptive prior} \quad
      \textcolor{ourred}{\rule[2pt]{10pt}{3pt}} Uniform
  
      \caption{Additional results for AMF-NN in Metaworld with state inputs. We evaluate several allocations of the pre-training demonstrations, as labeled below each plot (e.g., the label $[8,8,0,0]$ indicates that 8 demonstrations were provided for each of the first two tasks each, and none for the remaining tasks).}
      \label{fig:add_neural_metaworld}
      \end{minipage}
  \end{figure}

  \section{\red{AMF with generalist policies}}
  \label{app:octo}

    \begin{wrapfigure}[13]{r}{0.5\linewidth}
      \vspace{-16mm}
      \centering
      \begin{minipage}{0.5\linewidth}
      \centering
      \includegraphics[width=0.85\linewidth]{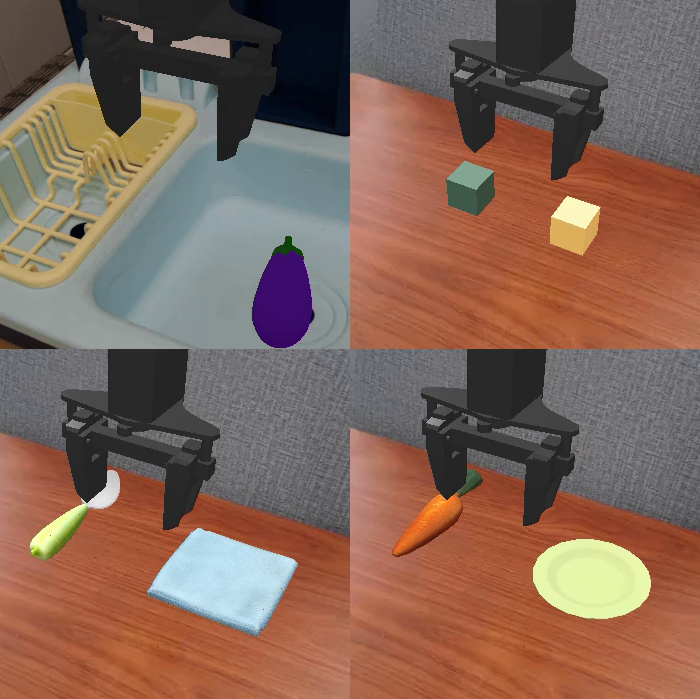}
  
      \vspace{2mm}
  
      \textcolor{ourblue}{\rule[2pt]{5pt}{3pt}} \small{AMF-NN} \\
      \textcolor{ouryellow}{\rule[2pt]{5pt}{3pt}} \small{Uniform, \red{with adaptive prior}}\\
      \textcolor{ourred}{\rule[2pt]{5pt}{3pt}} \small{Uniform} \\
  
      \end{minipage}%
      \begin{minipage}{0.5\linewidth}
      
  \begin{center}
    \adjustbox{width=\linewidth}{\input{plots/v27_widowx.pgf}}
  \end{center}

      \end{minipage}
  
      \caption{Evaluation on life-like WidowX tasks. AMF-NN can be applied to large-scale settings.}
      \label{fig:octo}
  \end{wrapfigure}
    \red{This section investigates scaling our evaluation to recently published open-source generalist policies.
  For this purpose, we choose Octo \citep{octo_2023}.
  This model relies on a transformer backbone for integrating multimodal information (in the form of state sensors, camera images and text or RGB task descriptions), and uses a diffusion-based policy head for action prediction \citep{chi2023diffusionpolicy}.
  For computational reasons, we will focus on fine-tuning the action head alone.
  Octo is pre-trained on a large-scale real-world robotic dataset \citep{open_x_embodiment_rt_x_2023}, and is thus designed for inference on physical hardware.
  Nonetheless, a recently proposed evaluation suite enables simulated evaluations that statistically correlate with real-world results \citep{li24simpler}.}
  \red{We thus collect rollouts from a pre-trained Octo agent on the WidowX tasks, and filter them to only include successes, akin to self-distillation schemes \cite{bousmalis2024robocat}.
  On availability of such self-supervised demonstrations, we then apply AMF-NN for $5$ iterations, providing $2$ demonstrations in each round.
  The results are reported in Figure \ref{fig:octo}.
  }
  \red{As all evaluation tasks are largely demonstrated in the pre-training dataset \citep{open_x_embodiment_rt_x_2023}, and the evaluation is significantly noisier with respect to other benchmarks, we find that the performance of AMF-NN falls within confidence intervals of uniform task collection, confirming the trend we observed for uniform pre-training distributions in Figure \ref{fig:neural}.
  Nonetheless, we observe that it constitutes an effective method for data selection, and can be applied as a drop-in replacement for fine-tuning of off-the-shelf models.\!%
  \footnote{This evaluation also reports an interesting trend, that is a vast reduction in catastrophic forgetting, to the point that an adaptive prior is almost not necessary.
  This anecdotal evidence can be seen as an instance of a general trend of mitigated catastrophic forgetting in large models \citep{dyer2022effect}.}}

  \section{Uncertainty ablation}
  \label{app:uncertainty}
  \begin{wrapfigure}[14]{r}{0.44\linewidth}
      \vspace{-8mm}
      \centering
      \begin{minipage}{0.45\linewidth}
      
  \begin{center}
    \adjustbox{width=\linewidth}{\input{plots/v40_metaworld_unc_2_2.pgf}}
  \end{center}

      \end{minipage}%
      \begin{minipage}{0.45\linewidth}
      
  \begin{center}
    \adjustbox{width=\linewidth}{\input{plots/v40_metaworld_unc_8_8.pgf}}
  \end{center}

      \end{minipage}%
      \vspace{5mm}
      \\
      \centering
      \textcolor{ourblue}{\rule[2pt]{10pt}{3pt}} Last-layer gradient \quad
      \textcolor{ourgrey}{\rule[2pt]{10pt}{3pt}} Last-layer \quad
      \textcolor{ourgreen}{\rule[2pt]{10pt}{3pt}} Dropout
      \caption{Additional results for AMF-NN in Metaworld with state inputs and different uncertainty quantification techniques.}
      \label{fig:uncertainty_metaworld}
  \end{wrapfigure}

  Section \ref{sec:uncertainty} evaluates alternative uncertainty quantification schemes in FrankaKitchen for two representative pre-training distributions.
  This Section extends these results to include results for Metaworld (see Figure \ref{fig:uncertainty_metaworld}).
  Results are consistent with those so far reported, suggesting that loss gradient embeddings are an important component for the empirical performance of AMF-NN.

  \section{Mitigating forgetting}
  \label{app:forget}
  
  The ability of neural networks to adapt to shifts in training distribution while retaining information is an important object of interest in lifelong and continual learning \citep{wang2024comprehensive}.
  In general, learned models display a trade-off between their ability to integrate novel information, and their memory of previously observed training samples.
  Arguably, common neural network architectures can easily fit new data (save for loss of plasticity \citep{lyle2023understanding}), but are known to forget previous information, often catastrophically.
  This problem is of utmost relevance in our setting, in which the pre-trained network is not just leveraged as a useful initialization, but may already capable of solving some tasks.
  Hence, the fine-tuning procedure should be careful not to disrupt this ability.

      \begin{figure}[b]
      \centering
      \begin{minipage}{0.2\linewidth}
      
  \begin{center}
    \adjustbox{width=\linewidth}{\input{plots/v40_kitchen_forget_0_0.pgf}}
  \end{center}

      \centering
      \end{minipage}%
      \begin{minipage}{0.2\linewidth}
      
  \begin{center}
    \adjustbox{width=\linewidth}{\input{plots/v40_kitchen_forget_3_3.pgf}}
  \end{center}

      \centering
      \end{minipage}%
      \begin{minipage}{0.2\linewidth}
      
  \begin{center}
    \adjustbox{width=\linewidth}{\input{plots/v40_metaworld_forget_8_8.pgf}}
  \end{center}

      \centering
      \end{minipage}%
      \begin{minipage}{0.2\linewidth}
      
  \begin{center}
    \adjustbox{width=\linewidth}{\input{plots/v40_metaworld_forget_2_2.pgf}}
  \end{center}

      \centering
      \end{minipage}%
  
      \vspace{2mm}
      \centering
      \textcolor{ourblue}{\rule[2pt]{10pt}{3pt}} Adaptive Prior \quad
      \textcolor{ourgreen}{\rule[2pt]{10pt}{3pt}} EWC \quad
      \textcolor{ourgrey}{\rule[2pt]{10pt}{3pt}} L2 regularization     \quad
      \textcolor{ouryellow}{\rule[2pt]{10pt}{3pt}} Baseline
  
      \caption{\red{Performance of AMF-NN with several techniques to mitigate forgetting. Darker shades represent stronger regularization coefficients. L2, EWC regularization are not effective in this setting.}}
      \label{fig:forgetting}
  \end{figure}

  Several methods aimed at mitigating forgetting can be traced back to rehearsal \citep{riemer2018learning, chaudhry2019tiny} and regularization \citep{kirkpatrick2017overcoming} strategies.
  While rehearsal approaches are often effective, they also require access to pre-training data, which is unrealistic in our setting.
  Hence we consider two common regularization technique, namely L2-regularization to the pre-trained weights, and EWC \citep{kirkpatrick2017overcoming}.
  The latter can be seen as a more nuanced version of the former, which adaptively scales the regularization strength according to the curvature of the loss landscape.
  \red{Unfortunately, we find these methods to be insufficient in our setting, as reported in Figure \ref{fig:forgetting}.
  When coupled with large regularization weights, the asymptotical performance of L2-regularization and EWC is significantly limited.}
  When regularization weights are too low, they recover the performance of a naive baseline.
  Intermediate values were found to interpolate between the two behaviors, without addressing the forgetting issue.
  \red{As a result, the policy quickly loses information on its pre-training tasks, thus rendering adaptive data selection strategies ineffective.}

  \red{This motivates our adoption of a novel technique to alleviate forgetting, which we refer to as Adaptive Prior.
  Inspired by previous works in behavioral priors \citep{bagatella2022sfp} and offline RL \citep{kumar2020conservative}, we linearly combine the policy's output with that of a \textit{prior} (in practice, a frozen copy $\boldsymbol{\pi}_p$ of the pre-trained policy).
  When an action needs to be sampled, we instead output a linear combination of actions sampled from the fine-tuned policy $\boldsymbol{\pi}$ and the prior $\boldsymbol{\pi}_p$: for a given state-task pair $(s, c) \in \mathcal{S \times C}$, the action selected is $a = \alpha(c)\hat a + (1-\alpha(c)) \bar a$, where $\hat a \sim \boldsymbol{\pi}(\cdot|s,c)$ and $\bar a \sim \boldsymbol{\pi}_p(\cdot|s,c)$.
  The weight $\alpha(c) \in [0, 1]$ is task-dependent and learnable: ideally, it would be high when the fine-tuned policy is more suitable for the task, and low when the prior is instead preferable.
  In practice, it may be a per-task learnable parameter for finite task spaces, or the output of a parametrized function (e.g., a neural network) in continuous task spaces.
  While this formulation departs from existing ones \citep{bagatella2022sfp}, in which the action is instead sampled from a \textit{mixture} of policy and prior, it enables a simple update rule for $\alpha$, which remains applicable when the policy class does not allow straightforward likelihood evaluation (e.g., Diffusion policies).
  Under the assumption that both $\boldsymbol{\pi}$ and $\boldsymbol{\pi}_p$ are isotropic Gaussians, the BC loss computed with respect to the linear combinations of actions simplifies to a simple mean squared error:
  \begin{equation}
      \mathcal{L}^\alpha_{\text{BC}} = \frac{1}{N} \sum_{i=1}^{N} \sum_{t=0}^{H-1}
      \|a_t^i - (\alpha(c_i) \hat a + (1- \alpha(c_i)) \bar a)\|_2,
  \end{equation}
  where $\hat a \sim \boldsymbol{\pi}(\cdot | s_t^i, c_i), \bar a \sim \boldsymbol{\pi}_p(\cdot | s_t^i, c_i)$ and $\hat \tau_{1:N} = (s_0^i, a_0^i, \dots, s_{H-1}^i, a_{H-1}^i)_{i=1}^N$ is the dataset of $N$ task-conditioned, $H$-length trajectories with task labels $c_{1:N}$.
  As this loss function is differentiable with respect to $\alpha(c)$, the weight can be easily trained through gradient descent. Intuitively, this gradient pushes weights up for tasks in which the fine-tuned policy is more accurate than the pre-trained one.
  As $\boldsymbol{\pi}$ is fine-tuned on the dataset used for estimating $\mathcal{L}_\text{BC}^\alpha$, its performance can be easily overestimated. Thus, we propose to introduce a conservative penalty term, which encourages the weight to be updated only if the fine-tuned policy $\boldsymbol{\pi}$ significantly outperforms the prior $\boldsymbol{\pi}_p$:
  $
      \mathcal{L}^\alpha = \mathcal{L}^\alpha_{\text{BC}} + \beta \frac{1}{N} \sum_{i=1}^N \alpha(c_i).
  $
  We remark that, in our empirical evaluation, the Gaussian assumption is often violated; nevertheless, we found this update rule to be empirically effective across models and tasks.
  To illustrate, we visualize trajectories of mixing weights $\alpha(c)$ during fine-tuning in Figure \ref{fig:alpha}.
  We consider a mismatched setting, in which the pre-training distribution largely demonstrates the first two tasks in Kitchen, and does not cover the remaining tree.
  During early training, AMF-NN mostly samples demonstrations for the last three tasks.
  Thus, $\boldsymbol{\pi}$ quickly improves, and the respective mixing coefficients converge to 1.
  On the other hand, $\boldsymbol{\pi}_p$ outperforms $\boldsymbol{\pi}$ on the first two tasks for the majority of training, $\alpha$ remains low and evaluation performance does not drop as actions are largely sampled from $\boldsymbol{\pi}_p$.
  }

  \begin{figure}
      \centering    
      \begin{minipage}{0.2\linewidth}
      
  \begin{center}
    \adjustbox{width=\linewidth}{\input{plots/alpha/v40_kitchen_single_00_sdoor_alpha.pgf}}
  \end{center}

      \centering
      \end{minipage}%
      \begin{minipage}{0.2\linewidth}
      
  \begin{center}
    \adjustbox{width=\linewidth}{\input{plots/alpha/v40_kitchen_single_00_micro_alpha.pgf}}
  \end{center}

      \centering
      \end{minipage}%
      \begin{minipage}{0.2\linewidth}
      
  \begin{center}
    \adjustbox{width=\linewidth}{\input{plots/alpha/v40_kitchen_single_00_ldoor_alpha.pgf}}
  \end{center}

      \centering
      \end{minipage}%
      \begin{minipage}{0.2\linewidth}
      
  \begin{center}
    \adjustbox{width=\linewidth}{\input{plots/alpha/v40_kitchen_single_00_knob_off_alpha.pgf}}
  \end{center}

      \centering
      \end{minipage}%
      \begin{minipage}{0.2\linewidth}
      
  \begin{center}
    \adjustbox{width=\linewidth}{\input{plots/alpha/v40_kitchen_single_00_knob_on_alpha.pgf}}
  \end{center}

      \centering
      \end{minipage}%
    
     \caption{ \red{Success rates (blue) and mixing coefficients $\alpha(c)$ (grey) over training in Kitchen with skewed pre-training. $\alpha$ grows quickly for novel tasks, and more conservatively for those that were demonstrated during pre-training.}}
  \label{fig:alpha}
  \vspace{-3mm}
  \end{figure}

  \section{\red{Does AMF rebalance demonstration counts?}}
  \label{app:rebalancing}
  
  \red{In discrete task spaces, counting the number of demonstrations for each task is possible.
  In this case, a naive data selection strategy would simply request demonstrations for tasks that have been demonstrated the least in the past.
  If all tasks require a similar amount of demonstrations, this would empirically perform very well.
  In our setting, however, data selection algorithms do not have knowledge of pre-training data.
  For this reason, a count could only be kept with respect to the fine-tuning demonstrations: actively balancing this count would lead to a near-uniform task selection, and recover the performance of uniform sampling in expectation.}
  
    \begin{figure}
      \centering
  
      \begin{minipage}{0.25\linewidth}
      
  \begin{center}
    \adjustbox{width=\linewidth}{\input{plots/v40_kitchen_rebalance_0_0_0.pgf}}
  \end{center}

      \centering
      \end{minipage}%
      \begin{minipage}{0.25\linewidth}
      
  \begin{center}
    \adjustbox{width=\linewidth}{\input{plots/v40_kitchen_rebalance_3_3_3.pgf}}
  \end{center}

      \centering
      \end{minipage}%
      \begin{minipage}{0.25\linewidth}
      
  \begin{center}
    \adjustbox{width=\linewidth}{\input{plots/v40_metaworld_rebalance_8_8.pgf}}
  \end{center}

      \centering
      \end{minipage}%
      \begin{minipage}{0.25\linewidth}
      
  \begin{center}
    \adjustbox{width=\linewidth}{\input{plots/v40_metaworld_rebalance_2_2.pgf}}
  \end{center}

      \centering
      \end{minipage}%
      \vspace{5mm}
      \centering
      \textcolor{ourblue}{\rule[2pt]{10pt}{3pt}} AMF-NN \quad
      \rule[2pt]{10pt}{3pt} Rebalancing \quad 
      \textcolor{ouryellow}{\rule[2pt]{10pt}{3pt}} Uniform, \red{with adaptive prior} \quad
      \textcolor{ourred}{\rule[2pt]{10pt}{3pt}} Uniform
      \caption{\red{Extended results from Figure \ref{fig:neural}, including a privileged ``rebalancing'' baseline.}}
      \label{fig:rebalancing}
  \end{figure}

  \red{Nevertheless, we implement this ``rebalancing'' criterion as a \textit{privileged} baseline, which assumes access to the pre-training task distribution. We evaluate it in the standard settings for AMF-NN from Figure \ref{fig:neural}.
  In Figure \ref{fig:rebalancing}, we observe that AMF-NN is able to match the performance of this privileged baseline, \textbf{despite having no knowledge of the pretraining distribution.}}

  \red{This implies that AMF can infer information on the pre-training phase through estimation of the policy's uncertainty, and is capable of automatically recovering a ``rebalancing'' strategy.
  Moreover, AMF-NN considers the reduction in entropy across several tasks: hence, it can in contrast focus on tasks that are harder to learn or that could, in principle, lead to learning progress on other tasks.
  Further empirical evidence for these behaviors is shown in Appendix \ref{app:single_task}.}

  \section{\red{Single-task performance}}
  
  \label{app:single_task}
  
  \red{This Section presents a detailed look at the data selection strategies induced by AMF-NN. For this purpose, we consider the main experiments in Kitchen and Metaworld outlined in Figure \ref{fig:neural}, and plot single-task success rates, as well as the amount of demonstrations collected over time. For reference, asymptotic single-task success rates on Metaworld and Kitchen converge between 70\% and 100\%, depending on the task.}
  
  \red{
  In the case of skewed pre-training (Fig. \ref{fig:single_task_kitchen_skew} and \ref{fig:single_task_metaworld_skew}), we observe that AMF samples tasks that were not present in the pretraining dataset more often, \textbf{without having access to any direct information on the pre-training distribution}. Moreover, even if multiple tasks have the same frequency in the pre-training distribution, AMF will prefer the ones that induce a larger reduction in posterior uncertainty: for instance, in Metaworld, AMF selects \texttt{Close Faucet} more often than \texttt{Open Faucet}, despite the fact that both were similarly demonstrated during pre-training.}
  \red{
  We remark that these task selection strategies arise naturally from our information-based criterion in Equation \ref{eq:criterion}, without any direct information on the pre-training distribution, nor any explicit policy evaluation.
  }
  \newpage
    \begin{figure}
      \begin{minipage}{1.0\linewidth}
      \centering    
      \begin{minipage}{0.2\linewidth}
      \includegraphics[width=\linewidth]{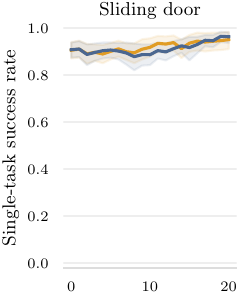}
      \centering
      \end{minipage}%
      \begin{minipage}{0.2\linewidth}
      \includegraphics[width=\linewidth]{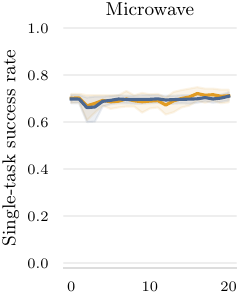}
      \centering
      \end{minipage}%
      \begin{minipage}{0.2\linewidth}
      \includegraphics[width=\linewidth]{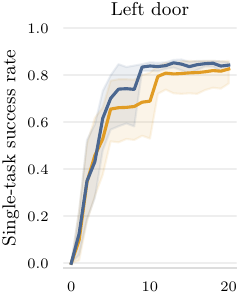}
      \centering
      \end{minipage}%
      \begin{minipage}{0.2\linewidth}
      \includegraphics[width=\linewidth]{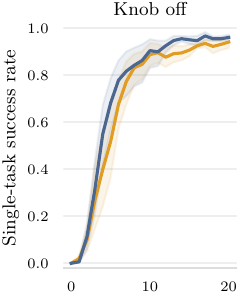}
      \centering
      \end{minipage}%
      \begin{minipage}{0.2\linewidth}
      \includegraphics[width=\linewidth]{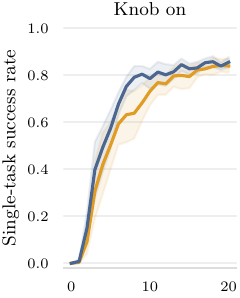}
      \centering
      \end{minipage}%
  
      \begin{minipage}{0.2\linewidth}
      \includegraphics[width=\linewidth]{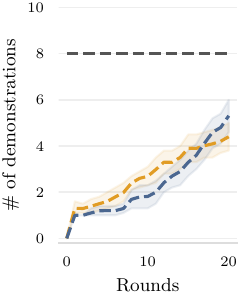}
      \centering
      \end{minipage}%
      \begin{minipage}{0.2\linewidth}
      \includegraphics[width=\linewidth]{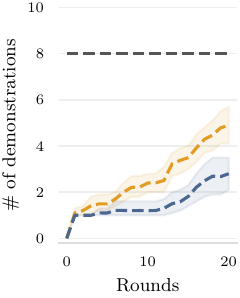}
      \centering
      \end{minipage}%
      \begin{minipage}{0.2\linewidth}
      \includegraphics[width=\linewidth]{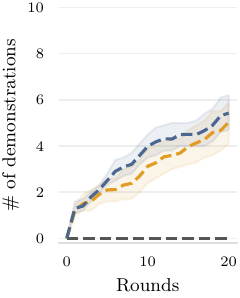}
      \centering
      \end{minipage}%
      \begin{minipage}{0.2\linewidth}
      \includegraphics[width=\linewidth]{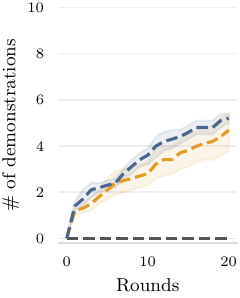}
      \centering
      \end{minipage}%
      \begin{minipage}{0.2\linewidth}
      \includegraphics[width=\linewidth]{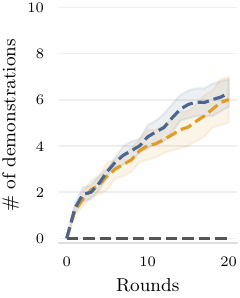}
      \centering
      \end{minipage}%
  
      \vspace{5mm}
      \centering
      \textcolor{ourblue}{\rule[2pt]{10pt}{3pt}} AMF-NN ($\sigma = 1e^{-3}$)\quad
      \textcolor{ouryellow}{\rule[2pt]{10pt}{3pt}} Uniform, with adaptive prior
      \quad
      \textcolor{ourgrey}{\rule[2pt]{10pt}{3pt}} \# of pre-training demonstrations
     \caption{ \red{Single-task curves for skewed pre-training in Kitchen. Dashed lines represent demonstrations counts, with grey lines displaying the (inaccessible) count of pre-training demonstrations.}}
  \label{fig:single_task_kitchen_skew}
  \end{minipage}

  \vspace{1cm}
  
  \begin{minipage}{1.0\linewidth}
      \centering
      \begin{minipage}{0.2\linewidth}
      \includegraphics[width=\linewidth]{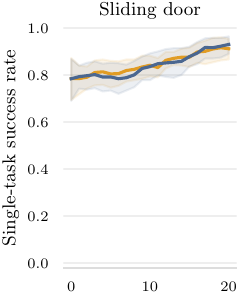}
      \centering
      \end{minipage}%
      \begin{minipage}{0.2\linewidth}
      \includegraphics[width=\linewidth]{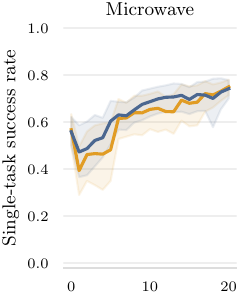}
      \centering
      \end{minipage}%
      \begin{minipage}{0.2\linewidth}
      \includegraphics[width=\linewidth]{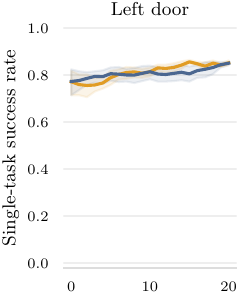}
      \centering
      \end{minipage}%
      \begin{minipage}{0.2\linewidth}
      \includegraphics[width=\linewidth]{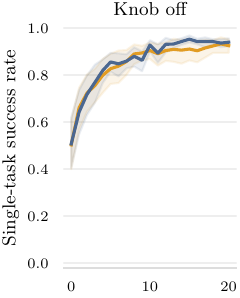}
      \centering
      \end{minipage}%
      \begin{minipage}{0.2\linewidth}
      \includegraphics[width=\linewidth]{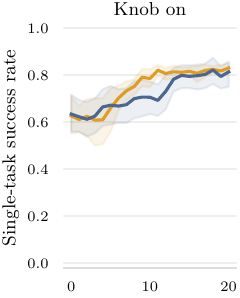}
      \centering
      \end{minipage}%
  
      \begin{minipage}{0.2\linewidth}
      \includegraphics[width=\linewidth]{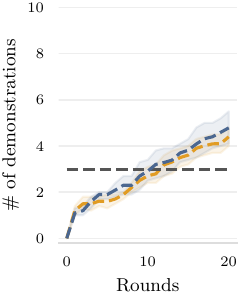}
      \centering
      \end{minipage}%
      \begin{minipage}{0.2\linewidth}
      \includegraphics[width=\linewidth]{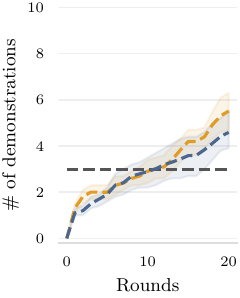}
      \centering
      \end{minipage}%
      \begin{minipage}{0.2\linewidth}
      \includegraphics[width=\linewidth]{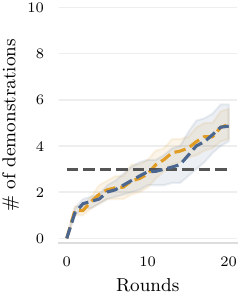}
      \centering
      \end{minipage}%
      \begin{minipage}{0.2\linewidth}
      \includegraphics[width=\linewidth]{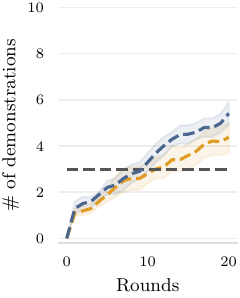}
      \centering
      \end{minipage}%
      \begin{minipage}{0.2\linewidth}
      \includegraphics[width=\linewidth]{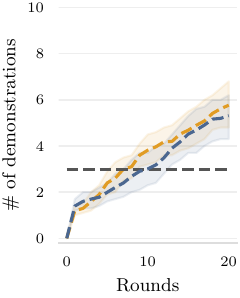}
      \centering
      \end{minipage}%
  
      \vspace{5mm}
      \centering
      \textcolor{ourblue}{\rule[2pt]{10pt}{3pt}} AMF-NN ($\sigma = 1e^{-3}$)\quad
      \textcolor{ouryellow}{\rule[2pt]{10pt}{3pt}} Uniform, with adaptive prior
      \quad
      \textcolor{ourgrey}{\rule[2pt]{10pt}{3pt}} \# of pre-training demonstrations
      \caption{\red{Single-task curves for uniform pre-training in Kitchen. Dashed lines represent demonstrations counts, with grey lines displaying the (inaccessible) count of pre-training demonstrations.}}
  \label{fig:single_task_kitchen_uniform}
  \end{minipage}
  \end{figure}

  \begin{figure}[H]
  \begin{minipage}{1.0\linewidth}
      \centering
  
      \begin{minipage}{0.2\linewidth}
      \includegraphics[width=\linewidth]{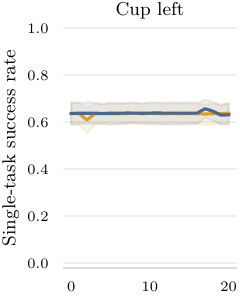}
      \centering
      \end{minipage}%
      \begin{minipage}{0.2\linewidth}
      \includegraphics[width=\linewidth]{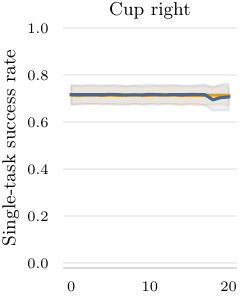}
      \centering
      \end{minipage}%
      \begin{minipage}{0.2\linewidth}
      \includegraphics[width=\linewidth]{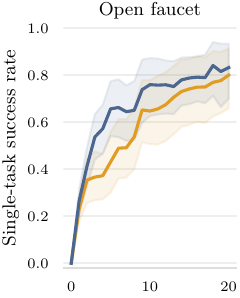}
      \centering
      \end{minipage}%
      \begin{minipage}{0.2\linewidth}
      \includegraphics[width=\linewidth]{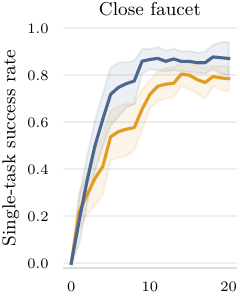}
      \centering
      \end{minipage}%
        \\
      \begin{minipage}{0.2\linewidth}
      \includegraphics[width=\linewidth]{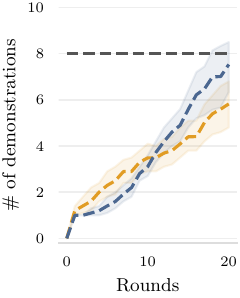}
      \centering
      \end{minipage}%
      \begin{minipage}{0.2\linewidth}
      \includegraphics[width=\linewidth]{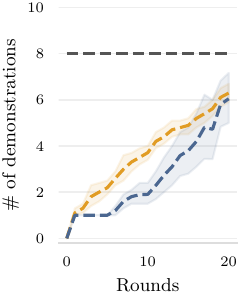}
      \centering
      \end{minipage}%
      \begin{minipage}{0.2\linewidth}
      \includegraphics[width=\linewidth]{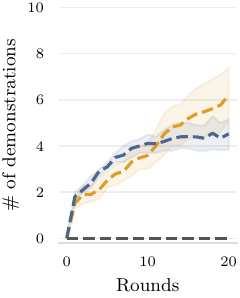}
      \centering
      \end{minipage}%
      \begin{minipage}{0.2\linewidth}
      \includegraphics[width=\linewidth]{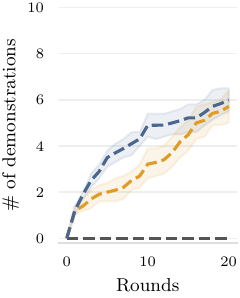}
      \centering
      \end{minipage}%
  
      \vspace{5mm}
      \centering
      \textcolor{ourblue}{\rule[2pt]{10pt}{3pt}} AMF-NN ($\sigma = 1e^{-3}$)\quad
      \textcolor{ouryellow}{\rule[2pt]{10pt}{3pt}} Uniform, with adaptive prior
      \quad
      \textcolor{ourgrey}{\rule[2pt]{10pt}{3pt}} \# of pre-training demonstrations
      \caption{\red{Single-task curves for skewed pre-training in Metaworld. Dashed lines represent demonstrations counts, with grey lines displaying the (inaccessible) count of pre-training demonstrations.}}
  \label{fig:single_task_metaworld_skew}
  \end{minipage}

  \vspace{1cm}
  
  \begin{minipage}{1.0\linewidth}
      \centering
  
      \begin{minipage}{0.2\linewidth}
      \includegraphics[width=\linewidth]{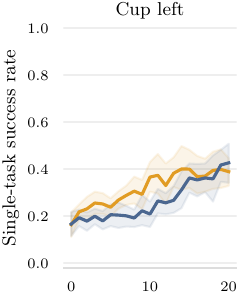}
      \centering
      \end{minipage}%
      \begin{minipage}{0.2\linewidth}
      \includegraphics[width=\linewidth]{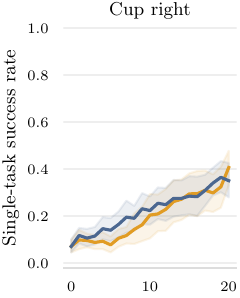}
      \centering
      \end{minipage}%
      \begin{minipage}{0.2\linewidth}
      \includegraphics[width=\linewidth]{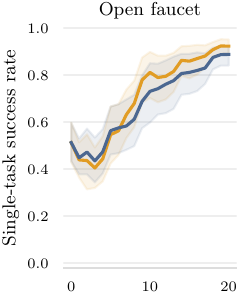}
      \centering
      \end{minipage}%
      \begin{minipage}{0.2\linewidth}
      \includegraphics[width=\linewidth]{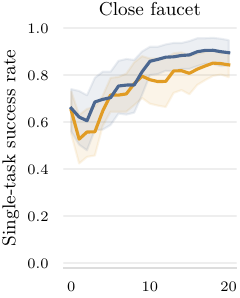}
      \centering
      \end{minipage}%
        \\
      \begin{minipage}{0.2\linewidth}
      \includegraphics[width=\linewidth]{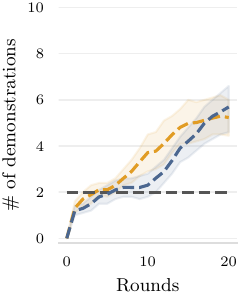}
      \centering
      \end{minipage}%
      \begin{minipage}{0.2\linewidth}
      \includegraphics[width=\linewidth]{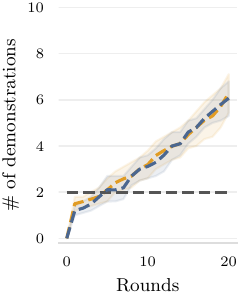}
      \centering
      \end{minipage}%
      \begin{minipage}{0.2\linewidth}
      \includegraphics[width=\linewidth]{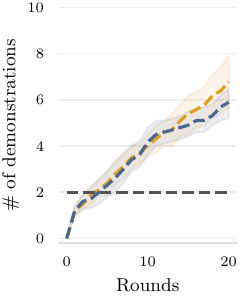}
      \centering
      \end{minipage}%
      \begin{minipage}{0.2\linewidth}
      \includegraphics[width=\linewidth]{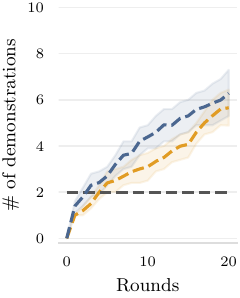}
      \centering
      \end{minipage}%
  
      \vspace{5mm}
      \centering
      \textcolor{ourblue}{\rule[2pt]{10pt}{3pt}} AMF-NN ($\sigma = 1e^{-3}$)\quad
      \textcolor{ouryellow}{\rule[2pt]{10pt}{3pt}} Uniform, with adaptive prior
      \quad
      \textcolor{ourgrey}{\rule[2pt]{10pt}{3pt}} \# of pre-training demonstrations
      \caption{\red{Single-task curves for uniform pre-training in Metaworld. Dashed lines represent demonstrations counts, with grey lines displaying the (inaccessible) count of pre-training demonstrations.}}
      \label{fig:single_task_metaworld_uniform}
  \end{minipage}
  \end{figure}
  
  \section{\red{Analysis of Importance Weights}}
  \label{app:is}
  
  \red{Importance weights (as introduced in Equation \ref{eq:importance_weights}) allow estimating the expert's occupancy for arbitrary tasks.
  Naturally, the quality of importance weights depends on many factors, including the dimensionality of the trajectory space, and the density with which available data covers it.
  In this section, we report a qualitative evaluation of importance weights for both AMF-GP and AMF-NN (Figures \ref{fig:is_gp} and \ref{fig:is_nn}, respectively). In both cases, we find that informative weights can be retrieved eventually, given the proper amount of clipping (as described in Appendix \ref{app:details}).
  While in early rounds of the algorithm, weights can be inaccurate, leading to a poor estimate of the objective, we observe that the quality of importance sampling weights improves within a handful of rounds.}
  
  \begin{figure}[h]
      \centering
  
      \begin{minipage}{0.3\linewidth}
      \includegraphics[width=\linewidth]{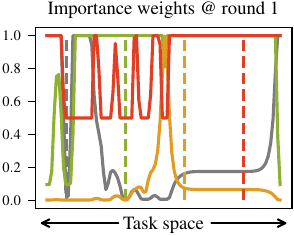}
      \centering
      \end{minipage}%
      \hspace{0.03\linewidth}
      \begin{minipage}{0.3\linewidth}
      \includegraphics[width=\linewidth]{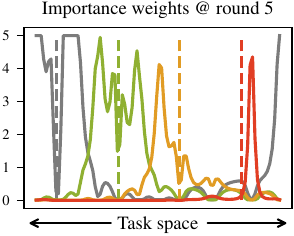}
      \centering
      \end{minipage}%
      \hspace{0.03\linewidth}
      \begin{minipage}{0.3\linewidth}
      \includegraphics[width=\linewidth]{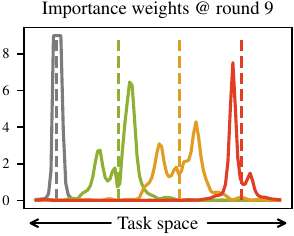}
      \centering
      \end{minipage}%
  
      \caption{\red{Analysis of importance sampling weights for AMF-GP. We consider the skewed pre-training setting from Figure \ref{fig:gp}, and compute importance weights after 1, 5 and 9 rounds. We sample four tasks $c_{0:3}$, represented by vertical dashed lines of different colors. For each task $c_i$, we collect a demonstration $\tau_i$ and sweep over $c' \in \mathcal{C}$ on the x-axis; we plot $w(\tau_i, c')$ with solid lines. We observe that importance weights are uninformative in early parts of training, but converge to more accurate values within a few rounds.}}
      \label{fig:is_gp}
  \end{figure}
  
  \begin{figure}[H]
      \centering
  
      \begin{minipage}{0.3\linewidth}
      \includegraphics[width=\linewidth]{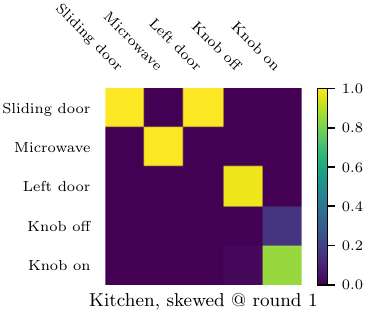}
      \centering
      \end{minipage}%
      \hspace{0.03\linewidth}
      \begin{minipage}{0.3\linewidth}
      \includegraphics[width=\linewidth]{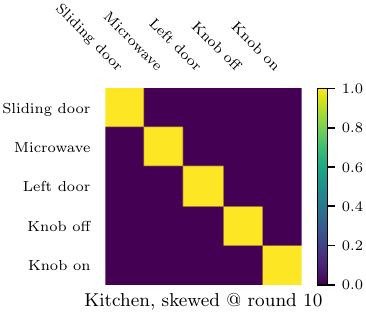}
      \centering
      \end{minipage}%
      \hspace{0.03\linewidth}
      \begin{minipage}{0.3\linewidth}
      \includegraphics[width=\linewidth]{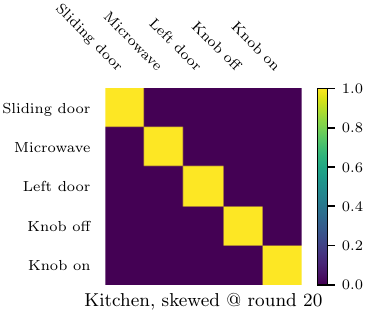}
      \centering
      \end{minipage}%

      \begin{minipage}{0.3\linewidth}
      \includegraphics[width=\linewidth]{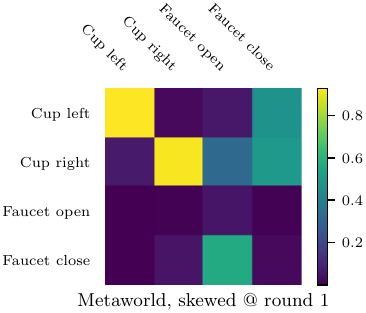}
      \centering
      \end{minipage}%
      \hspace{0.03\linewidth}
      \begin{minipage}{0.3\linewidth}
      \includegraphics[width=\linewidth]{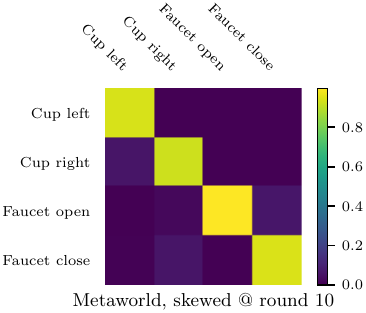}
      \centering
      \end{minipage}%
      \hspace{0.03\linewidth}
      \begin{minipage}{0.3\linewidth}
      \includegraphics[width=\linewidth]{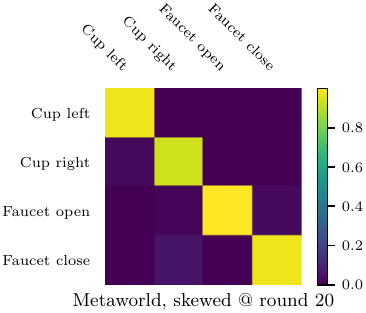}
      \centering
      \end{minipage}%
      
      \caption{\red{Analysis of importance sampling weights for AMF-NN. We consider the skewed pre-training setting from Figure \ref{fig:neural}, and compute importance weights after 1, 10 and 19 rounds. We visualize weights for both Kitchen (top) and Metaworld (bottom). As the task set is discrete, we consider all tasks $(c_i \in \mathcal{C})$, and collect one demonstration $\tau_i$ for each. The entry of each colormap at row $i$ and column $j$ represents $w(\tau_j, c_i)$. Again, we observe that at the beginning of training importance weights can be inaccurate, particularly for tasks $c_i$ that have not been sufficiently demonstrated. However, as more data is collected and the policy specializes to each task, the weights converge.}}
      \label{fig:is_nn}
  \end{figure}
  
  \section{Criterion vs returns}
  
  \begin{wrapfigure}[27]{r}{0.5\textwidth}
  \vspace{-5mm}
  
  \begin{center}
    \adjustbox{width=\linewidth}{\input{plots/criterion.pgf}}
  \end{center}

  \caption{Didactic example on correlation between pre-training distribution over tasks (top), evaluations of the AMF criterion for each task (middle) and return after fine-tuning on a demonstration for a given task (bottom).}
  \label{fig:criterion}
  \end{wrapfigure}
  
  As a didactic example, we evaluate the criterion optimized by AMF-GP in a particular instance.
  We adopt the settings presented in Section \ref{sec:exp_gp}, and pre-train a GP policy by providing 50 demonstrations in the 2D integrator environment, uniformly sampled among tasks in the top half of the target circle.
  We represent the task space along one dimension, and plot  the smoothed pre-training distribution on the top of Figure \ref{fig:criterion}.
  The second row of the Figure displays the evaluation of the criterion in Equation \ref{eq:criterion_rephrased} for 100 tasks uniformly sampled across the entire task space.
  By comparison with the plot above, it is evident that the criterion is significantly lower for tasks that have not yet been demonstrated.
  These tasks are also those that, if demonstrated, would lead to a greater increase in multi-task performance after fine-tuning, as reported in the bottom row of Figure \ref{fig:criterion}.
  In this instance, it's easy to see that the criterion leads to selection of tasks which have not been demonstrated sufficiently, and that will thus lead to greater policy performance.
  
  \section{Implementation Details}
  \label{app:details}
  
  In order to ease reproducibility, we open-source our codebase on the project's repo.\!\footnote{\href{https://github.com/marbaga/amf}{github.com/marbaga/amf}}
  Furthermore, we describe several implementation details in the following sections.
  
  \subsection{Metrics}
  All metrics are reported in the form of their mean and the 90\% simple bootstrap confidence intervals over $10$ random seeds.
  
  \subsection{GP settings}
  
  In GP settings (\ref{sec:exp_gp}), each expert demonstration involves $5$ steps, is corrupted with Gaussian noise and collected by a scripted policy.
  As the task space is continuous, the criterion is simply optimized via uniform random shooting, with a budget of $100$.
  Multi-task returns are averaged over $20$ episodes per task.
  
  \subsection{Neural network settings}
  
  \subsubsection{Environments}
  
  \red{We evaluate AMF-NN across four environment suites, namely FrankaKitchen, Metaworld, Robomimic and WidowX.
  For the first two, demonstrations are $\approx 50$ steps, while while for the latter two they involve up to $700$ steps, and $100$ steps respectively.
  In FrankaKitchen, demonstrations are provided by \citet{kumar2024robohive}, and collected by trained SAC agents.
  In Metaworld, demostrations are instead collected by the scripted policies provided \citep{yu2020meta}.
  In Robomimic, demonstrations are provided by proeficient human demonstrators \citep{mandlekar2021matters}.
  Finally, in WidowX successful trajectories are collected by Octo-small \citep{octo_2023} itself and filtered according to success labels, in an instance of self-supervised distillation.
  Furthermore, in the case of WidowX, the initial position of the object is not randomized, as we found this to result in very inconsistent performance for the data collection policy.
  In the first two suites, $50$ attempts for each task are evaluated, while evaluation on Robomimic and WidowX involves $25$ attempts.}
  
  \subsubsection{Behavior cloning}
  
  \red{The MLP policy has with $2$ layers and $256$ units per layer, with layer normalization \citep{ba2016layer}.
  The Diffusion policy shares architecture and hyperparameters with the original implementation \citep{chi2023diffusionpolicy}. Task conditioning are word embedding extracted by a sentence transformer (all-MiniLM-L6-v2) \citep{all-MiniLM-L6-v2}.
  Policies are pre-trained for $200$ epochs with batch size of $256$, learning rate of $10^{-4}$ using the AdamW optimizer\citep{loshchilov2018decoupled}.}
  
  \subsubsection{Importance sampling weights}
  In GP setting, importance weights are computed from the Gaussian policy distribution, and log-probabilities are clipped to the range $[-12, 0]$.
  In NN settings, for deterministic policies, we interpret the policy's output as the mean of a Gaussian with fixed standard deviation $\sigma=1.0$, and only clip log-probabilities for numerical stability.
  In experiments involving pre-trained Octo policies, we evaluated two solutions.
  One option consisted of fitting a Gaussian distribution through maximum likelihood methods to samples from the diffusion policy, and was found to underperform.
  We thus treat the Octo policy as strictly deterministic: with continuous action spaces, this simplifies importance sampling weights to $w(c,\tau) = 1$ in case $\tau$ is a demonstration provided exactly for task $c$, and $0$ otherwise.
  We note that this solution cannot be used to evaluate the criterion on yet unobserved tasks, but remains feasible when tasks are finite and few.
  
  \subsubsection{AMF}
  
  Each fine-tuning round involves $3000$ gradient steps, each with a batch size of $256$.
  We warm-start each algorithm by collecting the first $|\mathcal{C}|$ demonstrations uniformly, as mentioned in Section \ref{sec:practical}.
  In the case of loss-gradient embeddings, we found it to be beneficial to use a separate copy of the policy for task selection, which is not trained on these initial trajectories (which thus can be seen as a small ``validation'' set).

  \subsubsection{Adaptive Prior}
  \red{We found relatively high learning rates for the mixing weight $\alpha$ to work well: all experiments use a learning rate of $1.0$.
  The conservative coefficient $\beta$ was tuned for each suite, and is set to $0.01$ for all but Kitchen, in which it takes the value of $0.001$.
  }

  \subsection{Runtime}
  \red{
  Each experimental run for AMF-NN takes at most 5 hours with GPU acceleration. In this case, data selection itself requires less than 1 second per round, and can be significantly sped up by reducing the sampling budget. AMF-GP experiments can be reproduced within 10 minutes on CPU.
  }
  
  \section{Useful inequalities}
  
  \begin{lemma}
  If $a,b \in (0,M]$ for some $M > 0$ and $b \geq a$ then
  \[
      b - a \leq M \cdot \log\left(\frac{b}{a}\right).
  \]
  If additionally, $a \geq M'$ for some $M' > 0$ then
  \[
      b - a \geq M' \cdot \log\left(\frac{b}{a}\right).
  \]
  \label{lemma:mean_value}
  \end{lemma}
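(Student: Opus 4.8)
The plan is to prove both inequalities simultaneously by the integral representation $\log(b/a) = \int_a^b \frac{dx}{x}$, which makes the two hypotheses "$b \le M$" and "$a \ge M'$" act on the two sides of a single two-sided estimate. First I would dispose of the degenerate case $a = b$, where both inequalities reduce to $0 \le 0$ since $\log 1 = 0$; so assume $0 < a < b \le M$. For every $x \in [a,b]$ we have $\frac1b \le \frac1x \le \frac1a$, and integrating this chain over the interval $[a,b]$ (of length $b-a$) gives
\[
\frac{b-a}{b} \;\le\; \log\!\Big(\frac{b}{a}\Big) \;\le\; \frac{b-a}{a}.
\]

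From the left-hand inequality, multiplying through by $b>0$ yields $b-a \le b\,\log(b/a)$; since $b \le M$ and $\log(b/a) \ge 0$, this gives $b-a \le M\,\log(b/a)$, the first claim. From the right-hand inequality, multiplying through by $a>0$ yields $a\,\log(b/a) \le b-a$; under the extra hypothesis $a \ge M'$ and again using $\log(b/a) \ge 0$, we get $M'\,\log(b/a) \le a\,\log(b/a) \le b-a$, the second claim. An equivalent route is to apply the mean value theorem to $f(x)=\log x$ on $[a,b]$, writing $\log b - \log a = (b-a)/\xi$ for some $\xi \in (a,b)$, and then bounding $\xi \le b \le M$ for the first inequality and $\xi \ge a \ge M'$ for the second.

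There is no genuine obstacle here: the lemma is an elementary calculus estimate. The only points that need care are (i) separating out the equality case $a=b$ so that division and the sign of $\log(b/a)$ are unproblematic, (ii) keeping the direction of each inequality straight when multiplying by the positive quantities $a$ and $b$, and (iii) observing that the weaker bound $b \le M$ is all that is needed for the upper estimate, whereas the lower estimate genuinely requires the separate lower bound $a \ge M'$ (without it, $b-a$ can be arbitrarily small relative to $\log(b/a)$ as $a \to 0$).
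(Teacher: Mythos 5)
Your proof is correct and essentially the same as the paper's: the paper applies the mean value theorem to $\log x$ on $[a,b]$ to get $b-a = c\log(b/a)$ with $c\in(a,b)$ and then bounds $c$ by $M$ from above and $M'$ from below, which is exactly the two-sided estimate you derive from $\int_a^b dx/x$ (and which you yourself note as the equivalent MVT route). The only substantive difference is that you explicitly handle the degenerate case $a=b$, which the paper glosses over.
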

  
  \begin{proof}
  Let $f(x) \overset{\text{def}}{=} \log x$. By the mean value theorem, there exists $c \in (a, b)$ such that
  \[
      \frac{1}{c} = f'(c) = \frac{f(b) - f(a)}{b - a} = \frac{\log b - \log a}{b - a} = \frac{\log\left(\frac{b}{a}\right)}{b - a}.
  \]
  Thus,
  \[
      b - a = c \cdot \log\left(\frac{b}{a}\right) < M \cdot \log\left(\frac{b}{a}\right).
  \]
  
  Under the additional condition that $a \geq M'$, we obtain
  \[
      b - a = c \cdot \log\left(\frac{b}{a}\right) > M' \cdot \log\left(\frac{b}{a}\right).
  \]
  
  \end{proof}
  
  \begin{lemma}
  Let us consider two spaces $X \in \mathbb{R}^n$, $Y \in \mathbb{R}^m$, and a conditional distribution $p: X \to \Delta(Y)$ whose support $\text{supp}(p(\cdot|x))$ is bounded by a ball of radius $\epsilon$ for all $x \in X$, that is
  \[
        \max_{y_l, y_h \in \text{\normalfont supp}p(\cdot|x)} \| y_h - y_l \|_2 \leq \epsilon.
  \]
  For all $(x, x') \subseteq X$, $y \sim p(\cdot|x)$ and $y' \sim p(\cdot|x')$ it holds that
  \[
  \|y - y'\|_2 \leq \mathcal{W}(p(\cdot|x), p(\cdot|x')) + 2 \epsilon ,
  \]
  where $K$ denotes the Wasserstein $1$-distance.
  \label{lemma:bounded_k}
  \end{lemma}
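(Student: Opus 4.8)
The plan is to read $\mathcal{W}$ as an infimum over couplings and combine a near-optimal coupling with the triangle inequality and the two diameter bounds. Fix $x, x' \in X$ and write $\mu = p(\cdot \mid x)$ and $\nu = p(\cdot \mid x')$; both are probability measures on $\mathbb{R}^m$ whose (closed) supports have diameter at most $\epsilon$, so in particular $\mathcal{W}(\mu,\nu)$ is finite. Fix arbitrary $y \in \mathrm{supp}(\mu)$ and $y' \in \mathrm{supp}(\nu)$, which covers almost every realization of $y \sim p(\cdot\mid x)$ and $y' \sim p(\cdot\mid x')$. The goal is to show $\|y - y'\|_2 \le \mathcal{W}(\mu,\nu) + 2\epsilon$.

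Given $\eta > 0$, pick a coupling $\gamma$ of $\mu$ and $\nu$ with $\int \|y_1 - y_2\|_2 \, d\gamma(y_1,y_2) \le \mathcal{W}(\mu,\nu) + \eta$, which exists by definition of the Wasserstein-$1$ distance. Since the marginals of $\gamma$ are $\mu$ and $\nu$, one has $\mathrm{supp}(\gamma) \subseteq \mathrm{supp}(\mu) \times \mathrm{supp}(\nu)$. Moreover, the continuous nonnegative map $(y_1,y_2) \mapsto \|y_1 - y_2\|_2$ has $\gamma$-mean at most $\mathcal{W}(\mu,\nu) + \eta$, so the closed set $A := \{(y_1,y_2) : \|y_1-y_2\|_2 \le \mathcal{W}(\mu,\nu)+\eta\}$ has positive $\gamma$-measure and hence intersects $\mathrm{supp}(\gamma)$. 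Choose $(\tilde y_1, \tilde y_2) \in A \cap \mathrm{supp}(\gamma)$; then $\tilde y_1 \in \mathrm{supp}(\mu)$, $\tilde y_2 \in \mathrm{supp}(\nu)$, and $\|\tilde y_1 - \tilde y_2\|_2 \le \mathcal{W}(\mu,\nu)+\eta$.

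Finally, the triangle inequality together with the diameter hypotheses on $\mathrm{supp}(\mu)$ and $\mathrm{supp}(\nu)$ gives $\|y-y'\|_2 \le \|y - \tilde y_1\|_2 + \|\tilde y_1 - \tilde y_2\|_2 + \|\tilde y_2 - y'\|_2 \le \epsilon + \mathcal{W}(\mu,\nu) + \eta + \epsilon$, and letting $\eta \downarrow 0$ yields the claim. The only step needing more than routine care is the measure-theoretic observation that a near-optimal coupling is supported inside the product of the two supports and must contain a point of its support whose transport cost does not exceed the $\gamma$-average; the remainder is just the triangle inequality and is immediate. (One could alternatively invoke existence of an exact optimal coupling via compactness of the supports, but the $\eta$-argument avoids that machinery.)
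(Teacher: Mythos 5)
Your proof is correct and follows essentially the same route as the paper's: exhibit a pair of points, one in each support, at distance at most $\mathcal{W}(p(\cdot|x),p(\cdot|x'))$ (up to $\eta$), then apply the triangle inequality with the two diameter bounds to pick up the $2\epsilon$. The only difference is that you spell out rigorously, via a near-optimal coupling and the inclusion of its support in the product of the marginal supports, the step the paper dispatches in one line (that the minimum distance between the supports is at most the Wasserstein distance).
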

  \begin{proof}
  \begin{align}
      \|y - y'\| &\leq \max_{\substack{y \in \text{supp}(p(\cdot|x)) \\ y' \in \text{supp}(p(\cdot|x'))}} \|y - y'\|_2 \\
      & \stackrel{(i)}{\leq} \min_{\substack{y \in \text{supp}(p(\cdot|x)) \\ y' \in \text{supp}(p(\cdot|x'))}} \|y - y'\|_2 + 2 \epsilon \\
      & \stackrel{(ii)}{\leq} \mathcal{W}(p(\cdot|x), p(\cdot|x')) + 2\epsilon,
  \end{align}
  where (i) follows from the triangle inequality, and (2) is due to the fact that the integral of the distance between two points in $Y$ for any coupling is greater than the minimum distance.
  \end{proof}
  
  \end{document}